\newcommand{\genComment}[2]{\ifnum\comments=1{\textcolor{#1}{\textsf{\footnotesize #2}}}\fi}
\title{A Natural Extension To Online Algorithms For Hybrid RL With Limited Coverage
}
\newcommand{\DE}{\operatorname{DE}}
\newtheorem{thm}{Theorem}%[section]
\newtheorem{lem}{Lemma}
\newtheorem{prop}{Proposition}
\newtheorem{aspt}{Assumption}
\newtheorem{defn}{Definition}
\newcommand{\Reg}{\operatorname{Reg}}
\def\eqref#1{equation~\ref{#1}}
\def\1{\bm{1}}
\def\vtheta{{\bm{\theta}}}
\def\vx{{\bm{x}}}
\def\vz{{\bm{z}}}
\def\mI{{\bm{I}}}
\def\mV{{\bm{V}}}
\DeclareMathAlphabet{\mathsfit}{\encodingdefault}{\sfdefault}{m}{sl}
\SetMathAlphabet{\mathsfit}{bold}{\encodingdefault}{\sfdefault}{bx}{n}
\def\gA{{\mathcal{A}}}
\def\gD{{\mathcal{D}}}
\def\gE{{\mathcal{E}}}
\def\gF{{\mathcal{F}}}
\def\gG{{\mathcal{G}}}
\def\gH{{\mathcal{H}}}
\def\gL{{\mathcal{L}}}
\def\gN{{\mathcal{N}}}
\def\gO{{\mathcal{O}}}
\def\gP{{\mathcal{P}}}
\def\gS{{\mathcal{S}}}
\def\gT{{\mathcal{T}}}
\def\gU{{\mathcal{U}}}
\def\gX{{\mathcal{X}}}
\def\sD{{\mathbb{D}}}
\def\sN{{\mathbb{N}}}
\def\sR{{\mathbb{R}}}
\newcommand{\E}{\mathbb{E}}
\newcommand{\R}{\mathbb{R}}
\newcommand{\dimde}{\text{dim}_{\text{DE}}}
\DeclareMathOperator*{\argmin}{arg\,min}
\newcommand{\off}{\operatorname{off}}
\newcommand{\on}{\operatorname{on}}
\begin{document}

\author[1]{Kevin Tan
\thanks{Equal contribution.}}
\author[2]{Ziping Xu $^*$}
\affil[1]{Department of Statistics and Data Science,
The Wharton School, University of Pennsylvania}
\affil[2]{Department of Statistics,
Harvard University}

\maketitle

\begin{abstract}
Hybrid Reinforcement Learning (RL), leveraging both online and offline data, has garnered recent interest, yet research on its provable benefits remains sparse. Additionally, many existing hybrid RL algorithms \citep{song2023hybrid, nakamoto2023calql, amortila2024harnessing} impose coverage assumptions on the offline dataset, but we show that this is unnecessary. A well-designed online algorithm should ``fill in the gaps'' in the offline dataset, exploring states and actions that the behavior policy did not explore.
Unlike previous approaches that focus on estimating the offline data distribution to guide online exploration \citep{li2023reward}, we show that a natural extension to standard optimistic online algorithms -- warm-starting them by including the offline dataset in the experience replay buffer -- achieves similar provable gains from hybrid data even when the offline dataset does not have single-policy concentrability. We accomplish this by partitioning the state-action space into two, bounding the regret on each partition through an offline and an online complexity measure, and showing that the regret of this hybrid RL algorithm can be characterized by the best partition -- despite the algorithm not knowing the partition itself. As an example, we propose DISC-GOLF, a modification of an existing optimistic online algorithm with general function approximation called GOLF used in \cite{jin2021bellmaneluder, xie2022role}, and show that it demonstrates provable gains over both online-only and offline-only reinforcement learning, with competitive bounds when specialized to the tabular, linear and block MDP cases. Numerical simulations further validate our theory that hybrid data facilitates more efficient exploration, supporting the potential of hybrid RL in various scenarios.
\end{abstract}

%To address this gap, we propose a novel framework that partitions the state and action space into the online and offline segments: $\gX_{\off} \bigoplus \gX_{\on} = \gX = \gS \times \gA$. Intuitively, when offline data sufficiently covers $\gX_{\off}$, online algorithms may focus on exploring to reduce the remaining uncertainty on $\gX_{\on}$, thereby reducing the online learning complexity.

%%%%%%%%%%%%%%%%%%%%%%%%%%%%%%%%%%%%%%%%%%%%%%%%%%%%%%%%%%%%%%%%
%% Section: Submission of papers to RLC
%%%%%%%%%%%%%%%%%%%%%%%%%%%%%%%%%%%%%%%%%%%%%%%%%%%%%%%%%%%%%%%%
\section{Introduction}
\label{sec:intro}

% \ziping{Rewrite the introduction. The story should be as following: 
% \begin{enumerate}
%     \item First short introduce hybrid RL
%     \item "Many online algorithm takes input of a dataset and output a current belief of the unknown environment. It is natural to extend such pure online algorithm to a hybrid RL setting by including the offline dataset in their belief update. Point it out that this is also common in practice. However, such practice has never been theoretically justified."
% \end{enumerate}
% }

Reinforcement Learning (RL) encompasses two main approaches: online and offline. Online RL involves agents learning to maximize rewards through real-time interactions with their environment, essentially learning by doing. Conversely, offline RL involves agents learning optimal actions by analyzing data collected by others, akin to learning by observation. 
However, learning by both watching and doing, or learning from both offline pre-collected data and online exploration, often called hybrid RL, remains underexplored. Despite recent scholarly attention, \citep{song2023hybrid,nakamoto2023calql,wagenmaker2023leveraging,xie2022policy,li2023reward,amortila2024harnessing}, only \cite{wagenmaker2023leveraging} and \cite{li2023reward} consider the case where the \textbf{offline dataset may not have single-policy concentrability}.\footnote{An offline complexity measure that measures the coverability of the offline dataset \citep{zhan2022offline} with respect to the state-and-action pairs covered by a single reference policy.}

\cite{li2023reward} suggest dividing the state and action space $\gX$ within a tabular MDP into a disjoint partition $\gX_{\off} \oplus \gX_{\on} = \gX$. The intuition is as follows. If the offline dataset has sufficient coverage of the state and action pairs in $\gX_{\off}$, a good algorithm should direct its online exploration to sufficiently explore $\gX_{\on}$. Previous approaches \citep{li2023reward,wagenmaker2023leveraging} solve difficult optimization problems with the Frank-Wolfe algorithm to perform reward-free online exploration of the under-covered portion of the state and action space. These approaches are not generally applicable to existing state-of-the-art online algorithms for deep RL, and so we take a different approach.

Many online algorithms explore by maintaining an experience replay buffer, minimizing the empirical risk over it to sequentially update estimates about the unknown environment \citep{auer2008near}. One may trivially include the offline dataset in the experience buffer to obtain a hybrid RL algorithm, as others have previously noted \citep{song2023hybrid, nakamoto2023calql, amortila2024harnessing}, under coverage assumptions on the offline dataset.\footnote{Unlike these, we are able to include the entire offline dataset -- we do not need to discard any offline samples.}

Though being extensively applied in empirical studies, {\it it is not clear whether (1) simply appending the offline dataset to the experience replay buffer can lead to a provable improvement when the offline dataset is of poor quality, or (2) whether it ensures sufficient exploration for the portion of the state-action space without good coverage.} We seek to address this gap in our paper, tackling the more difficult setting where the offline data may be of arbitrarily poor quality without single-policy concentrability, in the context of regret-minimizing online RL with general function approximation. To our knowledge, we are the first to do so. 

\vspace{-3mm}
\paragraph{Our Contributions.}
We address this gap by modifying an optimistic algorithm for general function approximation algorithm called GOLF (introduced in \cite{jin2021bellmaneluder} and used in \cite{xie2022policy}). 
We show that a hybrid version of GOLF (which we call DISC-GOLF) that simply includes an offline dataset in the parameter estimation achieves a provable improvement in the regret bound over pure online and offline learning, even when the offline dataset has poor coverage.

This is done through considering {\it arbitrary} (not necessarily disjoint) partitions of the state-action space $\gX_{\off} \cup \gX_{\on} = \gX$. We bound the regret by the coverage of the behavior policy on the offline partition $\gX_{\off}$  and a complexity measure for online learning on the online partition $\gX_{\on}$. We then show that the overall regret of a hybrid algorithm can be characterized by the regret bound on the best possible partition -- despite the algorithm not knowing the partition itself.\footnote{This is similar in spirit to the adaptivity that \cite{li2023reward} showed for the tabular PAC RL case, but with a far more complicated algorithm that requires data splitting, behavior cloning, and reward-free exploration.}  This analysis yields a general recipe for initializing generic online RL algorithms with offline data of arbitrarily poor quality, that we hope may be of use to other researchers seeking to derive similar algorithms. 

We specialize this bound to the tabular, linear, and block MDP cases, achieving competitive sample complexities in each. Numerical simulations demonstrate that hybrid RL indeed encourages exploration of the region of the state-action space that is not well-covered by the offline dataset.

\section{Problem Setup}
We consider the situation where we are given access to a function class $\gF$, and aim to model the optimal Q-function using it. Below, we introduce some notation that we use throughout the paper. 

\vspace{-3mm}
\paragraph{Notation.} Let $\gN_{\gF}(\rho)$ be the $\rho$-covering number of function class $\gF$ w.r.t the supremum norm. Let $N_{\off}$ and $N_{\on}$ (where $N = N_{\off} + N_{\on}$) be the number of episodes in the offline dataset and the number of online episodes respectively. We will use the notation $T=N_{\on}$ interchangeably. For any set $\gX \subset \gS \times \gA \times [H]$, let $\gX_h = \{(s, a) \in \gS \times \gA: (s, a, h) \in \gX\}$, and $\Delta(\gX)$ all distributions over $\gX$.

\vspace{-3mm}

\paragraph{Episodic MDPs.} We consider episodic MDPs denoted by $\{\gS, \gA, H, P, R\}$, where $\gS$ is the state space, $\gA$ the action space, $H$ the horizon, $P = \{P_h\}_{h \in [H]}$ the collection of transition probabilities with each $P_h: \gS \times \gA \mapsto \Delta(\gS)$, and $R = \{R_h\}_{h \in [H]}$ the collection of reward functions with each $R_h: \gS \times \gA \mapsto [0, 1]$. An agent interacts with the environment for $H$ steps within each episode. On the each step $h \in [H]$, the agent observes the current state $s_h \in \gS$ and chooses an action $a_h \in \gA$, and the environment generates the next state $s_{h+1} \sim P_h(\cdot \mid s_h, a_h)$ and the current reward $r_h = R_h(s_h, a_h)$. A policy $\pi$ is a mapping from $\gS$ to $\Delta(\gA)$, the set of distributions over the action space. The function class $\gF$ induces a policy class $\Pi := \{\pi^f : f\in \gF\}$ through the greedy policy with regard to each function $\pi^f$. Throughout the paper, we denote $\gX = \gS \times \gA \times [H]$.

\begin{defn}[Occupancy Measure]
\label{defn:occ}
    The occupancy measure $d^{\pi} = \{d^{\pi}_h\}_{h = 1}^H$ is the collection of state-action distributions induced by running policy $\pi$. We write $\sD$ for the set of all possible $d^\pi$.
\end{defn}

\vspace{-3mm}
\paragraph{Hybrid RL.} We study the natural setting of online fine-tuning given access to an offline dataset, where an agent interacts with the environment for $N_{\on}$ steps given access to an offline dataset $\gD_{\off}$ consisting of $N_{\off}$ episodes. We assume that the offline dataset is collected through some fixed policy $\pi_{\off} = \{\pi_{\off, h}\}_{h \in [H]}$. Let $\mu$ be the occupancy measure induced by $\pi_{\off}$, and denote by $s_{h}^{(t)}$, $a_{h}^{(t)}$ and $r_{h}^{(t)}$ the state, action and reward on step $h \in [H]$ within episode $t \in [N_{\on}]$. The goal of an online RL algorithm is to maximize the cumulative reward $\sum_{t = 1}^{N_{\on}} \sum_{h = 1}^H r_{h}^{(t)}$.

We follow the standard definition of value functions for episodic MDPs. The value function of a policy $\pi$ is $V_{h}^{\pi}(s) = \E_{\pi}[\sum_{h' = h}^H r_{h'} \mid s_{h'} = s]$, where $\E_{\pi}$ denotes the expectation over trajectories induced by taking policy $\pi$. Let $Q^{\pi}_h(s, a) = \E_{\pi}[\sum_{h' = h}^H r_{h'} + V_{h'+1}^{\pi}(s_{h'+1}) \mid s_{h'} = s, a_{h'} = a]$, where we set $V_{H+1}^{\pi}(s) \equiv 0$. Write $V^{\star}$ and $Q^{\star}$ for the optimal value and Q-functions. The cumulative regret of an online algorithm $\gL$ is
$
    \Reg(N_{\on}, \gL) = \E_{\gL}\left[\sum_{t = 1}^{N_{\on}}\left( V^\star_{1}(s_{1}^{(t)}) - \sum_{h = 1}^H r_{h}^{(t)} \right)\right],
$
where $\gL : \gH \to \Pi$ is any learning algorithm that maps all the previous observations, i.e. the history $\gH$, to a policy, and $\E_{\gL}$ denotes the expectation over all the trajectories generated by the interaction between algorithm $\gL$ and the underlying MDP.

\vspace{-3mm}
\paragraph{Function Approximation.} We approximate the optimal Q-function with a function class $\gF = \{\gF_h\}_{h \in [H]}$, where each $\gF_h \subseteq [0, H]^{\gS \times \gA}$. The Bellman operator for each $h \in [H-1]$ is
$
    \gT_h f_{h+1}(s, a) \coloneqq R_h(s, a) + \E_{s' \sim P_h(\cdot \mid s, a)}\left[ \max_{a' \in \gA} f_{h+1}(s', a')\right].
$
We further define the Bellman error w.r.t $f \in \gF$ by $\gE_h f  = \gT_h f_{h+1} - f_{h}$ and the squared Bellman error by $\gE_h^2 f  = (\gT_h f_{h+1} - f_{h})^2$. For a distribution $d \in \Delta(\gS \times \gA)$, we write $\|f_h - \gT_h f_{h+1}\|^2_{2, d} = \E_{(s_h, a_h) \sim d} [\gE^2_h f]$.
%and $\|f - \gT f\|_{2, d}^2 = \sum_{h = 1}^H \|f - \gT f\|^2_{2, d_h}$ for a collection of distributions $d = \{d_h\}_{h \in [H]}$. 
Below, we make the following routine assumption on the richness of the function class \citep{liu2020provably, rajaraman2020fundamental, rashidinejad2023bridging, uehara2023pessimistic}. This may be relaxed to the weaker related notion of realizability as in \cite{zanette2023realizability} at the cost of an amplifying factor dependent on the metric entropy of the function class, dataset coverage, and the discrepancy between $\gF$ and its image under the Bellman operator, but this is outside the scope of our analysis. 

\begin{aspt}[Bellman Completeness]
    We assume that for all $f_{h+1} \in \gF_{h+1}$, $\gT_h f_{h+1} \in \gF_h$. Note that this implies realizability: $Q^*_h \in \gF_h$.
\end{aspt}

% \begin{aspt}[Unique Optimal Policy]
%     We assume there is a unique optimal policy $\pi^*$.
% \end{aspt}

\section{Measures of Complexity}

In this section, we extend existing complexity measures for offline and online learning with general function approximation in order to use them to understand the complexity of hybrid RL. We will use each on an arbitrary partition of the state-action space, with the intuition being that the offline complexity measure should characterize the difficulty of learning only on the portion that is well-covered by the behavior policy, and the online complexity measure for the difficulty of learning on the portion that has not been explored yet. We later show that a subsequent regret bound can be determined by the complexity measures over any partition, and so the regret is characterized by the infimum over the partitions of the complexity measures on them.

% These complexity measures are characterizing the complexity of function class on the induced Bellman error. For a given function class $\gF$ for approximating Q-value functions, denote the induced function class on Bellman error by
% % $$
% %     \gG = \gT \gF \coloneqq \{\gT_h (\gF_h, \gF_{h+1})\}_{h = 1}^H, \text{ with each } \gT_h (\gF_h, \gF_{h+1}) \coloneqq \{f_h - \gT_h f_{h+1}: f_h \in \gF_h, f_{h+1} \in \gF_{h+1}\}.
% % $$
% % \kevin{What about defining it as follows?}
% $$
%     \gT\gF \coloneqq \bigcup_{h=1}^H \{f_h - \gT_h f_{h+1}: f_h \in \gF_h, f_{h+1} \in \gF_{h+1}\},
% $$
% Without explicit specification, we denote by $\gG = \gT \gF$ and $\gG_h = (\gT\gF)_h$.

\vspace{-3mm}
\paragraph{Offline Complexity Measures.} In offline RL, the sample complexity is bounded by the notion of concentrability \citep{xie2021bellman}. For a function class on Bellman error $\gG$ and a reference policy $\pi$, the all-policy and single-policy concentrability \citep{zhan2022offline} are defined as:
\begin{equation*}
    c_{\off}(\gF, \pi) := \max_h \sup_{f \in \gF} \frac{\|f_h - \gT_h f_{h+1}\|_{2, d_h^\pi}^2}{\|f_h - \gT_h f_{h+1}\|_{2, \mu_{h}}^2}, \text{ and } c_{\off}(\gF) \coloneqq \sup_{\pi} c_{\off}(\gF, \pi). \label{equ:off_c}
\end{equation*}
There is an algorithm \citep{xie2021bellman} that finds an $\epsilon$-optimal policy in $\tilde{\gO}(c_{\off}(\gF, \pi^\star) / \epsilon^2)$ episodes.

\vspace{-3mm}
\paragraph{Online Complexity Measures.} To characterize the online complexity measure, we extend a recently proposed measure, the SEC (Sequential Extrapolation Coefficient) from \cite{xie2022role}\footnote{In their paper, the SEC has a $1$ in the denominator instead of $H^2$ because they assume $Q_h \in [0,1]$.}:
\begin{equation*}
    c_{\on}(\gF,T) \coloneqq \max_{h \in [H]} \sup_{\left\{f^{(1)}, \ldots, f^{(T)}\right\} \subseteq \gF} \sup_{(\pi^{(1)}, \dots, \pi^{(T)})} \left\{\sum_{t=1}^T \frac{\E_{d_h^{\pi^{(t)}}}[f_h^{(t)} - \gT_h f_{h+1}^{(t)}]^2}{H^2 \vee \sum_{i=1}^{t-1} \E_{d_h^{\pi^{(i)}}}[(f_h^{(t)} - \gT_h f_{h+1}^{(t)})^2]}\right\}.
    \label{equ:on_c}
\end{equation*}
% where $d^{f} = d^{\pi_f}$ with $\pi_f$ being the greedy policy w.r.t the value function $f$. 
\cite{xie2022role} provide an online algorithm with a regret bound of the form $\tilde{\gO}(H\sqrt{c_{\on}(\gF, T) \cdot T})$. Similar extensions can be proposed for other online complexity measures.

\vspace{-3mm}
\paragraph{Reduced Complexity Through State-Action Space Partition.} As previously mentioned, a hybrid algorithm can reduce its online learning complexity by exploring what has not been seen in the offline dataset. This motivates us to consider a partition on the state-action space $\gX = \gS \times \gA \times [H]$. We denote the offline and online partition by $\gX_{\off}$ and $\gX_{\on}$, respectively. We define the offline and online partial complexity measure on each partition by
\begin{align*}
    c_{\off}(\gF, \gX_{\off}) 
    &\coloneqq \max_h \sup_{f \in \gF} \frac{\|(f_h - \gT_h f_{h+1}) \mathbbm{1}_{(\cdot, h) \in \gX_{\off}}\|_{2, d_h^\pi}^2}{\|(f_h - \gT_h f_{h+1})\mathbbm{1}_{(\cdot, h) \in \gX_{\off}}\|_{2, \mu_{h}}^2},\\
    c_{\on}(\gF, \gX_{\on},T) 
    &\coloneqq \max_{h \in [H]} \sup_{\left\{f^{(1)}, \ldots, f^{(T)}\right\} \subseteq \gF} \sup_{(\pi^{(1)}, \dots, \pi^{(T)})} \left\{\sum_{t=1}^T \frac{ \E_{d_h^{\pi^{(t)}}}[(f_h^{(t)} - \gT_h f_{h+1}^{(t)}) \mathbbm{1}_{(\cdot, h) \in \gX_{\on}}]^2}{H^2 \vee \sum_{i=1}^{t-1} \E_{d_h^{\pi^{(i)}}}[(f_h^{(t)} - \gT_h f_{h+1}^{(t)})^2 \mathbbm{1}_{(\cdot, h) \in \gX_{\on}}]}\right\}.
\end{align*}

%We first define two new function classes induced by this partition.

% \begin{defn}[Restricted Function Class]
% \label{defn:restricted_f}
%     For any function $g: \gY \mapsto \sR$, we denote by $g_{\gY'}(y) = g(y) \mathbbm{1}_{y \notin \gY'}$ for any $\gY' \subseteq \gY$. Consider the restrictions of all functions $g$ in the function class $\gG_h$ onto a subset $\gX' \subset \gX = \gS \times \gA \times [H]$ defined as
%     $$
%         \gG_{h}(\gX') \coloneqq \{g_{\gX'_h}: g \in \gG_h\}.
%     $$
%     For a partition $\gX_{\off} \cup \gX_{\on} = \gX$, we let $\gG_{\off} = \{\gG_h(\gX_{\off})\}_{h = 1}^H$ and $\gG_{\on} = \{\gG_h(\gX_{\on})\}_{h = 1}^H$.
% \end{defn}
    
\vspace{-3mm}
Viewing $c_{\on}$ and $c_{\off}$ as complexity measures on the function class $\gF_h - \gT_h \gF_{h+1}$ induced by $\gF$ and Bellman operator $\gT$, our partial complexity measures can be seen as restricting this function class such that any function in this class is non-zero only when the input is in $\gX_{\off}$ or $\gX_{\on}$. This leads to smaller complexity measures for both online and online learning. This is not unique to our choices of complexity measures. Other measures in the literature, such as the Rademacher complexity and covering number, also indicate a reduced complexity for $\gF_h - \gT_h \gF_{h+1}$.

\vspace{-3mm}
\paragraph{Partial All-Policy Concentrability Is Less Stringent Than Single-Policy Concentrability.} While \cite{li2023reward} successfully employ a notion of partial single-policy concentrability in the tabular setting, our regret bound depends on the partial all-policy concentrability. This falls short of the notion of partial single-policy concentrability that \cite{li2023reward} successfully employ in the tabular setting. We attribute this to our desire to work with the simple procedure of appending the offline dataset to the experience replay buffer in the context of general function approximation -- our algorithm is much simpler and their techniques, being specialized to the tabular case, cannot be extended to general function approximation. 

However, as our regret bound utilizes the best partition of the state-action space, our result already obtains an improvement over the common requirement of single-policy concentrability \textit{over the entire state-action space} in hybrid RL with general function approximation \citep{song2023hybrid, nakamoto2023calql, amortila2024harnessing}. While the two are not directly comparable, the best partial all-policy concentrability coefficient, which our algorithm uses adaptively, is always finite (we can always take $\gX_{\off}=\emptyset$) even when the single-policy concentrability coefficient is unbounded.

% We immediately have from the Bellman Completeness assumption that $\gG_{\off} \subset \gG$ and $\gG_{\on} \subset \gG$. Therefore, these restricted function classes intuitively should lead to smaller complexity measure for both offline and online learning. We confirm this for the offline concentrability coefficient in our numerical experiments. This is not unique to our choices of complexity measures. Other measures in the literature, such as the Rademacher complexity and covering number, also indicate a reduced complexity for $\gG_{\off}$ and $\gG_{\on}$ compared to $\gG$.
%To see this, we assume that for any $f \in \gF$, there exists an MDP $M$ with optimal value function $f$, and there is a dummy state $s_0 \in \gS$ that is absorbing and always receives zero reward. For each $(s, a, h)$, define $M_{s, a, h}$ as a modified version of $M$, such that it transits to the dummy state $s_0$ with probability 1 on $(s, a, h)$. If the original function class $\gF_h$ is also realizable w.r.t $M_{s, a, h}$ for all $(s, a, h) \in \gX$, then the function $f_{\gX_h'} \in \gF_h$ for any $\gX' \subseteq \gX$. This gives us that $\gF_{\off}, \gF_{\on} \subseteq \gF$. One can see that other measures used in the literature like Rademacher complexity and covering number also indicate a reduced complexity for $\gF_{\off}$ and $\gF_{\on}$ compared to $\gF$.

% \ziping{Do we want to mention the general result here?}

\vspace{-3mm}
\paragraph{Main Result.} Our main novel theoretical result is in showing that the overall regret of a hybrid algorithm (we first show this for DISC-GOLF, then for a general class of online algorithms) can be characterized by $c_{\off}(\gF, \gX_{\off})$ and $c_{\on}(\gF, \gX_{\on}, N_{\on})$ for any (not necessarily disjoint) partition $\gX_{\on}$ and $\gX_{\off}$ -- despite the algorithm not knowing the partition itself. As this holds for every partition, the guarantee we provide therefore incorporates the best possible split without the algorithm having to know or estimate it.

\section{Online Finetuning From Offline Data}

Here is an example. In this section, we derive an efficient regret bound for an optimistic online algorithm with general function approximation that is warm-started with offline data of arbitrarily poor quality. This regret bound demonstrates provable gains over both online-only and offline-only reinforcement learning through splitting the state-action space.\footnote{The algorithm is never aware of the partition. The partition is only a convenient, but useful, theoretical construct.}

\vspace{-3mm}
\paragraph{An Optimistic Hybrid RL Algorithm Warm-Started With Offline Data.}

We modify the GOLF algorithm from \citet{xie2022role} to incorporate a dataset $\gD_{\off}$ collected by a behavior policy $\pi_b$ with occupancy measure $\mu$. We name the resulting algorithm DISC-GOLF.\footnote{Data Informed Sequential Confidence-sets -- Global Optimism based on Local Fitting.} The modification is simple and intuitive -- we simply warm-start the online exploration by appending the offline data to the experience replay buffer at the beginning, and explore from there. Remarkably, this simple modification enables us to deal with an offline dataset that only has partial coverage. To our knowledge, this has only previously been accomplished in the tabular setting with a far more complicated algorithm \citep{li2023reward}.

\begin{algorithm}[h]
    \caption{DISC-GOLF}
    \begin{algorithmic}[1]
        \State {\bfseries Input:} Offline dataset $\gD_{\off}$, samples sizes $N_{\on}$, $N_{\off}$, function class $\gF$ and confidence width $\beta > 0$
        \State {\bfseries Initialize:} $\gF^{(0)} \leftarrow \gF$, $\gD_h^{(0)}\leftarrow \emptyset, \forall h \in [H]$
        \For{episode $t = 1, 2, \dots, N_{on}$}
            \State Select policy $\pi^{(t)} \leftarrow \pi_{f^{(t)}}$, where $f^{(t)}:=\operatorname{argmax}_{f \in \mathcal{F}^{(t-1)}} f_1\left(x_1, \pi_{f, 1}\left(x_1\right)\right)$.
            \State Execute $\pi^{(t)}$ for one episode and obtain trajectory $(s_1^{(t)}, a_1^{(t)}, r_1^{(t)}), \dots, (s_H^{(t)}, a_H^{(t)}, r_H^{(t)})$.
            \State Update dataset $\mathcal{D}_h^{(t)} \leftarrow \mathcal{D}_h^{(t-1)} \cup\{(s_h^{(t)}, a_h^{(t)}, r_h^{(t)}, s_{h+1}^{(t)})\}, \forall h \in[H]$.
            \State Compute confidence set:
            \vspace{-3mm}
            $$
                \mathcal{F}^{(t)} \leftarrow\left\{f \in \mathcal{F}: \mathcal{L}_h^{(t)}\left(f_h, f_{h+1}\right)-\min _{f_h^{\prime} \in \mathcal{F}_h} \mathcal{L}_h^{(t)}\left(f_h^{\prime}, f_{h+1}\right) \leq \beta \quad \forall h \in[H]\right\},
            $$
            \vspace{-5mm}
            $$ \text{where } \mathcal{L}_h^{(t)}\left(f, f^{\prime}\right):=\sum_{\left(s, a, r, s^{\prime}\right) \in \mathcal{D}_h^{(t)} \cup \mathcal{D}_{\off, h}}\left(f(s, a)-r-\max _{a^{\prime} \in \mathcal{A}} f^{\prime}\left(s^{\prime}, a^{\prime}\right)\right)^2, \forall f \in \gF_{h}, f^{\prime} \in \mathcal{F}_{h+1}.
            $$
        \vspace{-5mm}
        \EndFor
        % \State {\bfseries Output:} $\bar \pi = \operatorname{unif}(\pi^{1:N_{on}})$.
        % \State {\bfseries Output:} $\bar \pi$ by calling offline-learning oracle on $\gD_{\off} \cup \{\gD^{(N_{\on})}_h\}_{h \in [H]}$.
    \end{algorithmic}
\label{alg:GOLF}
\end{algorithm}

\vspace{-3mm}
\paragraph{Main Result.}

The following result shows that the regret can be decomposed into two terms that depend on the offline and online complexity measures over the best possible partition of $\gX$. 

\begin{thm}[Regret Bound for DISC-GOLF]
\label{thm:regret_bound}
Let $\gX_{\off}, \gX_{\on}$ be an arbitrary partition over $\gX = \gS \times \gA \times [H]$. 
%Note that this partition induces the restricted function classes on the Bellman error $\gG_{\off}$ and $\gG_{\on}$.
% Let $\sigma(\gF_{\on}) := \E\left[\frac{1}{N_{\on}} \sum_{t=1}^{N_{\on}} \mathbbm{1}(f_t \in \gF_{\on})\right]$ be the expected proportion of episodes where the functions learned by Algorithm \ref{alg:GOLF} are in the online partition $\gF_{\on}$, and let $\sigma(\gF_{\off}) = 1-\sigma(\gF_{\on})$ be the same for the offline partition $\gF_{\off}$. 
Algorithm \ref{alg:GOLF} satisfies the following regret bound with probability at least $1-\delta$:
$$
    \Reg(N_{\on})
    = \gO\left(\inf_{\gX_{\on}, \gX_{\off}} \left(\sqrt{\beta H^4N_{\on}\left(\frac{N_{\on}}{N_{\off}}\right) c_{\off}(\gF, \gX_{\off})}  + \sqrt{\beta H^4 N_{\on} c_{\on}(\gF, \gX_{\on}, N_{\on})}\right)\right),
$$
where 
$\beta = c_1\log \left(N H \mathcal{N}_{\mathcal{F}}(1/N) / \delta\right)$ for some constant $c_1$ with $N = N_{\on} + N_{\off}$.\footnote{The online-only bound in \cite{xie2022role} is of the form $\sqrt{\beta H^2 N_{\on} c_{\on}(\gF, \gX, N_{\on})}$, as they assume $Q$-functions are bounded by $[0,1]$, accounting for the remaining $H^2$ dependence.}
\end{thm}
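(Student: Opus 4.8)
The plan is to follow the GOLF regret analysis of \cite{xie2022role, jin2021bellmaneluder} almost verbatim up to the point where the per-step Bellman error of the played iterates is controlled, and then insert one extra step: split each Bellman-error term along the (fixed, analysis-only) partition $\gX_{\off}\cup\gX_{\on}=\gX$ and bound the two resulting pieces using the offline concentrability $c_{\off}(\gF,\gX_{\off})$ and the online SEC $c_{\on}(\gF,\gX_{\on},N_{\on})$ respectively. The algorithm, its iterates $f^{(t)}$, and the executed policies $\pi^{(t)}=\pi_{f^{(t)}}$ never reference the partition, so the entire argument runs on a single high-probability event for every partition simultaneously; the infimum over partitions is then pulled outside at the very end.

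\emph{Step 1 (validity of the confidence sets on the merged buffer).} I would first show that, with probability at least $1-\delta$, for all $t\in[N_{\on}]$ and $h\in[H]$: (i) $Q^\star_h\in\gF_h^{(t)}$, and (ii) every $f\in\gF^{(t)}$ satisfies $N_{\off}\|f_h-\gT_h f_{h+1}\|_{2,\mu_h}^2+\sum_{i=1}^{t}\|f_h-\gT_h f_{h+1}\|_{2,d_h^{\pi^{(i)}}}^2\lesssim\beta$, with $\beta=c_1\log(NH\,\gN_{\gF}(1/N)/\delta)$. This is the standard least-squares deviation bound: under Bellman Completeness, $\gT_h f_{h+1}\in\gF_h$ is the population risk minimizer of the regression target $r+\max_{a'}f_{h+1}(s',a')$ under \emph{every} sub-collection of the offline and online per-step distributions, so the empirical loss gap $\gL_h^{(t)}(f_h,f_{h+1})-\min_{f_h'}\gL_h^{(t)}(f_h',f_{h+1})$ concentrates around the sum of population squared Bellman errors over all samples in $\gD_h^{(t)}\cup\gD_{\off,h}$. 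A Freedman/Bernstein argument for the online martingale part together with a union bound over a $1/N$-cover of $\gF$ gives the uniform bound and yields the metric-entropy term in $\beta$. Merging the offline data into the buffer is harmless here precisely because completeness makes the regression target identical for offline and online samples; obtaining a \emph{single} $\beta$ that dominates both the $\mu_h$-weighted and the $d_h^{\pi^{(i)}}$-weighted population errors is the main technical obstacle.

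\emph{Step 2 (optimism and Bellman-error decomposition).} On this event, $f^{(t)}=\operatorname{argmax}_{f\in\gF^{(t-1)}}V_1^{f}(s_1^{(t)})$ together with $Q^\star\in\gF^{(t-1)}$ gives $V_1^{f^{(t)}}(s_1^{(t)})\ge V_1^\star(s_1^{(t)})$, so by the value-difference telescoping identity $V_1^{f^{(t)}}-V_1^{\pi^{(t)}}=\sum_{h=1}^H\E_{d_h^{\pi^{(t)}}}[f_h^{(t)}-\gT_h f_{h+1}^{(t)}]$ the per-episode regret $V_1^\star(s_1^{(t)})-V_1^{\pi^{(t)}}(s_1^{(t)})$ is at most $\sum_h\E_{d_h^{\pi^{(t)}}}[f_h^{(t)}-\gT_h f_{h+1}^{(t)}]$; the gap between $V_1^{\pi^{(t)}}(s_1^{(t)})$ and the realized return $\sum_h r_h^{(t)}$ is a bounded martingale contributing only $O(H\sqrt{N_{\on}\log(1/\delta)})$. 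Writing $\gX_{\on}'=\gX_{\on}\setminus\gX_{\off}$, so that $\gX_{\off}\sqcup\gX_{\on}'=\gX$ and $\gX_{\on}'\subseteq\gX_{\on}$, I bound the per-episode term by $\sum_h\big|\E_{d_h^{\pi^{(t)}}}[(f_h^{(t)}-\gT_h f_{h+1}^{(t)})\mathbbm{1}_{(\cdot,h)\in\gX_{\off}}]\big|+\sum_h\big|\E_{d_h^{\pi^{(t)}}}[(f_h^{(t)}-\gT_h f_{h+1}^{(t)})\mathbbm{1}_{(\cdot,h)\in\gX_{\on}'}]\big|$.

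\emph{Step 3 (bounding the two pieces and combining).} For the offline piece, Cauchy--Schwarz gives $\big|\E_{d_h^{\pi^{(t)}}}[(f_h^{(t)}-\gT_h f_{h+1}^{(t)})\mathbbm{1}_{\gX_{\off}}]\big|\le\|(f_h^{(t)}-\gT_h f_{h+1}^{(t)})\mathbbm{1}_{\gX_{\off}}\|_{2,d_h^{\pi^{(t)}}}\le\sqrt{c_{\off}(\gF,\gX_{\off})}\,\|f_h^{(t)}-\gT_h f_{h+1}^{(t)}\|_{2,\mu_h}$, and since $f^{(t)}\in\gF^{(t-1)}$, Step 1(ii) bounds the last factor by $O(\sqrt{\beta/N_{\off}})$; summing over $h\in[H]$ and $t\in[N_{\on}]$ and applying Cauchy--Schwarz over $h$ (together with the $[0,H]$ range of the value functions) produces the $\sqrt{\beta H^4 N_{\on}(N_{\on}/N_{\off})\,c_{\off}(\gF,\gX_{\off})}$ term. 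For the online piece, fix $h$ and apply Cauchy--Schwarz over $t$: $\sum_t\big|\E_{d_h^{\pi^{(t)}}}[(f_h^{(t)}-\gT_h f_{h+1}^{(t)})\mathbbm{1}_{\gX_{\on}}]\big|\le\sqrt{N_{\on}\sum_t\E_{d_h^{\pi^{(t)}}}[(f_h^{(t)}-\gT_h f_{h+1}^{(t)})\mathbbm{1}_{\gX_{\on}}]^2}$; since $f^{(t)}\in\gF^{(t-1)}$, Step 1(ii) gives $\sum_{i<t}\|(f_h^{(t)}-\gT_h f_{h+1}^{(t)})\mathbbm{1}_{\gX_{\on}}\|_{2,d_h^{\pi^{(i)}}}^2\le\sum_{i<t}\|f_h^{(t)}-\gT_h f_{h+1}^{(t)}\|_{2,d_h^{\pi^{(i)}}}^2\lesssim\beta$, so (using $\beta\gtrsim H^2$) the denominator $H^2\vee\sum_{i<t}(\cdot)$ in the SEC is $O(\beta)$ and hence $\sum_t\E_{d_h^{\pi^{(t)}}}[(f_h^{(t)}-\gT_h f_{h+1}^{(t)})\mathbbm{1}_{\gX_{\on}}]^2\le O(\beta)\cdot c_{\on}(\gF,\gX_{\on},N_{\on})$ — this is exactly where the ``$H^2\vee\sum_{i<t}$'' form of the SEC is essential, since the confidence set only controls pre-$t$ online data. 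Summing over $h$ yields the $\sqrt{\beta H^4 N_{\on}\,c_{\on}(\gF,\gX_{\on},N_{\on})}$ term. Adding the two pieces, recalling that the derivation used a partition invisible to the algorithm, and taking the infimum over all partitions on the common high-probability event completes the proof.
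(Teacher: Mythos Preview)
Your proposal is essentially the paper's proof: the same good-event lemma, the same optimism/Bellman-error regret decomposition, the same split of each term by $\mathbbm{1}_{\gX_{\off}}+\mathbbm{1}_{\gX_{\on}}$, the offline piece controlled via $c_{\off}$ together with the $\mu_h$-part of the in-sample bound, and the online piece controlled via the SEC together with the $\sum_{i<t}d_h^{(i)}$-part of the in-sample bound; your offline Cauchy--Schwarz is in fact slightly more direct than the paper's multiply-and-divide but lands at the same estimate.

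One bookkeeping slip: in Step~1(ii) the in-sample squared Bellman error is $O(H^2\beta)$, not $O(\beta)$, because the Q-functions (and hence the regression residuals) take values in $[0,H]$ rather than $[0,1]$; this is exactly the paper's Lemma~1. With that correction the SEC denominator satisfies $H^2\vee\sum_{i<t}(\cdot)\le O(H^2\beta)$ automatically (using only $\beta\ge 1$), so your side claim ``$\beta\gtrsim H^2$'' is both unneeded and not generally true, and the $H^4$ in both final terms then comes out correctly.
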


We defer the proof to Appendix \ref{app:proof_thm1}. This shows that an optimistic online RL algorithm can be adapted to the hybrid setting in a very natural way -- initializing it with an offline dataset. Although the algorithm is completely unaware of the partition, the regret bound provides the best regret guarantee over all partitions of the state-action space.

The offline term depends on $N_{\on}\left(\frac{N_{\on}}{N_{\off}}\right)$, and so depends on the ratio of the number of online and offline episodes. However, due to the infimum over partitions, the overall regret bound will always be no worse than $\tilde{\gO}(\sqrt{N_{\on})}$, as when $N_{\on} \gg N_{\off}$ we can simply take $\gX_{\on} = \gX$ to find that $c_{\off}(\gF, \emptyset) = 0$. Conversely, in the few-shot learning setting where $N_{\off} \gg N_{\on}$, the regret bound is approximately $\tilde\gO\left(\sqrt{\beta H^4 N_{\on} c_{\on}(\gF, \gX_{\on}, N_{\on})}\right)$, improving on the GOLF regret of $\tilde\gO\left(\sqrt{\beta H^4 N_{\on} c_{\on}(\gF, \gX, N_{\on})}\right)$. 

This bound roughly matches that of \cite{song2023hybrid, nakamoto2023calql, amortila2024harnessing} in terms of the dependence on horizon and log-covering number. However, unlike these, we do not require single-policy concentrability. The infimum over partitions gives us a finite partial all-policy concentrability coefficient $c_{\off}(\gF, \gX_{\off})$, even when the single-policy concentrability coefficient over the entire space $C^*$ is unbounded. Additionally, these previous approaches discard any offline data beyond the size of the online dataset (i.e. offline datapoints $N_{\on}+1,...,N_{\off}$), and so obtain a guarantee that does not depend on $N_{\off}$. We do not need to discard any offline samples, enabling us to use the offline data in our regret bound.

\section{Case Studies}

Theorem \ref{thm:regret_bound} established a regret bound for the general function approximation setting. Throughout this section, we examine case studies to demonstrate the exact improvement of hybrid RL algorithm over pure online and pure offline algorithms and characterize the set of good partitions. We defer all proofs in this section to Appendix \ref{app:case_studies}.

\subsection{Tabular MDPs.}
The most commonly considered MDP family is that of the Tabular MDPs, with a finite number of states and actions. As each $Q$ function at the step $h$ can be represented as a $|\gS| \times |\gA|$ dimensional vector, we consider the function class $\gF_h = [0,  H]^{|\gS||\gA|}$. For a constant $\rho > 0$, an intuitive choice of partition that corresponds closely to the choice of \cite{li2023reward} is
$\gX_{\off}(\rho) \coloneqq \{(s, a, h): \sup_{\pi} {d^{\pi}_h(s, a)}/{\mu_h(s, a)} \leq \rho \}.$ As such, the partial offline concentrability coefficient reduces to the supremum of density ratios over the offline partition, allowing us to bound the partial SEC by the cardinality of the online partition. 

\begin{prop}
\label{prop:case_tabular}
    We can bound $c_{\off}(\gF, \gX_{\off}) \leq \sup_{\pi }\sup_{(s, a, h) \in \gX_{\off}} \frac{d_h^{\pi}(s, a)}{\mu_h^{\pi}(s, a)}=\sup_{\pi }\left\lVert\frac{d_h^{\pi}\mathbbm{1}_{\gX_{\off}}}{\mu_h^\pi}\right\rVert_{\infty}$ and $c_{\on}(\gF, \gX_{\on}) \lesssim \max_{h \in [H]}|\gX_{\on, h}| \log(N_{\on})$. As such, with probability at least $1-\delta$, 
    $$\Reg(N_{\on})
    = \tilde{\gO}\left(\inf_{\gX_{\on}, \gX_{\off}} \left(\sqrt{H^5SAN_{\on}\left(\frac{N_{\on}}{N_{\off}}\right)\sup_{\pi }\left\lVert\frac{d_h^{\pi}\mathbbm{1}_{\gX_{\off}}}{\mu_h^{\pi}}\right\rVert_{\infty}}  + \sqrt{H^5SA\max_{h \in [H]}|\gX_{\on}|N_{\on}}\right)\right).
    $$
\end{prop}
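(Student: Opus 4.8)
The plan is to specialize Theorem~\ref{thm:regret_bound} to the tabular case by bounding each of the two partial complexity measures separately. The first step is to bound the partial offline concentrability $c_{\off}(\gF, \gX_{\off})$. By definition,
$$
c_{\off}(\gF, \gX_{\off}) = \max_h \sup_{f \in \gF} \frac{\|(f_h - \gT_h f_{h+1})\mathbbm{1}_{\gX_{\off}}\|_{2, d_h^\pi}^2}{\|(f_h - \gT_h f_{h+1})\mathbbm{1}_{\gX_{\off}}\|_{2, \mu_h}^2},
$$
and for any fixed $h$ and $f$, writing $g = (f_h - \gT_h f_{h+1})^2 \mathbbm{1}_{\gX_{\off}} \geq 0$, the ratio $\E_{d_h^\pi}[g] / \E_{\mu_h}[g]$ is a weighted average of the pointwise density ratios $d_h^\pi(s,a)/\mu_h(s,a)$ over the support, restricted to $\gX_{\off,h}$; hence it is at most $\sup_{(s,a) \in \gX_{\off,h}} d_h^\pi(s,a)/\mu_h(s,a)$. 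Taking the supremum over $\pi$ (implicit in the definition of $c_{\off}$, or over the reference policy) and over $h$ gives the claimed bound $\sup_\pi \|d_h^\pi \mathbbm{1}_{\gX_{\off}} / \mu_h^\pi\|_\infty$. (There is a minor notational mismatch — $\mu$ versus $\mu^\pi$ — which I would simply read as $\mu$, the fixed behavior occupancy.)

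The second step is to bound the partial SEC $c_{\on}(\gF, \gX_{\on}, N_{\on})$ by $\max_h |\gX_{\on,h}| \log(N_{\on})$ (up to constants). The key observation is that the partial SEC is exactly the SEC of the restricted Bellman-error class $\{(f_h - \gT_h f_{h+1})\mathbbm{1}_{\gX_{\on}}\}$, which in the tabular case lives in a vector space of dimension $|\gX_{\on,h}|$ at each step $h$, with entries bounded in $[-H, H]$. I would then invoke the standard elliptic-potential / pigeonhole argument used to bound the SEC (or Eluder-type dimension) for finite-dimensional classes: for a sequence of vectors $v^{(t)} \in \R^{|\gX_{\on,h}|}$ with $\|v^{(t)}\|_\infty \leq H$, the sum $\sum_t \frac{\langle v^{(t)}, w^{(t)}\rangle^2}{H^2 \vee \sum_{i<t} \langle v^{(t)}, w^{(i)}\rangle^2}$ — here $v^{(t)}$ plays the role of the occupancy-weighted indicator and the inner products are the occupancy-averaged Bellman errors — is controlled by $O(d \log(T))$ where $d = |\gX_{\on,h}|$, via the matrix determinant lemma applied to the Gram matrix $\sum_i v^{(i)} (v^{(i)})^\top$. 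This is essentially Lemma~A.x of \cite{xie2022role} (their SEC $\lesssim d_{\mathrm{eluder}} \log T$, and the tabular Eluder/Bellman-Eluder dimension is $O(SA)$, here $O(|\gX_{\on,h}|)$) applied to the restricted class; I would cite that lemma and note the restriction only shrinks the effective dimension.

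The third step is to plug these two bounds into Theorem~\ref{thm:regret_bound}. Substituting $c_{\off}(\gF,\gX_{\off}) \leq \sup_\pi \|d_h^\pi \mathbbm{1}_{\gX_{\off}}/\mu_h^\pi\|_\infty$ and $c_{\on}(\gF, \gX_{\on}, N_{\on}) \lesssim \max_h |\gX_{\on,h}| \log N_{\on}$ into
$$
\Reg(N_{\on}) = \gO\!\left(\inf_{\gX_{\on},\gX_{\off}}\Bigl(\sqrt{\beta H^4 N_{\on}\tfrac{N_{\on}}{N_{\off}} c_{\off}(\gF,\gX_{\off})} + \sqrt{\beta H^4 N_{\on} c_{\on}(\gF,\gX_{\on},N_{\on})}\Bigr)\right),
$$
and using $\beta = c_1 \log(NH\gN_\gF(1/N)/\delta)$ together with the tabular covering number $\gN_\gF(1/N) \leq (HN)^{HSA}$ (so $\log \gN_\gF(1/N) = \tilde\gO(HSA)$, absorbing the extra $SA$ and log factors into $\tilde\gO(\cdot)$), the offline term becomes $\tilde\gO(\sqrt{H^5 SA N_{\on}(N_{\on}/N_{\off}) \sup_\pi \|d_h^\pi\mathbbm{1}_{\gX_{\off}}/\mu_h^\pi\|_\infty})$ and the online term becomes $\tilde\gO(\sqrt{H^5 SA \max_h|\gX_{\on,h}| N_{\on}})$, which is exactly the claimed bound (writing $|\gX_{\on}|$ for $\max_h|\gX_{\on,h}|$).

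The main obstacle is the second step — rigorously bounding the partial SEC by the cardinality of the online partition. One has to check carefully that restricting the Bellman-error class by the indicator $\mathbbm{1}_{\gX_{\on}}$ genuinely reduces the relevant dimension to $|\gX_{\on,h}|$ (rather than the full $SA$), and that the potential/Eluder argument of \cite{xie2022role} applies verbatim to this restricted class — in particular that the denominator's $\sum_{i<t}$ term, which involves $\E_{d_h^{\pi^{(i)}}}[(f^{(t)})^2\mathbbm{1}_{\gX_{\on}}]$, still upper-bounds the squared "self-normalization" quantity needed for the determinant telescoping. The offline step and the final plug-in are routine by comparison.
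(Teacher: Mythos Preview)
Your proposal is correct, and the offline bound (step one) and the final plug-in (step three) match the paper's proof exactly, including the covering-number computation $\log\gN_{\gF}(1/N)=\gO(HSA\log N)$.

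The one genuine difference is in step two. You propose to bound the partial SEC via the distributional Eluder dimension / elliptic-potential route, observing that the restricted Bellman-error class $(\gF_h-\gT_h\gF_{h+1})\mathbbm{1}_{\gX_{\on}}$ lives in a $|\gX_{\on,h}|$-dimensional space and then invoking the determinant-telescoping argument. This is precisely the route the paper takes for the \emph{linear} MDP case (their Lemma~\ref{lem:general_SEC} part~(2)), and it would work here too, yielding $c_{\on}\lesssim |\gX_{\on,h}|\log^2(N_{\on})$ after optimizing $\epsilon$. The paper instead uses the \emph{coverability} bound (their Lemma~\ref{lem:general_SEC} part~(1)): it shows $c_{\on}(\gF,\gX_{\on},T)\lesssim \log(T)\max_h \inf_{\mu_h}\sup_\pi d_h^\pi/\mu_h$ restricted to $\gX_{\on,h}$, and then simply takes $\mu_h$ uniform on $\gX_{\on,h}$ to get the bound $|\gX_{\on,h}|$ in one line. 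This coverability route is shorter, avoids the self-normalization issue you flag as the main obstacle, and saves a $\log$ factor; your Eluder route is more general (it is what is actually needed for the linear case) but slightly heavier here. Either way the result, stated in $\tilde\gO$, is the same.
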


Therefore, if the offline dataset has good coverage on a subset $\gX_{\off}$, the complexity of online learning complexity can be reduced to the cardinality of its complement $\gX_{\on}$. We then obtain a regret bound that is at most a factor of $H^2SA$ off from the minimax-optimal results in the offline-only and online-only cases \citep{rashidinejad2023bridging, shi2022pessimistic, azar2017minimax, xie2022policy}, even though (1) DISC-GOLF is a very general model-free function-approximation algorithm, and (2) we did not perform a specialized analysis of this case beyond simply bounding the partial SEC in this setting. We anticipate that analyzing specialized versions of DISC-GOLF can achieve tighter sample complexities in the same sense that \cite{li2023qlearning} accomplish for Q-learning. Note that in a few shot learning setting, where $N_{\off} \gg N_{\on}$, the regret is approximately $\tilde\gO\left(\sqrt{H^5SA \max_h|\gX_{\on, h}| N_{\on} \log(N_{\on})}\right)$, where $\gX_{\on}$ is the set of state, action and step tuples where the offline occupancy measure $\mu$ is unsupported.

\subsection{Linear MDPs.}
The family of Linear MDPs is a common MDP family that generalizes the tabular case, defined in Definition \ref{defn:linear_MDP}. It can be shown that the linear function class for action-value function approximation: $\gF_h = \{\langle \phi(\cdot), w_h\rangle: w_h \in \sR^d, \|w_h\| \leq 2H\sqrt{d}\}$ is Bellman complete \citep{jin2020provably}.

\begin{defn}[Linear MDP]
\label{defn:linear_MDP}
An episodic MDP is a linear MDP with a feature map $\phi: \mathcal{S} \times$ $\mathcal{A} \rightarrow \mathbb{R}^d$, if for any $h \in[H]$, there exist $d$ unknown (signed) measures $\boldsymbol{\nu}_h=(\nu_h^{(1)}, \ldots, \nu_h^{(d)})$ over $\mathcal{S}$ and an unknown vector $\boldsymbol{\theta}_h \in \mathbb{R}^d$, such that for any $(s, a) \in \mathcal{S} \times \mathcal{A}$, we have
$
{P}_h(\cdot \mid s, a)=\left\langle{\phi}(s, a), \boldsymbol{\nu}_h(\cdot)\right\rangle \text{ and } r_h(s, a)=\left\langle{\phi}(s, a), {\vtheta}_h\right\rangle,
$
where $\|\phi(s, a)\|_2 \leq 1$ for all $s, a$ and $\max\{\|\boldsymbol{\nu}_h(\gS)\|, \|\vtheta_h(\gS)\| \leq \sqrt{d}\}$ for all $h \in [H]$.
\end{defn}

We can define a partition of the state-action space $\gX$ as follows. For any subset $\gX' \subset \gS \times \gA$, consider the image of the feature map $\phi(\gX') = \{\phi(s, a): (s, a) \in \gX'\}$. We can choose $\Phi_{\off} \subseteq \sR^{d}$ and $\Phi_{\on} \subseteq \sR^{d}$ to be the subspaces spanned by $(\phi(\gX_{\on, h}))_{h\in[H]}$ and $(\phi(\gX_{\off, h}))_{h\in[H]}$, with dimensions $d_{\off}$ and $d_{\on}$ respectively. That is, any partition of the state-action space $\gX$ induces two subspaces of $\R^d$ through the feature map $\phi$. Let $\gP_{\off}$ and $\gP_{\on}$ be the orthogonal projection operators onto $\Phi_{\off}$ and $\Phi_{\on}$. We can then upper bound the complexity measures over each partition, as we show in Proposition \ref{prop:case_linear}.

\begin{prop}
\label{prop:case_linear}
    Let $\phi_{\off} = \gP_{\off} \phi$. We have $c_{\off}(\gF, \gX_{\off}) \leq \max_h 1/\lambda_{d_{\off}}(\E_{\mu_h}[ \phi_{\off}\phi_{\off}^\top])$ and $c_{\on}(\gG_{\on}) = \gO(d_{\on} \log(H N_{\on}) \log(N_{\on}))$, where $\lambda_n$ is the $n$-th largest eigenvalue. Then, with probability at least $1-\delta$, the regret $\Reg(N_{\on})$ is bounded by
    $$\Reg(N_{\on}) = \tilde\gO\left(\inf_{\gX_{\on}, \gX_{\off}} \left(\sqrt{dH^5N_{\on}\left(\frac{N_{\on}}{N_{\off}}\right)\max_h \frac{1}{\lambda_{d_{\off}}(\E_{\mu_h}[ \phi_{\off}\phi_{\off}^\top])}}  + \sqrt{d_{\on}dH^5N_{\on}}\right)\right).$$
\end{prop}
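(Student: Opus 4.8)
The plan is to derive Proposition~\ref{prop:case_linear} from Theorem~\ref{thm:regret_bound} by producing the two claimed bounds on the partial complexity measures in the linear MDP setting, and then substituting them into the general regret bound with $\beta = \tilde\gO(d)$ (since the $(1/N)$-covering number of the linear class $\gF_h = \{\langle\phi(\cdot),w_h\rangle : \|w_h\|\le 2H\sqrt d\}$ is $\tilde\gO(d)$ up to logarithmic factors, by a standard $\ell_2$-ball covering argument). The two technical steps are: (i) $c_{\off}(\gF,\gX_{\off}) \le \max_h 1/\lambda_{d_{\off}}(\E_{\mu_h}[\phi_{\off}\phi_{\off}^\top])$, and (ii) $c_{\on}(\gF,\gX_{\on},N_{\on}) = \gO(d_{\on}\log(HN_{\on})\log(N_{\on}))$.

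\textbf{Step (i): bounding the partial all-policy concentrability.} First I would observe that for any $f\in\gF$, Bellman completeness plus the linear MDP structure implies that the Bellman error $\gE_h f = f_h - \gT_h f_{h+1}$ is itself a linear function of the features: $\gE_h f(s,a) = \langle\phi(s,a), \theta_f\rangle$ for some $\theta_f\in\R^d$ (this is because both $f_h$ and $\gT_h f_{h+1}$ lie in the span of $\phi$). Since the partial concentrability only sees the restriction $(\gE_h f)\mathbbm{1}_{\gX_{\off}}$, and since the numerator involves $d_h^\pi$ supported on arbitrary state-actions while the denominator involves $\mu_h$, I would rewrite both sides using $\phi_{\off} = \gP_{\off}\phi$: the key point is that on $\gX_{\off}$ the feature $\phi(s,a)$ lives (essentially) in $\Phi_{\off}$, so $\langle\phi(s,a),\theta_f\rangle\mathbbm{1}_{\gX_{\off}} = \langle\phi_{\off}(s,a),\theta_f\rangle\mathbbm{1}_{\gX_{\off}}$. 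Then the numerator is $\le \theta_f^\top \E_{d_h^\pi}[\phi_{\off}\phi_{\off}^\top]\theta_f \le \|\theta_f\|^2$ (using $\|\phi_{\off}\|\le 1$ and that the relevant Gram matrix has operator norm $\le 1$), while the denominator is $\ge \lambda_{d_{\off}}(\E_{\mu_h}[\phi_{\off}\phi_{\off}^\top])\|\gP_{\off}\theta_f\|^2$; a short argument that the component of $\theta_f$ orthogonal to $\Phi_{\off}$ does not contribute on $\gX_{\off}$ (or can be taken to be zero WLOG when restricting to $\gX_{\off}$) then gives the ratio bound $1/\lambda_{d_{\off}}(\E_{\mu_h}[\phi_{\off}\phi_{\off}^\top])$. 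Taking $\max_h$ finishes this step. I would borrow the analogous tabular computation in Proposition~\ref{prop:case_tabular} as a template.

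\textbf{Step (ii): bounding the partial SEC by $d_{\on}$.} This is where the standard elliptic-potential / pigeonhole argument for linear-structured bandits enters. The partial SEC restricts all Bellman-error evaluations to $\gX_{\on}$, hence effectively to features in the $d_{\on}$-dimensional subspace $\Phi_{\on}$. Writing $g^{(t)}(s,a) := (\gE_h f^{(t)})(s,a)\mathbbm{1}_{(\cdot,h)\in\gX_{\on}} = \langle\phi_{\on}(s,a),\theta^{(t)}\rangle\mathbbm{1}_{\gX_{\on}}$, each summand in $c_{\on}(\gF,\gX_{\on},T)$ is of the form $(\theta^{(t)})^\top M_t \theta^{(t)} / (H^2 \vee \sum_{i<t}(\theta^{(t)})^\top M_i \theta^{(t)})$ with $M_i = \E_{d_h^{\pi^{(i)}}}[\phi_{\on}\phi_{\on}^\top]$ rank-$\le d_{\on}$ PSD matrices of operator norm $\le 1$. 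Defining the regularized cumulative covariance $\Sigma_t = H^2 I + \sum_{i<t} M_i$ (restricted to $\Phi_{\on}$), I would bound each term by $\min\{1, (\theta^{(t)})^\top M_t\theta^{(t)}/( (\theta^{(t)})^\top \Sigma_t \theta^{(t)})\}$ and then, since $M_t \preceq \Sigma_{t+1} - \Sigma_t$, invoke the elliptical potential lemma $\sum_t \min\{1,\operatorname{tr}(\Sigma_t^{-1}(\Sigma_{t+1}-\Sigma_t))\} = \gO(d_{\on}\log(1+T/d_{\on}))$ together with an extra $\log(HN_{\on})$ for discretizing over $f^{(t)}$ via the covering number — giving $c_{\on}(\gF,\gX_{\on},N_{\on}) = \gO(d_{\on}\log(HN_{\on})\log(N_{\on}))$. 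This essentially mirrors how \cite{xie2022role} bound the SEC for linear MDPs, with $d$ replaced by $d_{\on}$ because the restriction to $\gX_{\on}$ collapses everything into the $d_{\on}$-dimensional subspace.

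\textbf{The main obstacle} I anticipate is making Step (i) fully rigorous: the claim ``$\phi(s,a)\mathbbm{1}_{\gX_{\off}}$ effectively lies in $\Phi_{\off}$'' is true by definition of $\Phi_{\off}$ as the span of $\{\phi(\gX_{\off,h})\}$, but I need to be careful that when I project the Bellman-error parameter $\theta_f$ onto $\Phi_{\off}$, I do not accidentally change the value of $\langle\phi(s,a),\theta_f\rangle$ on $\gX_{\off}$ — which holds because for $(s,a)\in\gX_{\off,h}$, $\langle\phi(s,a),\theta_f\rangle = \langle\phi(s,a),\gP_{\off}\theta_f\rangle = \langle\phi_{\off}(s,a),\gP_{\off}\theta_f\rangle$ since $\phi(s,a)\in\Phi_{\off}$ there. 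The subtlety is that $\Phi_{\off}$ pools over all $h$, so strictly I may want an $h$-dependent subspace or to absorb the pooling into a $\max_h$; I would handle this by either defining $\Phi_{\off,h} = \mathrm{span}(\phi(\gX_{\off,h}))$ and taking the worst $h$, or noting the pooled version only makes $d_{\off}$ larger and the bound still holds. Everything else — the covering-number computation for $\beta$, the elliptical potential lemma, and the final algebraic substitution into Theorem~\ref{thm:regret_bound} — is routine.
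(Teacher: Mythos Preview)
Your Step~(i) matches the paper's argument: write the Bellman error as $\langle\phi,\theta_f\rangle$ via completeness, use that $\phi(s,a)\in\Phi_{\off}$ on $\gX_{\off}$ so the projection $\gP_{\off}$ can be inserted on both numerator and denominator, and bound the resulting Rayleigh-type ratio by $1/\lambda_{d_{\off}}$. That part is fine.

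Your Step~(ii) takes a different route from the paper and, as sketched, has a genuine gap. The paper does \emph{not} run an elliptic-potential argument directly on the SEC. Instead it (a) establishes a low-rank factorization $\E_d[g\,\mathbbm 1_{\gX_{\on}}]=\langle\psi(g),\varphi(d)\rangle$ with $\psi,\varphi$ taking values in $\R^{d_{\on}}$, (b) uses this to bound the distributional Eluder dimension of the restricted Bellman-error class by $\gO(d_{\on}\log(H/\epsilon))$ via the matrix-determinant argument of \cite{jin2020provably}, and then (c) invokes the general inequality $c_{\on}(\gF,\gX_{\on},T)\lesssim\inf_{\epsilon>0}\{\epsilon^2 T+\dim_{\DE}(\cdot,\epsilon)\}\log T$ (Proposition~14 of \cite{xie2022role}, the paper's Lemma~\ref{lem:general_SEC}(2)) with $\epsilon=1/\sqrt{T}$. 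The two log factors therefore come from the Eluder bound and from Lemma~\ref{lem:general_SEC}, not from any covering or discretization over $f^{(t)}$; your remark about ``discretizing over $f^{(t)}$ via the covering number'' is spurious, since the SEC is already a supremum over function sequences.

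The concrete gap in your direct approach is the replacement of the SEC denominator $H^2\vee\sum_{i<t}(\theta^{(t)})^\top M_i\theta^{(t)}$ by $(\theta^{(t)})^\top\Sigma_t\theta^{(t)}$ with $\Sigma_t=H^2 I+\sum_{i<t}M_i$. These quantities are comparable only when $\|\theta^{(t)}\|^2=\gO(1)$, but here $\|\theta^{(t)}\|$ can be as large as $4H\sqrt d$, so $(\theta^{(t)})^\top\Sigma_t\theta^{(t)}$ can exceed the actual denominator by a factor $\gO(H^2d)$; since you are dividing by it, the inequality goes the wrong way and the claimed bound on each term is not valid. This is not fatal---choosing instead $\Sigma_t=\lambda I+\sum_{i<t}M_i$ with $\lambda$ of order $H^2/\sup_t\|\theta^{(t)}\|^2$ restores the comparison $(\theta^{(t)})^\top\Sigma_t\theta^{(t)}\le 2(H^2\vee\sum_{i<t}(\theta^{(t)})^\top M_i\theta^{(t)})$ and only costs an extra $\log(Hd)$ inside the elliptic-potential log---but you should either carry out that fix or switch to the paper's Eluder-dimension route, which sidesteps the normalization issue entirely.
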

% \kevin{Log-covering number is O(dH), right?}
% \ziping{Our function class is $\gF_h = \{\langle \phi(\cdot), w_h\rangle: w_h \in \sR^d, \|w_h\| \leq 2H\}$. The log-covering of that is $d\log(H/\epsilon)$. We have $H$ of them. So the log-covering of the collections $\gF$ is $dH \log(H/\epsilon)$.}

We can compare this result to the $\sqrt{d^2H^3N_{\on}}$ minimax lower bound from \cite{zhou2021nearly}, and the best known upper bound from \cite{zanette2020learning} of $\sqrt{d^2H^4N_{\on}}$, for online RL in linear MDPs. It is exciting to note that by incorporating offline data into an online algorithm, we can improve the dependence on dimension of the regret incurred on the online partition from $d^2$ to $d_{\on}d$. We accomplish this by bounding the SEC in the linear MDP case by $d_{\on}$, up to logarithmic factors. This therefore demonstrates another example of provable gains from hybrid RL.

\subsection{Block MDPs.}
A block MDP (BMDP) refers to an environment with a finite but unobservable latent state space $\gU$, a finite action space $\gA$, and a possibly infinite but observable state space $\gS$ \citep{dann2019oracleefficient, misra2019kinematic, du2021provably}. At each step, the environment generates a current state $s_h \sim q(\cdot \mid u_h)$ given the underlying latent state $u_h \in \gU$. This is described by the block structure outlined below.

\begin{defn}[Block Structure]
    \label{defn:block_MDP}
    A block MDP is an MDP where each context $x \in \gX$ uniquely determines its generating state $u \in \gU$, i.e. there is a decoding function $f^*: \gS \mapsto \gU$ such that $q(\cdot \mid u)$ is supported on $(f^*)^{-1}(u)$.
\end{defn}

Any partition $\gX_{\off}, \gX_{\on}$ induces a partition on the latent state-action space $\bar{\gX}_{\off} = \{(f^*(s), a, h): (s, a, h) \in \gX_{\off}\}$ and $\bar{\gX}_{\on} = \{(f^*(s), a, h): (s, a, h) \in \gX_{\on}\}$, and the offline behavior policy and a given policy $\pi$ induce measures $\bar{\mu}_h$ and $\bar{d}_h^{\pi}$ on $\gU \times \gA$. Then,  Proposition \ref{prop:BMDP_bound} shows that the offline and online learning complexities are determined by the cardinalities of the induced partitions of the latent state space. This bound is also dependent on $\beta$, but we omit it in the main text for brevity.

\begin{prop}
\label{prop:BMDP_bound}
    In a block MDP, $c_{\off}(\gF, \gX_{\off}) \leq \sup_{\pi }\sup_{(u, a, h) \in \bar{\gX}_{\off}} \frac{\bar{d}_h^{\pi}(u, a)}{\bar{\mu}_h^{\pi}(u, a)}$ and $c_{\on}(\gF, \gX_{\on}, T) = \gO(\max_{h}|\bar{\gX}_{\on, h}| \log(N_{\on}))$ if $\gF$ is Bellman-complete. 
    Then, with probability at least $1-\delta$,
    $$
    \Reg(N_{\on})
    = \tilde{\gO}\left(\inf_{\gX_{\on}, \gX_{\off}} \left(\sqrt{ H^4N_{\on}\left(\frac{N_{\on}}{N_{\off}}\right) \sup_{\pi }\sup_{(u, a, h) \in \bar{\gX}_{\off}} \frac{\bar{d}_h^{\pi}(u, a)}{\bar{\mu}_h^{\pi}(u, a)}}  + \sqrt{H^4N_{\on}\max_{h}|\bar{\gX}_{\on, h}| }\right)\right).
    $$
\end{prop}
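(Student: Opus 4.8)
\textbf{Proof proposal for Proposition \ref{prop:BMDP_bound}.}

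The plan is to establish the two complexity bounds separately and then feed them into Theorem \ref{thm:regret_bound}. The key structural fact I would exploit is that in a block MDP, Bellman-complete value functions are \emph{measurable with respect to the latent state}: since $Q^\star_h$ and, more generally, $\gT_h f_{h+1}$ depend on $(s,a)$ only through $(f^*(s), a)$ (the reward and transition kernels factor through the decoding function), any Bellman error $f_h - \gT_h f_{h+1}$ for $f \in \gF$ decomposes into a latent-measurable part plus a part lying in the kernel of the conditional-expectation operator onto latent states. More carefully, writing $\bar f_h(u,a) = \E_{s \sim q(\cdot\mid u)}[f_h(s,a)]$ for the latent average, one has $\E_{d_h^\pi}[(f_h - \gT_h f_{h+1})\mathbbm{1}_{\gX}] = \E_{\bar d_h^\pi}[(\bar f_h - \gT_h f_{h+1})\mathbbm{1}_{\bar\gX}]$ for any latent-measurable indicator set $\gX$ (and the partition indicators $\mathbbm{1}_{\gX_{\off}}, \mathbbm{1}_{\gX_{\on}}$ are latent-measurable by the way $\bar\gX_{\off}, \bar\gX_{\on}$ are defined), because the inner expectation over the emission $q$ only touches $f_h$. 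This reduces both complexity measures to quantities defined on the finite latent space $\gU \times \gA \times [H]$.

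First I would handle the offline term. By the definition of $c_{\off}(\gF, \gX_{\off})$ and the latent-measurability reduction above, the numerator and denominator both collapse to expectations under $\bar d_h^\pi$ and $\bar \mu_h$ of the squared latent Bellman error restricted to $\bar\gX_{\off}$; a pointwise density-ratio bound (exactly as in Proposition \ref{prop:case_tabular} applied on the latent space) gives $c_{\off}(\gF,\gX_{\off}) \le \sup_\pi \sup_{(u,a,h)\in\bar\gX_{\off}} \bar d_h^\pi(u,a)/\bar\mu_h^\pi(u,a)$. Second, for the online term, I would again push the restricted Bellman errors down to the latent space and then invoke the tabular SEC bound: on the latent MDP the function class $\bar\gF_h - \gT_h\bar\gF_{h+1}$ restricted to $\bar\gX_{\on}$ behaves like a tabular class supported on $\bar\gX_{\on,h}$, so the potential/elliptical-potential argument bounding the SEC by the effective dimension yields $c_{\on}(\gF,\gX_{\on},T) = \gO(\max_h |\bar\gX_{\on,h}|\log N_{\on})$. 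Plugging both into Theorem \ref{thm:regret_bound}, noting $H^4$ carries through and the $SA$-type factors are absorbed into $|\bar\gX_{\on,h}|$ and the density ratio, and taking the infimum over partitions gives the stated regret bound (with the $\beta$ log factors suppressed in the $\tilde\gO$).

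The main obstacle I anticipate is justifying the latent-measurability reduction with full rigor — in particular, being careful that while $\gT_h f_{h+1}$ is automatically latent-measurable, $f_h$ itself need not be, so the correct statement involves replacing $f_h$ by its emission-average $\bar f_h$ and checking that (i) this substitution is exact inside every expectation against an occupancy measure (true since $d_h^\pi(s,a) = \bar d_h^\pi(f^*(s),a)\, q(s\mid f^*(s))$, a consequence of the block structure in Definition \ref{defn:block_MDP}), and (ii) that $\bar f_h - \gT_h f_{h+1}$, as a function on $\gU\times\gA$, still ranges over a class no richer than a tabular class on $\bar\gX_{\on,h}$ when restricted there, so the tabular SEC and concentrability bounds genuinely apply. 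A secondary technical point is that the SEC bound from the tabular case (Proposition \ref{prop:case_tabular}) was stated for the explicit tabular function class $[0,H]^{|\gS||\gA|}$; I would need to observe that its proof only uses finiteness of the support and the $[0,H]$ range of the Bellman errors, both of which transfer to the latent restriction, so no new argument is required — only a remark that the bound is about the induced Bellman-error class, not the primitive $\gF$.
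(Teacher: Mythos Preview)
Your overall plan---bound $c_{\off}$ and $c_{\on}$ by pushing to the latent space and plug into Theorem~\ref{thm:regret_bound}---matches the paper, but there are two technical slips and one genuine difference in route worth flagging.

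For the offline term, your intermediate claim that the numerator and denominator ``collapse to expectations of the squared \emph{latent} Bellman error'' is not correct: by Jensen, $\E_{s\sim q(\cdot\mid u)}[(f_h-\gT_h f_{h+1})^2] \geq (\bar f_h-\gT_h f_{h+1})^2$, with equality only when $f_h$ is itself latent-measurable. What \emph{does} hold (and is what the paper actually uses) is that both numerator and denominator equal $\E_{\bar d_h^\pi}[h]$ and $\E_{\bar\mu_h}[h]$ for the \emph{same} nonnegative function $h(u,a)=\E_{s\sim q(\cdot\mid u)}[(f_h-\gT_h f_{h+1})^2\mathbbm{1}_{\gX_{\off}}]$, and since $h(u,a)>0$ forces $(u,a,h)\in\bar\gX_{\off}$ the pointwise density-ratio bound follows directly---no emission-averaging of $f_h$ is needed. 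Relatedly, your assertion that $\mathbbm{1}_{\gX_{\off}},\mathbbm{1}_{\gX_{\on}}$ are latent-measurable ``by the way $\bar\gX_{\off},\bar\gX_{\on}$ are defined'' is false: $\bar\gX_{\off}$ is the \emph{image} of $\gX_{\off}$ under $f^*$, which does not make $\gX_{\off}$ a preimage (saturated) set.

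For the online term your route genuinely differs from the paper's. You propose to push the whole SEC to the latent space by emission-averaging the Bellman errors and then apply the tabular SEC bound; this can be made to work (Jensen on the denominator points the right way), but it leans on the latent-measurability of $\gX_{\on}$ you just assumed incorrectly, and it requires arguing that the induced latent Bellman-error class is no richer than tabular. The paper instead invokes Lemma~\ref{lem:general_SEC}(1) directly: the restricted SEC is bounded by the coverability-type quantity $\inf_{\mu_h}\sup_\pi \E_{d_h^\pi}[\mathbbm{1}_{\gX_{\on}}]/\E_{\mu_h}[\mathbbm{1}_{\gX_{\on}}]$, which involves only the indicator and not the Bellman-error class at all. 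Choosing $\mu_h$ induced from a uniform $\bar\mu_h$ on $\bar\gX_{\on,h}$ then gives the $|\bar\gX_{\on,h}|$ bound in one line. This is cleaner and sidesteps both the Jensen bookkeeping and any need to reason about the latent function class.
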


\section{A Recipe for General Algorithms}

The analysis and techniques used above are by no means applicable only to DISC-GOLF. 
%Though DISC-GOLF is generally applicable, the bounds can be suboptimal when specialized to a specific function class and the algorithm can be tailored for specific problems to reduce computational complexity. 
In Proposition \ref{prop:general_recipe} below, we provide a general recipe that can be used to analyze how a general online algorithm $\gL$ can benefit from being initialized with access to an offline dataset. 

We define $d_h^{(t)}$ to be the measure over $\gS \times \gA$ induced by running algorithm $\gL$ for $t$ iterations at horizon $h$. This bound depends on a set of error terms $\delta_h^t$, which for example is (1) the Bellman error $f_h^t - \gT_h f_{h+1}^t$ in the case of general function approximation with DISC-GOLF, (2) the sum of upper confidence bonus terms,  estimation errors, and two martingale terms with UCBVI \citep{azar2017minimax} for the tabular setting, and (3) the gap multiplied by the probability each arm is pulled in the bandit case with UCB \citep{auer2003ucb}. We then have the following result below that provides a guarantee for the procedure of ``hybridifying'' general online algorithms by initializing them with offline datasets. We defer the proof of Proposition \ref{prop:general_recipe} to Appendix \ref{app:general_case}.

% \ziping{
    % Give exact form of $\delta_h^t$ for GOLF, Tabular, bandit.
% }

\begin{prop}
    \label{prop:general_recipe}
    Let $\gL$ be a general online learning algorithm that satisfies the following conditions:
    \begin{enumerate} 
        \itemsep0em 
        \item $\gL$ admits the regret decomposition $\Reg_{\gL}(T) \leq \sum_{t = 1}^T \sum_{h = 1}^H \E_{(s, a) \sim d^{(t)}_h}[\delta_h^{t}(s, a)]$ for some collection of random functions\footnote{This is often the Bellman error in the case of MDPs.} $(\delta_h^{t})_{h = 1}^H$ with each $\delta_h^t$ a mapping from $\gX \mapsto \sR$; 
        \item 
        $
            \sum_{t = 1}^T \sum_{h = 1}^H \left(N_{\off} \E_{(s, a) \sim \mu_h}[\delta_h^t(s, a)^2] + \sum_{i = 1}^{t-1} \E_{(s, a) \sim d^{i}_h}[ \delta_h^{t}(s, a)^2]\right) \leq \beta(\delta, H)
        $ w.p. $1-\delta$;
        \item there exists a function $c_{\on}: \gP(\gX) \times \sN$ such that for any $\gX' \subset \gX$, it holds with a probability at least $1-\delta$ that
        $
            \sum_{t = 1}^T \sum_{h = 1}^H \E_{(s, a) \sim d^{(t)}_h}[\delta_h^t(s, a) \mathbbm{1}{(x, a, h) \in \gX'}] = \gO(c_{\on}(\gX', T) H^{\gamma} \beta(\delta, H) T)^{\xi},
        $ for some $\xi \in (0, 1)$, $\gamma \in \mathbb{Z}_{\geq 0}$, and where $\beta:(0, 1) \mapsto \sR$ is some measure of complexity of the algorithm and its dependence on the probability of failure $\delta$;
        % \ziping{Could we simply specify $c_{\on}(\gX')?$}
        \item a coverage measure on any $\gX' \subset \gX$ of $
            c_{\off}(\gX') \coloneqq \sup_{h \in [H]} \sup_{\pi} \frac{\E_{d_h^{\pi}}[\delta_h^t(s, a) \mathbbm{1}(s,a,h \in \gX')]}{\E_{\mu_h}[\delta_h^t(s, a)\mathbbm{1}(s,a,h \in \gX')]} \footnote{We set $0/0$ as 0.}.
        $
    \end{enumerate}
    Then, the algorithm $\gL$ satisfies the following regret bound w.p. at least $1-\delta$:
    $$
        \Reg_{\gL}(T) = \gO\left(\inf_{\gX_{\on}, \gX_{\off}}(c_{\on}(\gX_{\on}, T) \beta(\delta, H)H^\gamma T)^{\xi}  + H \sqrt{\beta(\delta, H) \cdot c_{\off}(\gX_{\off})\cdot \frac{N_{\on}^2}{N_{\off}}}\right).
    $$
\end{prop}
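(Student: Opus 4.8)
The plan is to reuse the cover-based decomposition that drove Theorem \ref{thm:regret_bound}, now with the four hypotheses playing the role of the structural facts proved there for DISC-GOLF. Fix an arbitrary cover $\gX_{\on}\cup\gX_{\off}=\gX$ (the algorithm never sees it). By hypothesis (1), $\Reg_\gL(T)\le\sum_{t=1}^{T}\sum_{h=1}^{H}\E_{d_h^{(t)}}[\delta_h^t]$, and since replacing $\delta_h^t$ by $|\delta_h^t|$ only weakens this bound while leaving the squared quantities in hypotheses (2) and (4) untouched, we may assume $\delta_h^t\ge 0$. Because $\gX_{\on}\cup\gX_{\off}=\gX$ we have the pointwise bound $\mathbbm 1\{(s,a,h)\in\gX\}\le\mathbbm 1_{\gX_{\on}}+\mathbbm 1_{\gX_{\off}}$, hence
\[
\Reg_\gL(T)\ \le\ \underbrace{\sum_{t,h}\E_{d_h^{(t)}}\!\big[\delta_h^t\,\mathbbm 1_{\gX_{\on}}\big]}_{(\mathrm I)}\ +\ \underbrace{\sum_{t,h}\E_{d_h^{(t)}}\!\big[\delta_h^t\,\mathbbm 1_{\gX_{\off}}\big]}_{(\mathrm{II})}.
\]
Term $(\mathrm I)$ is immediate: hypothesis (3) with $\gX'=\gX_{\on}$ gives $(\mathrm I)=\gO\big((c_{\on}(\gX_{\on},T)H^{\gamma}\beta(\delta,H)T)^{\xi}\big)$ on an event of probability at least $1-\delta$, which is already the first term of the claim.

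All the work is in $(\mathrm{II})$, where the offline data enters. The policy run at episode $t$ has occupancy $d_h^{(t)}=d_h^{\pi^{(t)}}$, so Cauchy--Schwarz (using $\E_{d_h^{(t)}}[\mathbbm 1_{\gX_{\off}}]\le 1$) followed by the change of measure supplied by hypothesis (4), applied in its squared form so that it covers the data-dependent $\delta_h^t$, gives for every $t,h$
\[
\E_{d_h^{(t)}}\!\big[\delta_h^t\,\mathbbm 1_{\gX_{\off}}\big]\ \le\ \sqrt{\E_{d_h^{(t)}}\!\big[(\delta_h^t)^2\mathbbm 1_{\gX_{\off}}\big]}\ \le\ \sqrt{c_{\off}(\gX_{\off})\,\E_{\mu_h}\!\big[(\delta_h^t)^2\mathbbm 1_{\gX_{\off}}\big]}\ \le\ \sqrt{c_{\off}(\gX_{\off})\,\E_{\mu_h}\!\big[(\delta_h^t)^2\big]}.
\]
Now hypothesis (2), read as the per-episode guarantee that appending the $N_{\off}$ offline episodes to the buffer forces $N_{\off}\,\E_{\mu_h}[(\delta_h^t)^2]\le\beta(\delta,H)$ (on an event of probability at least $1-\delta$), bounds each of the $TH$ summands by $\sqrt{c_{\off}(\gX_{\off})\beta(\delta,H)/N_{\off}}$, so $(\mathrm{II})\le TH\sqrt{c_{\off}(\gX_{\off})\beta(\delta,H)/N_{\off}}=H\sqrt{\beta(\delta,H)\,c_{\off}(\gX_{\off})\,N_{\on}^2/N_{\off}}$ since $T=N_{\on}$. (If one instead has only the aggregated form $\sum_{t,h}N_{\off}\E_{\mu_h}[(\delta_h^t)^2]\le\beta(\delta,H)$, a single extra Cauchy--Schwarz over the $TH$ terms replaces the per-episode step and yields the smaller bound $H\sqrt{\beta c_{\off}(\gX_{\off})N_{\on}/N_{\off}}$, which is still consistent with the stated $\gO$.)

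Combining the two pieces and union-bounding the two probability-$(1-\delta)$ events (absorbing the factor of two into $\delta$), $\Reg_\gL(T)=\gO\big((c_{\on}(\gX_{\on},T)\beta(\delta,H)H^{\gamma}T)^{\xi}+H\sqrt{\beta(\delta,H)\,c_{\off}(\gX_{\off})\,N_{\on}^2/N_{\off}}\big)$ for the fixed cover; as the cover was arbitrary and unused by $\gL$, the infimum over covers may be taken on the right, giving the proposition. The main obstacle is exactly the last nontrivial display: getting simultaneously the $\sqrt{c_{\off}}$ dependence and the $N_{\on}^2/N_{\off}$ scaling forces one to (a) perform the change of measure on the \emph{second} moment of $\delta_h^t$ restricted to $\gX_{\off}$, which is why $c_{\off}$ must be an all-policy, function-class-uniform coefficient (matching the definition of $c_{\off}(\gF,\gX_{\off})$ rather than a single-policy first-moment ratio), and (b) exploit hypothesis (2) as a variance bound improved by the $N_{\off}$ offline samples -- this $\beta/N_{\off}$, not $\beta$, is precisely what the offline data buys and what keeps the offline term from dominating the $\sqrt{T}$-type online term. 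Arranging the two Cauchy--Schwarz steps so that no spurious $\sqrt{HT}$ or extra power of $c_{\off}$ creeps in is the crux; checking that conditions (1)--(4) actually hold for each concrete instantiation (DISC-GOLF, UCBVI, UCB) is routine but instance-specific and belongs in the respective appendices.
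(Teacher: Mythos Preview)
Your proof is correct and follows essentially the same route as the paper: split the regret over an arbitrary cover, invoke condition (3) directly for the online piece, and handle the offline piece via Cauchy--Schwarz, a change of measure to $\mu_h$ through $c_{\off}$, and the variance bound from condition (2). The only cosmetic difference is that the paper applies Cauchy--Schwarz once over the $(t,h)$ index after multiplying and dividing by the full in-sample second moment $N_{\off}\E_{\mu_h}[\tilde\delta^2]+\sum_{i<t}\E_{d_h^{(i)}}[\tilde\delta^2]$ (mirroring the proof of Theorem~\ref{thm:regret_bound}), whereas you apply Cauchy--Schwarz inside each expectation and then sum; both reach the same bound, and you correctly flag that conditions (2) and (4) must be read in their per-episode and squared-error forms, respectively, to recover the stated $N_{\on}^2/N_{\off}$ scaling---the paper's own proof silently makes the same readings.
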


Informally, Proposition \ref{prop:general_recipe} states that given (1) a regret decomposition over the errors at each timestep, (2) a bound on the in-sample error (or just the error under the behavior policy measure), (3) an online-only regret bound for the original algorithm, and (4) an offline coverage measure, we can provide a similar guarantee to what we showed for DISC-GOLF in Theorem \ref{thm:regret_bound}. We anticipate that one can use this or similar arguments to improve upon the minimax-optimal online-only and offline-only regret bounds when analyzing more specialized algorithms.

\section{Numerical Experiments}

To illustrate the notion that appending the offline dataset to the experience replay buffer can encourage sufficient exploration for the portion of the state-action space that does not have good coverage, we perform two simulation studies in the tabular and linear MDP settings respectively.

\subsection{Forest, Tabular MDP.}
\begin{figure}[t!]
    \centering
    \includegraphics[scale=0.4]{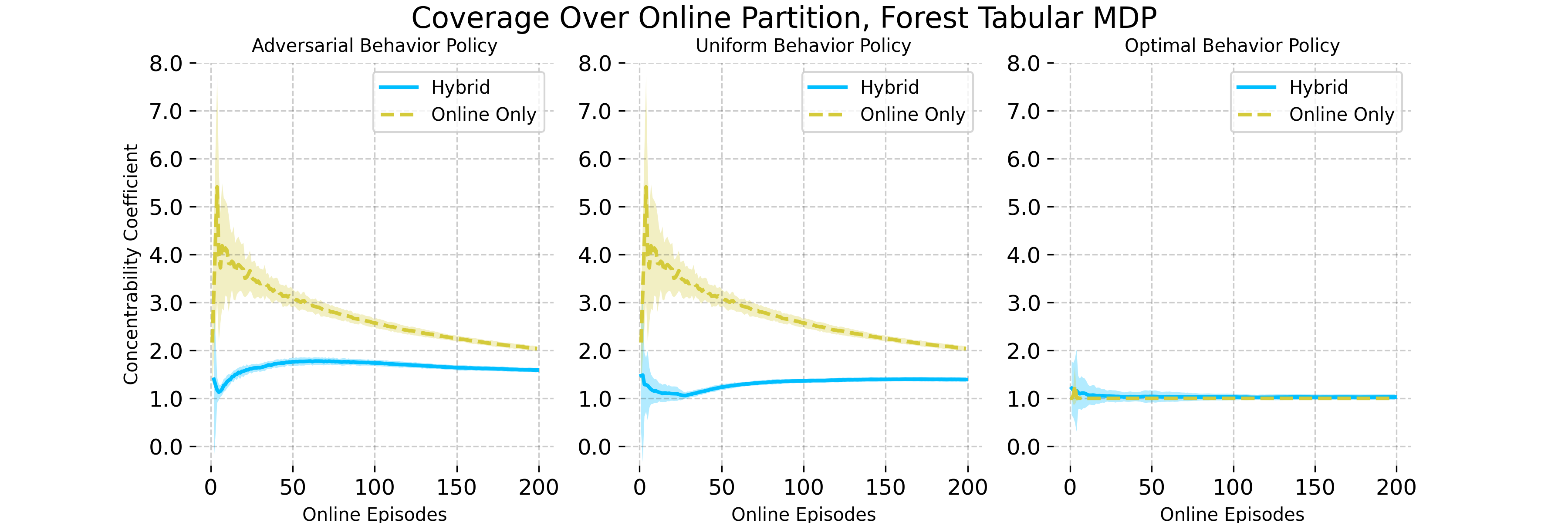}
    \includegraphics[scale=0.4]{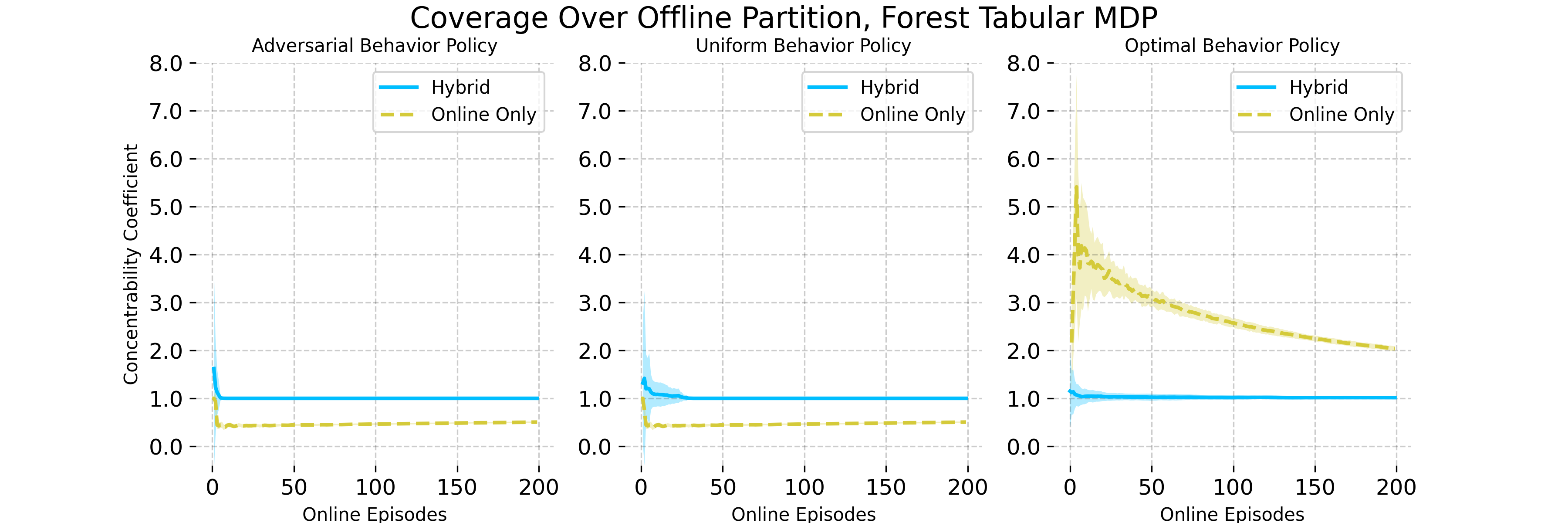}
    \caption{Coverage of the online samples averaged over 30 trials, with $1.96\hat\sigma$ confidence intervals. Hybrid RL explores more of the online partition and less of the offline partition than online RL when the behavior policy is poor, and vice-versa when the behavior policy is good. Lower is better.}
    \label{fig:tabular-coverage}
\end{figure}

We used a simple forest management simulator from the \texttt{pymdptoolbox} package of \cite{pymdptoolbox}. This environment has $4$ states and $2$ actions, and we used a horizon of $20$ years. Every year, the agent can choose to wait and let the forest grow, earning a reward of $4$ if the forest is $3$ years old and $0$ otherwise, or cut the forest down, earning a reward of $1$ if the forest is between $1-2$ years old, $2$ if the forest is $3$ years old, and $0$ otherwise. The forest burns down with $0.1$ probability each year (making it $0$ years old). 

We examine how an optimistic model-based algorithm, UCBVI \citep{azar2017minimax}, behaves when warm-started with an offline dataset. We considered three behavior policies -- adversarial, uniform, and optimal. The adversarial behavior policy does the opposite of the optimal policy $60\%$ of the time, and takes a random action $40\%$ of the time. Each offline dataset consisted of $100$ trajectories. The offline partition was chosen to be the state-action pairs with occupancy at least $1/SA$, and the online partition was defined as its complement. In Figure \ref{fig:tabular-coverage}, we plot the full and partial single-policy concentrability coefficients from running UCBVI on each partition and for each behavior policy. Between this and Figure \ref{fig:tabular-visits} in Appendix \ref{app:experiments}, which depicts the cumulative visits to each partition, we see that when the behavior policy is poor or middling, hybrid RL explores more of the online partition to fill in the gaps in the offline dataset than online RL does. However, when the behavior policy is optimal, hybrid RL sticks to the online partition due to the warm-started model estimation.

\subsection{Tetris, Linear MDP.}

\begin{figure}[t!]
    \centering
    \includegraphics[scale=0.4]{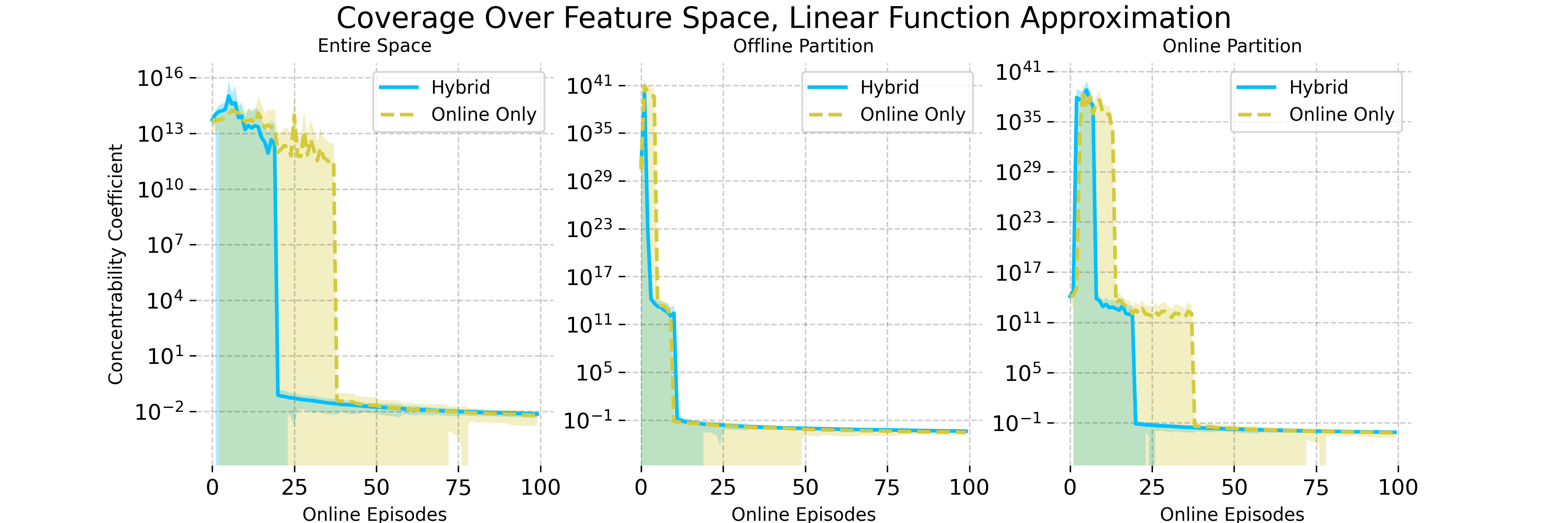}
    \caption{Plot of the full and partial all-policy concentrability coefficients of the online samples from $100$ online episodes. The solid line represents the mean over $30$ trials, and the shaded areas represent confidence intervals generated by $1.96$ times the sample standard deviation. We see that hybrid RL takes fewer online episodes than online-only RL to achieve a lower concentrability coefficient.}
    \label{fig:linear-coverage}
\end{figure}

In another experiment, we consider a scaled-down version of Tetris with pieces of shape at most $2\times2$, where the game board has a width of $6$. The agent can take four actions, corresponding to the degree of rotation in $90$ degree intervals, at each timestep. The reward is the negative of any additional increase in the height of the stack beyond $2$. We examine the extent to which an optimistic RL algorithm, LSVI-UCB from \cite{jin2020provably}, explores the feature space more effectively when initialized with an offline dataset of 200 trajectories of length 40 from a uniform behavior policy. 

Due to combinatorial blowup, this environment is rather difficult to explore. We therefore chose to focus on the portion of the environment that was covered by the uniform behavior policy within the $8000$ simulated timesteps in the offline dataset. This was accomplished through projecting the $640$-dimensional one-hot state-action encoding into a 60-dimensional subspace estimated through performing SVD on the offline dataset. The offline partition was chosen to be the span of the top $5$ eigenvectors, while the online partition was the span of the remaining 55. Without the projection, the results are qualitatively similar to what we have observed, except with concentrability coefficients that are orders of magnitudes higher. 

In Figure \ref{fig:linear-coverage}, we plot the all-policy concentrability coefficients from $n=1,...,N_{\on}$, given by the largest, $k$-th largest, and $d-k$-th largest eigenvalues of the data covariance matrix and its projections onto the offline and online partitions respectively. We see that the concentrability coefficients on the entire space, as well as the offline and online partitions, decrease much faster with the hybrid algorithm than that of the online-only algorithm. This further confirms that an online algorithm initialized with a precollected offline dataset can explore more effectively.

\section{Conclusion and Discussion}

We have answered through theoretical results and numerical simulations that simply appending the offline dataset to the experience replay buffer can (1) lead to an improvement when the offline dataset is of poor quality, and (2) encourage sufficient exploration for the portion of the state-action space without good coverage. This yields a general recipe for modifying existing online algorithms to incorporate offline data, and we propose DISC-GOLF, a modification of an existing optimistic online algorithm, as an example, with promising theoretical guarantees demonstrating provable gains over both offline-only and online-only learning.

\vspace{-3mm}
\paragraph{Limitations and Future Work.} 
Due to our desire to work with the simple procedure of appending the offline dataset to the experience replay buffer with general function approximation, our regret bound depends on partial all-policy concentrability. This is not bad, as the best partial all-policy concentrability coefficient is always finite (as we can always take $\gX_{\off}=\emptyset$) even when the single-policy concentrability coefficient is unbounded. Still, improving this to a guarantee based on partial single-policy concentrability would be valuable.

As GOLF, and therefore DISC-GOLF, uses the squared Bellman error, we (1) require completeness \citep{xie2022role}, and (2) incur a total $H^4$ dependence before any additional penalties from the log-covering number of the function class.\footnote{\cite{xie2022role} work with Q-functions bounded in $[0,1]$ instead of $[0,H]$, so their bound depends on $H^2$.} We and \cite{xie2022role} use this instead of the average Bellman error to facilitate change-of-measure arguments. If one could work with the average Bellman error without a change-of-measure, one could potentially reduce the dependence to $H^3$ while only requiring realizability, but it is not clear whether this can be accomplished. 

Practical and computationally tractable adaptations of DISC-GOLF can be developed in the same sense as \citep{cheng2022adversarially, nakamoto2023calql}, including approaches to optimism in deep RL such as the optimistic actor-critic of \cite{ciosek2019better}. One could extend the theoretical analyses in this paper to practical algorithms in deep RL. 

Hybrid RL poses a unique opportunity to bypass the pitfalls of offline reinforcement learning. We address the issue of coverage in this work, but strategically collected online data may also help to solve other pertinent issues in offline RL such as distribution shift \citep{song2023hybrid, cheng2022adversarially, kumar2020conservative}, or confounding and partial observability \citep{wang2020provably, kausik2023offline, brunssmith2023robust, lu2023pessimism}. 

Finally, while DISC-GOLF uses optimistic online exploration, previous work and our general recipe in Proposition \ref{prop:general_recipe} shows it is possible to be pessimistic \citep{nakamoto2023calql}, or neither \citep{song2023hybrid}. Further analysis on the relative merits of each, or even switching between them as \cite{moskovitz2022tactical} do, is welcomed.

% \subsubsection*{Acknowledgments}
% \label{sec:ack}
% Use unnumbered third level headings for the acknowledgments. All acknowledgments, including those to funding agencies, go at the end of the paper. Only add this information once your submission is accepted and deanonymized. 

%%%%%%%%%%%%%%%%%%%%%%%%%%%%%%%%%%%%%%%%%%%%%%%%%%%%%%%%%%%%%%%%
%% Bibliography
%%%%%%%%%%%%%%%%%%%%%%%%%%%%%%%%%%%%%%%%%%%%%%%%%%%%%%%%%%%%%%%%
\bibliography{main}
\bibliographystyle{apalike}

\newpage

%%%%%%%%%%%%%%%%%%%%%%%%%%%%%%%%%%%%%%%%%%%%%%%%%%%%%%%%%%%%%%%%
%% Appendices
%%%%%%%%%%%%%%%%%%%%%%%%%%%%%%%%%%%%%%%%%%%%%%%%%%%%%%%%%%%%%%%%
\appendix

\section{Proof of Theorem 1}
\label{app:proof_thm1}

{\bf Theorem \ref{thm:regret_bound}.}{\it \quad 
    Let $\gX_{\off}, \gX_{\on}$ be an arbitrary partition over $\gX = \gS \times \gA \times [H]$. Note that this partition induces the restricted function classes on the Bellman error $\gG_{\off}$ and $\gG_{\on}$.
    Algorithm \ref{alg:GOLF} satisfies the following regret bound with probability at least $1-\delta$:
    $$
        \Reg(N_{\on})
        = \gO\left(\inf_{\gX_{\on}, \gX_{\off}} \left(\sqrt{\beta H^4\frac{N_{\on}^2}{N_{\off}} c_{\off}(\gF, \gX_{\off})}  + \sqrt{\beta H^4 N_{\on} c_{\on}(\gF, \gX_{\on}, N_{\on})}\right)\right),
    $$
    where 
    $\beta = c_1\left(\log \left[N H \mathcal{N}_{\mathcal{F}}(1/N) / \delta\right]\right)$ for some constants $c_1$ with $N = N_{\on} + N_{\off}$.
}

% Recall the definition of the restricted function class $\gG_{\off}$, $\gG_{\on}$ for Bellman error.

% {\bf Definition \ref{defn:restricted_f} }(Restricted Function Class){
% \it
%     For any function $g: \gY \mapsto \sR$, we denote by $g_{\gY'}(y) = g(y) \mathbbm{1}_{y \notin \gY'}$ for any $\gY' \subseteq \gY$. Consider the restrictions of all functions $g$ in the function class $\gG_h$ onto a subset $\gX' \subset \gX = \gS \times \gA \times [H]$ defined as
%     $$
%         \gG_{h}(\gX') \coloneqq \{g_{\gX'_h}: g \in \gG_h\}.
%     $$
%     For a partition $\gX_{\off} \cup \gX_{\on} = \gX$, we let $\gG_{\off} = \{\gG_h(\gX_{\off})\}_{h = 1}^H$ and $\gG_{\on} = \{\gG_h(\gX_{\on})\}_{h = 1}^H$.
% }

Let $\gX_{\on}, \gX_{\off}$ be an arbitrary (not necessarily disjoint) partition of $\gS \times \gA \times [H]$. We will bound the regret for an arbitrary partition, allowing us to take the infimum over partitions for the final regret bound.

We first address some notation. Recall that we defined the Bellman error by $\gE f_h = f_h - \gT_h f_{h+1}$.
Following the proof in \citet{xie2022role}, we use the same shorthand for the Bellman error $\delta_h^{(t)} := f_h^{(t)} - \gT_h f_{h+1}^{(t)}$, and the cumulative in-sample occupancy measures (without and with the offline dataset) by
$$
    \bar{d}_h^{(t)} \coloneqq \sum_{i = 1}^{t-1} d^{(i)}_h, \;\; \text{ and } \quad \tilde{d}_h^{(t)} \coloneqq  N_{\off} \mu_{h} +  \sum_{i = 1}^{t-1} d_h^{(i)},
$$
where $d^{(t)}_h = d^{f^{(t)}}_h$ is the occupancy measure induced by running the greedy policy w.r.t $f^{(t)}$. We further write 
$$
    \E_{\bar d_h^{(t)}}[f] = \sum_{i = 1}^t \E_{d_h^{(i)}}[f], \text{ and } \E_{\tilde{ d}_h^{(t)}}[f] = \sum_{i = 1}^t \E_{d_h^{(i)}}[f] + N_{\off} \E_{\mu_{h}}[f].
$$

We require the following lemma to bound the in-sample Bellman error. This is very similar to Lemma 15 of \cite{xie2022role}, except that this incorporates the offline data as well. Note that \cite{xie2022role} work with Q-functions bounded in $[0,1]$ instead of $[0,H]$, so their bound depends on $\beta$ instead of $H^2\beta$. The proof can be found in Appendix \ref{lem:good_event_proof}

\begin{lem}
\label{lem:good_event}
With a probability at least $1-\delta$, for all $t \in [N_{\on}]$, we have that for all $h=1,...,H$
\begin{equation*}
    \text{(i) } Q^*_h \in \mathcal{F}^{(t)}_h, \text{ (ii) } \E_{\tilde{d}_h^{(t)}}[(\delta_h^{(t)})^2] \leq O(H^2\beta),
\end{equation*}
by choosing $\beta = c_1\left(\log \left[N H \mathcal{N}_{\mathcal{F}}(\rho) / \delta\right]+ N \rho\right)$ for some constant $c_1$. 
\end{lem}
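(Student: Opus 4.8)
The plan is to build one good event of probability at least $1-\delta$ on which both (i) and (ii) hold for all $t\in[N_{\on}]$ and $h\in[H]$, adapting the least-squares generalization argument behind Lemma~15 of \citet{xie2022role} so that the offline transitions are folded into the regression targets. The workhorse is a variance-aware concentration inequality. Fix $h$ and $(g_h,f_{h+1})\in\gF_h\times\gF_{h+1}$, and observe that the regression response $r+\max_{a'}f_{h+1}(s',a')$ has conditional mean $\gT_h f_{h+1}(s,a)$, so the per-transition increment $(g_h(s,a)-r-\max_{a'}f_{h+1}(s',a'))^2-(\gT_h f_{h+1}(s,a)-r-\max_{a'}f_{h+1}(s',a'))^2$ has conditional mean $(g_h-\gT_h f_{h+1})^2(s,a)$ and conditional variance $O(H^2)\cdot(g_h-\gT_h f_{h+1})^2(s,a)$, since rewards and functions take values in $[0,H]$. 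First I would apply Freedman's inequality to this martingale-difference sequence --- placing the $N_{\off}$ i.i.d.\ offline transitions from $\mu$ and the $N_{\on}$ adaptively collected online transitions on a single filtration --- union bound over a $\rho$-cover of $\gF$, over $h\in[H]$, and over the $N$ data prefixes, and use an AM--GM step to self-reference away half of the variance term. This yields: with probability $1-\delta$, for all cover pairs, all $h$, and all $t$,
$$
\tfrac12 S_h^{(t)}(g_h,f_{h+1})-cH^2\beta\;\le\;\mathcal{L}_h^{(t)}(g_h,f_{h+1})-\mathcal{L}_h^{(t)}(\gT_h f_{h+1},f_{h+1})\;\le\;\tfrac32 S_h^{(t)}(g_h,f_{h+1})+cH^2\beta,
$$
where $S_h^{(t)}(g_h,f_{h+1}):=\sum_{(s,a)\in\mathcal{D}_h^{(t)}\cup\mathcal{D}_{\off,h}}(g_h-\gT_h f_{h+1})^2(s,a)$ and $\beta=c_1(\log[NH\mathcal{N}_{\mathcal{F}}(\rho)/\delta]+N\rho)$; the $N\rho$ summand is the cost of passing from the cover to all of $\gF$ (the squared loss is $O(H)$-Lipschitz on $[0,H]$), and $\rho=1/N$ recovers the stated $\beta$. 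I would establish, in the same step, the analogous ``in-sample versus population'' sandwich $\frac12\sum_{\rm pop}(g_h-\gT_h g_{h+1})^2-cH^2\beta\le S_h^{(t)}(g_h,g_{h+1})\le\frac32\sum_{\rm pop}(g_h-\gT_h g_{h+1})^2+cH^2\beta$, where the population sum is taken under the counting measure $\tilde{d}_h^{(t)}=N_{\off}\mu_h+\sum_{i=1}^{t-1}d_h^{(i)}$ that the dataset $\mathcal{D}_h^{(t-1)}\cup\mathcal{D}_{\off,h}$ realizes in expectation.

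On this event, conclusion (i) follows by taking $f_{h+1}=Q^*_{h+1}$, which lies in $\gF_{h+1}$ by realizability (a consequence of Bellman completeness). Since the optimality equation gives $\gT_h Q^*_{h+1}=Q^*_h$, applying the lower inequality to an arbitrary $g'_h\in\gF_h$ with this choice of $f_{h+1}$ shows $\min_{g'_h\in\gF_h}\mathcal{L}_h^{(t)}(g'_h,Q^*_{h+1})\ge\mathcal{L}_h^{(t)}(Q^*_h,Q^*_{h+1})-O(H^2\beta)$, hence $Q^*_h\in\mathcal{F}^{(t)}_h$ provided the confidence width in Algorithm~\ref{alg:GOLF} is a suitable multiple of $H^2\beta$ (the extra $H^2$, absent from \citet{xie2022role}, stems from the $[0,H]$ normalization). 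For (ii), fix $f\in\mathcal{F}^{(t)}$; Bellman completeness makes $\gT_h f_{h+1}$ an admissible competitor in $\min_{g'_h}\mathcal{L}_h^{(t)}(g'_h,f_{h+1})$, so the confidence-set constraint (with width of order $H^2\beta$) forces $\mathcal{L}_h^{(t)}(f_h,f_{h+1})-\mathcal{L}_h^{(t)}(\gT_h f_{h+1},f_{h+1})\le O(H^2\beta)$, and the lower inequality then gives the in-sample bound $S_h^{(t)}(f_h,f_{h+1})=O(H^2\beta)$. Applying this at $t-1$ to $f=f^{(t)}\in\mathcal{F}^{(t-1)}$ bounds $\sum_{(s,a)\in\mathcal{D}_h^{(t-1)}\cup\mathcal{D}_{\off,h}}(\delta_h^{(t)})^2(s,a)=O(H^2\beta)$, and the in-sample-versus-population sandwich upgrades this to $\E_{\tilde{d}_h^{(t)}}[(\delta_h^{(t)})^2]=O(H^2\beta)$, which is exactly (ii).

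The step I expect to be the main obstacle is the variance-aware martingale analysis with adaptively collected data: one must verify that the conditional variance of each increment is controlled by the very quantity $(g_h-\gT_h f_{h+1})^2(s,a)$ appearing on the right-hand side, so that Freedman's inequality can be rearranged self-referentially to absorb a constant fraction of the in-sample deviation and leave a clean additive $O(H^2\beta)$; getting the exact $H$-powers right under the $[0,H]$ normalization (where \citet{xie2022role} work with $[0,1]$) is where bookkeeping care is needed, though it is routine. The offline dataset itself introduces no new difficulty in this lemma: being i.i.d.\ from $\mu$ it sits comfortably in the same filtration and only adds the mass $N_{\off}\mu_h$ to the in-sample measure. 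The genuinely new consequence of including it --- the $N_{\on}^2/N_{\off}$ scaling of the offline regret term --- emerges later, when this in-sample guarantee is combined with the change-of-measure and partition arguments in the proof of Theorem~\ref{thm:regret_bound}, not here.
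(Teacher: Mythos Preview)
Your proposal is correct and, at its core, follows the same path as the paper: place the $N_{\off}$ offline transitions and the $N_{\on}$ online transitions on a single filtration of length $N=N_{\off}+N_{\on}$, then run the standard GOLF least-squares concentration argument (Freedman plus covering plus completeness) over that combined sequence. The paper's own proof simply makes this reduction explicit and then invokes Lemma~44 of \citet{jin2021bellmaneluder} as a black box---defining auxiliary functions $\bar f^{\tau}$ for $\tau\in[N]$ (arbitrary confidence-set elements for $\tau\le N_{\off}$, then $f^{(\tau-N_{\off})}$ thereafter) so that the existing lemma applies verbatim to the length-$N$ sequence---whereas you open the box and re-derive the Freedman sandwich directly. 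Both routes yield the same conclusion for the same reason; yours is more self-contained and makes the $H^2$ scaling under the $[0,H]$ normalization transparent, while the paper's is a one-line citation once the pre-appending trick is stated.
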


With this, we can begin the proof. By a regret decomposition (Lemma 3 \citep{xie2022role}), the total regret can be upper bounded by
\begin{align}
    \Reg(T) \leq \sum_{t = 1}^T \sum_h \E_{(s, a) \sim d_h^{(t)}} [\delta_h^{(t)}(s, a)]. \label{equ:reg_decp}
\end{align}

We further decompose this decomposition (\ref{equ:reg_decp}) by the partition on $\gX$:
$$
    \Reg(T) \leq \sum_{t = 1}^T \sum_h \E_{(s, a) \sim d_h^{(t)}} [\delta_h^{(t)}(s, a) \mathbbm{1}_{(s, a, h) \in \gX_{\on}}] + \sum_{t = 1}^T \sum_h \E_{(s, a) \sim d_h^{(t)}} [\delta_h^{(t)}(s, a) \mathbbm{1}_{(s, a, h) \in \gX_{\off}}],
$$
where we call the first term the online term and the second term the offline term. 

We will bound each term individually. The bound on the online term follows from an argument from \cite{xie2022role}, while the bound on the offline term can be obtained in a reasonably similar way, from applying Cauchy-Schwarz, performing a change of measure, and finally bounding the result by the partial concentrability coefficient. 

Going forward, we will adopt the shorthand $\delta_{h, \on}^{(t)}(s, a) \coloneqq \delta_{h}^{(t)}(s, a) \mathbbm{1}_{(s, a, h) \in \gX_{\on}}$ and $\delta_{h, \off}^{(t)}(s, a) \coloneqq \delta_{h}^{(t)}(s, a) \mathbbm{1}_{(s, a, h) \in \gX_{\off}}$. 

As mentioned above, we upper bound the the first term on the RHS in the same way \citet{xie2022role} do for the online exploration, with the SEC:
\begin{equation*}
    \begin{aligned}
    &\sum_{t = 1}^T \sum_h \E_{d_h^{(t)}} [\delta_{h, \on}^{(t)}]\\
    =& \sum_{t = 1}^T \sum_h \E_{d_h^{(t)}} \left[\delta_{h, \on}^{(t)} \cdot  \left(\frac{H^2 \vee \sum_{i=1}^{t-1}\E_{d_h^{(i)}}[(\delta_{h, \on}^{(t)})^2]}{H^2 \vee \sum_{i=1}^{t-1}\E_{d_h^{(i)}}[(\delta_{h, \on}^{(t)})^2]}\right)^{1 / 2} \right]\\
    \leq& \sqrt{\sum_{t = 1}^T \sum_h \frac{\E_{d_h^{(i)}}[\delta_{h, \on}^{(t)}(s, a)]^2}{H^2 \vee \sum_{i=1}^{t-1}\E_{d_h^{(i)}}[(\delta_{h, \on}^{(t)})^2]} } 
    \sqrt{\sum_{t = 1}^T \sum_{h}\left(H^2 \vee \sum_{i=1}^{t-1}\E_{d_h^{(i)}}[(\delta_{h, \on}^{(t)})^2]\right)} \\
    & \leq \sqrt{H \cdot c_{\on}(\gF, \gX_{\on}, T)} \cdot \sqrt{HT \cdot H^2\beta} \\
    \lesssim& H^2\sqrt{\beta \cdot c_{\on}(\gF, \gX_{\on}, T) \cdot T}.
    % =& O(\sigma(\gF_{\on})H\sqrt{c_{\on}(\gF, T)\gamma N_{\on}}),
\end{aligned}
\end{equation*}
The second-last line follows from bounding the term on the left of the third line by the SEC, and bounding the term on the right with Lemma \ref{lem:good_event}. 
% \ziping{some details have to be filled.}

We bound the regret incurred by state and actions in $\gX_{\off}$ directly by the offline data. We first perform a similar Cauchy-Schwarz and change of measure argument to before:
\begin{align*}
    &\sum_{t = 1}^T \sum_h \E_{d_h^{(t)}} [\delta_{h, \off}^{(t)}]\\
    =& \sum_{t = 1}^T \sum_h \E_{(s, a) \sim d_h^{(t)}}\left[\delta_{h, \off}^{(t)}(s, a)\left(\frac{ N_{\off} \E_{\mu_{h}}[(\delta_{h, \off}^{(t)})^2] + \sum_{i=1}^{t-1}\E_{d_h^{(i)}}[(\delta_{h, \off}^{(t)})^2]}{ N_{\off} \E_{\mu_{h}}[(\delta_{h, \off}^{(t)})^2] + \sum_{i=1}^{t-1}\E_{d_h^{(i)}}[(\delta_{h, \off}^{(t)})^2]}\right)^{1 / 2}\right]  \\
    \leq& \sqrt{\sum_{t = 1}^T \sum_h \E_{(s, a) \sim d_h^{(t)}} \frac{\left( \delta_{h, \off}^{(t)}(s, a)\right)^2}{ N_{\off} \E_{\mu_{h}}[(\delta_{h, \off}^{(t)})^2] + \sum_{i=1}^{t-1}\E_{d_h^{(i)}}[(\delta_{h, \off}^{(t)})^2]}} \\&\qquad \cdot\sqrt{\sum_{h,t}  \left(  N_{\off} \E_{\mu_{h}}[(\delta_{h, \off}^{(t)})^2] + \sum_{i=1}^{t-1}\E_{d_h^{(i)}}[(\delta_{h, \off}^{(t)})^2]\right)}.
\end{align*}

We can bound the first term with the partial all-policy concentrability coefficient. As for any $h, t$ it holds that
$$\frac{\E_{d_h^{(t)}}[( \delta_{h, \off}^{(t)}(s, a))^2]}{ N_{\off} \E_{\mu_{h}}[(\delta_{h, \off}^{(t)})^2] + \sum_{i=1}^{t-1}\E_{d_h^{(i)}}[(\delta_{h, \off}^{(t)})^2]} \leq \frac{\E_{d_h^{(t)}}[(\delta_{h, \off}^{(t)}(s, a))^2]}{ N_{\off} \E_{\mu_{h}}[(\delta_{h, \off}^{(t)})^2]},$$
this reduces to the partial all-policy concentrability coefficient.

% \begin{align*}
%     & \sqrt{\sum_{h,t : f^{(t)} \in \gF_{\off}}^{H,T} \E_{(s, a) \sim d_h^{(t)}} \frac{\left( \delta_h^{(t)}(s, a)\right)^2}{ N_{\off} \E_{\mu}[(\delta_h^{(t)})^2] + \sum_{i=1}^{t-1}\E_{d_h^{(i)}}[(\delta_i^{(t)})^2]}}  \\
%     & \leq \sqrt{\sum_{h,t : f^{(t)} \in \gF_{\off}}^{H,T} \frac{\E_{d_h^{(t)}}[( \delta_h^{(t)})^2]}{ N_{\off} \E_{\mu}[(\delta_h^{(t)})^2]}}  \\
%     &= \sqrt{\sum_{h=1}^H \frac{\sum_{t : f^{(t)} \in \gF_{\off}}^{T} \E_{d_h^{(t)}}[\delta_f^2]}{ N_{\off} \E_{\mu}[(\delta_h^{(t)})^2]}}  \\
%     &\leq \sqrt{\sum_{h=1}^H \sup_{f \in \gF_{\off}} \frac{\sum_{t : f^{(t)} \in \gF_{\off}}^{T} \E_{d_h^{(t)}}[ \delta_f^2]}{ N_{\off} \E_{\mu}[\delta_f^2]}}  \\
%     &\leq \sqrt{H \sup_{f \in \gF_{\off}} \frac{ \sum_{t=1}^T \E_{d_h^{(t)}}[\delta_f^2]}{ N_{\off} \E_{\mu}[\delta_f^2]}} \\
%     &= \sqrt{H \sup_{f \in \gF_{\off}} \frac{ \sum_{t=1}^T \E_{\bar{d}}[\delta_f^2]}{ N_{\off} \E_{\mu}[\delta_f^2]}} \\
%     &\lesssim \sqrt{\sigma(\gF_{\off})H N_{\on} \sup_{f \in \gF_{\off}} \frac{ \E_{\bar{d}}[\delta_f^2]}{ N_{\off} \E_{\mu}[\delta_f^2]}} \\
%     %& \leq \sqrt{H \sum_{t : f^{(t)} \in \gF_{\off}}^{T} \frac{\sup_{\pi \in \Pi} \sup_{f \in \gF_{\off}} \frac{\E_{d^{\pi}}[\delta_h^2]}{\E_{\mu}[\delta_h^2]}}{ N_{\off} }  } \\
%     %&= \sqrt{\sum_{t : f^{(t)} \in \gF_{\off}}^{T} \frac{c_{\off}(\gF)}{ N_{\off} }  } \\
%     &\leq \sqrt{\sigma(\gF_{\off})H \cdot N_{\on} \frac{c_{\off}(\gF, \bar{d})}{ N_{\off} }  }
% \end{align*}

\begin{align*}
    & \sqrt{\sum_{t = 1}^T \sum_h \E_{(s, a) \sim d_h^{(t)}} \frac{\left( \delta_{h, \off}^{(t)}(s, a)\right)^2}{ N_{\off} \E_{\mu_{h}}[(\delta_{h, \off}^{(t)})^2] + \sum_{i=1}^{t-1}\E_{d_h^{(i)}}[(\delta_{h, \off}^{(t)})^2]}}  \\
    \leq& \sqrt{\sum_{h,t}^{H,T} \frac{\E_{d_h^{(t)}}[( \delta_{h, \off}^{(t)})^2]}{ N_{\off} \E_{\mu_{h}}[(\delta_{h, \off}^{(t)})^2]}}  \\
    \leq& \sqrt{\frac{HT}{N_{\off}} \sup_h \sup_{f \in \gF_{h}} \sup_{\pi}\frac{\E_{d_h^{\pi}}[(f_h-\gT_h f_{h+1})^2 \mathbbm{1}_{(\cdot, h) \in \gX_{\off}}]}{\E_{\mu_{h}}[(f_h-\gT_h f_{h+1})^2 \mathbbm{1}_{(\cdot, h) \in \gX_{\off}}]}}  \\
    \leq& \sqrt{\frac{HT}{N_{\off}} c_{\off}(\gF, \gX_{\off})}
\end{align*}

To bound the second term, we use the in-sample regret bound from Lemma \ref{lem:good_event}. We then obtain:
\begin{align*}
    &\sqrt{\sum_{h,t}  \left(  N_{\off} \E_{\mu_{h}}[(\delta_{h, \off}^{(t)})^2] + \sum_{i=1}^{t-1}\E_{d_h^{(i)}}[(\delta_{h, \off}^{(t)})^2]\right)} \\
    \leq& \sqrt{\sum_{h,t}  \left(  N_{\off} \E_{\mu_{h}}[(\delta_{h}^{(t)})^2] + \sum_{i=1}^{t-1}\E_{d_h^{(i)}}[(\delta_{h}^{(t)})^2]\right)}\\
    =& \sqrt{\sum_{h,t}^{H,T} \E_{\tilde{d}_h^{(t)}}[(\delta_h^{(t)})^2]}  \\
    \lesssim& H\sqrt{H T \beta}.
\end{align*}

Putting it all together, 
\begin{align*}
    \Reg(T) 
    &\leq \sum_{t = 1}^T \sum_h \E_{(s, a) \sim d_h^{(t)}} [\delta_h^{(t)}(s, a) \mathbbm{1}_{(s, a, h) \in \gX_{\on}}] + \sum_{t = 1}^T \sum_h \E_{(s, a) \sim d_h^{(t)}} [\delta_h^{(t)}(s, a) \mathbbm{1}_{(s, a, h) \in \gX_{\off}}] \\
    &\lesssim H^2\sqrt{\beta \cdot c_{\on}(\gF, \gX_{\on}, T) \cdot T} + \sqrt{H \cdot c_{\on}(\gF, \gX_{\on}, T)} \cdot \sqrt{HT \cdot H^2\beta}.
\end{align*}

We therefore have the following regret bound:
\begin{align*}
    \Reg(T)
    &= \gO\left(\sqrt{\beta H^4T^2 c_{\off}(\gF, \gX_{\off})/ N_{\off}}  + \sqrt{\beta H^4 T c_{\on}(\gF, \gX_{\on}, T)}\right),
\end{align*}
where we set $\beta = c_1\left(\log \left[N H \mathcal{N}_{\mathcal{F}}(\rho) / \delta\right]+ N \rho\right)$  for some constants $c_1$. Finally, we choose $\rho$ to be $1/N$ so $N\rho$ becomes a constant, to obtain our result.

% \begin{align*}
%     \sum_{h,t}\E_{d_h^{(t)}}[(\delta_{h,\off}^{(t)})^2]
%     &\leq \sum_{h,t} \sup_{g_h \in \gG_{\off,h}} \E_{d_h^{(t)}}[g_h^2] \\
%     &\leq T \sup_h \sup_{g_h \in \gG_{\off,h}} \E_{\bar{d}_h}[g_h^2]
% \end{align*}

\newpage
\section{Proofs About The SEC From \cite{xie2022role}}

We first prove a general result, which will be used for various case studies. \cite{xie2022role} have shown that SEC can be bounded by the Distributional-Eluder dimension (Definition \ref{defn:DE_dim}).

\begin{defn}
    \label{defn:DE_dim}
    The Distributional-Eluder dimension $dim_{\operatorname{DE}}(\gG, \sD, \epsilon)$ is the largest $n \in \sN$, such that there exist sequences $\{d^{(1)}, \dots, d^{n}\} \subset {\sD}$ and $\{g^{(1), \dots, g^{(n)}}\}$ such that for all $t \in [n]$, 
    $$        \left|\mathbb{E}_{d^{(t)}}\left[g^{(t)}\right]\right|>\varepsilon^{(t)}, \quad \text{and} \quad \sqrt{\sum_{i=1}^{t-1}\left(\mathbb{E}_{d^{(i)}}\left[g^{(t)}\right]\right)^2} \leq \varepsilon^{(t)},
    $$
    for $\epsilon^{(1)}, \dots, \epsilon^{(n)} \geq \epsilon$. 
\end{defn}

\begin{lem}[Modified from Proposition 13 and 14 \citep{xie2022role}] 
\label{lem:general_SEC}
When we restrict the SEC to the online partition $\gX_{\on}$ to obtain the online complexity measure $c_{\on}(\gF, \gX_{\on}, T)$ on the online partition of $\gX = \gS \times \gA \times [H]$, we have that:
\begin{align*}
    &(1) \quad \quad c_{\on}(\gF, \gX_{\on}, T) \lesssim \log(T) \max_{h} \inf_{\mu_h \in \Delta(\gS \times \gA)} \sup_{\pi} \frac{\E_{(s, a) \sim d_h^{\pi}}[\mathbbm{1}_{(s, a, h) \in \gX_{\on}}]}{\E_{(s, a) \sim \mu_h^{\pi}}[\mathbbm{1}_{(s, a, h) \in \gX_{\on}}]} \\
    &(2) \quad \quad c_{\on}(\gF, \gX_{\on}, T) \lesssim \inf_{\epsilon > 0}\{\epsilon^2T + \max_h \dim_{\operatorname{DE}}((\gF_h - \gT_h \gF_{h+1})\mathbbm{1}_{(\cdot, h) \in \gX_{\on}}, \sD_h, \epsilon)\}\log(T),
\end{align*}
where $(\gF_h - \gT_h \gF_{h+1})\mathbbm{1}_{(\cdot, h) \in \gX_{\on}} \coloneqq \{(s, a) \mapsto (f_h(s, a) - \gT_h f_{h+1}(s, a)) \mathbbm{1}_{(s, a, h) \in \gX_{\on}}: f_h \in \gF_h, f_{h+1} \in \gF_{h+1}\}$ and $\sD_h = \{d^{\pi}_h: \pi \in \Pi\}$. That is, the restricted SEC is bounded by a modified analogue of the coverability coefficient and the Distributional-Eluder dimension, modulo a logarithmic factor.
\end{lem}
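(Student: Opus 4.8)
The plan is to reduce the restricted SEC to two known bounds that \cite{xie2022role} established for the unrestricted SEC, observing that the only structural change is that every Bellman error $\delta_h^{(t)} = f_h^{(t)} - \gT_h f_{h+1}^{(t)}$ is replaced by its masked version $\delta_{h,\on}^{(t)} = \delta_h^{(t)} \mathbbm{1}_{(\cdot,h)\in\gX_{\on}}$. The first observation is that, fixing $h$, the family of masked Bellman errors $(\gF_h - \gT_h\gF_{h+1})\mathbbm{1}_{(\cdot,h)\in\gX_{\on}}$ is itself a legitimate function class of real-valued maps on $\gS\times\gA$, and the occupancy measures $\sD_h = \{d_h^\pi : \pi\in\Pi\}$ are unchanged. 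So the restricted SEC at level $h$ is exactly the (ordinary) SEC-style quantity built from this new function class and the same distribution family. This means I can invoke Propositions 13 and 14 of \cite{xie2022role} verbatim, only with $\gG$ instantiated as the masked class instead of $\gF_h - \gT_h\gF_{h+1}$, and then take the max over $h$.

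For part (2), I would directly apply the coverability-to-SEC argument (their Proposition 14): the SEC of a function class $\gG$ over distribution family $\sD$ is bounded by $\inf_{\epsilon>0}\{\epsilon^2 T + \dim_{\operatorname{DE}}(\gG,\sD,\epsilon)\}\log(T)$ up to constants. Plugging $\gG = (\gF_h-\gT_h\gF_{h+1})\mathbbm{1}_{(\cdot,h)\in\gX_{\on}}$ and maximizing over $h$ gives the stated inequality immediately. The only thing to check is that the SEC definition used there tolerates the $H^2\vee(\cdot)$ in the denominator rather than $1\vee(\cdot)$; since our functions are bounded in $[-H,H]$ and theirs in $[-1,1]$, this is the usual rescaling and changes only constants and the effective $\epsilon$-floor, which is absorbed into the $\inf_\epsilon$. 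I would remark on this rescaling in one line and otherwise cite their proof.

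For part (1), I would use the bound of the SEC by the coverability coefficient (their Proposition 13): for any choice of a ``comparator'' distribution $\mu_h$ at each step, the restricted SEC is at most $\log(T)\cdot\max_h \sup_\pi \frac{\E_{d_h^\pi}[(\delta_{h,\on})^2]}{\E_{\mu_h}[(\delta_{h,\on})^2]}$-type ratio; but because the masked Bellman errors are, pointwise in $(s,a)$, supported only on $\gX_{\on,h}$, one can further upper bound the ratio of the masked squared Bellman errors by the ratio of the indicator masses $\E_{d_h^\pi}[\mathbbm{1}_{(\cdot,h)\in\gX_{\on}}] / \E_{\mu_h}[\mathbbm{1}_{(\cdot,h)\in\gX_{\on}}]$ — this step uses that $\sup_{(s,a)}(\delta_h)^2 \le (\text{range})^2$ is a uniform constant so the mask dominates, and the range constant cancels in numerator and denominator. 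Taking the infimum over $\mu_h$ yields the first displayed inequality. The main obstacle is precisely this last change-of-measure step: one must be careful that bounding the ratio of masked \emph{squared errors} by the ratio of masked \emph{indicator masses} is legitimate — it is, because on the set $\gX_{\on,h}$ the squared error is bounded above by $H^2$ and below by $0$, so the numerator is $\le H^2\E_{d_h^\pi}[\mathbbm{1}]$ while one would want a matching lower bound $\gtrsim \E_{\mu_h}[\mathbbm{1}]$ on the denominator, which does \emph{not} hold for an arbitrary fixed $f$. The resolution is that the coverability bound in \cite{xie2022role} is not applied to a fixed $f$ but involves a supremum/online structure that, combined with the freedom to pick $\mu_h$, reduces to the indicator ratio; I would follow their Proposition 13 argument carefully, tracking where the squared errors enter and confirming that replacing $\gG$ by the masked class only shrinks every term, so their bound transfers and then collapses to the indicator-mass ratio because the masked class vanishes off $\gX_{\on,h}$. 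I expect reproducing this reduction cleanly — rather than hand-waving ``it's the same proof'' — to be the delicate part.
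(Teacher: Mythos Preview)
Your plan is broadly correct and aligns with the paper's approach: both parts are obtained by re-running the respective arguments from \cite{xie2022role} with the masked Bellman-error class $(\gF_h-\gT_h\gF_{h+1})\mathbbm{1}_{(\cdot,h)\in\gX_{\on}}$ in place of $\gF_h-\gT_h\gF_{h+1}$, and for part~(2) this is indeed essentially a black-box substitution as you describe.

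For part~(1), however, your intermediate description mischaracterizes Proposition~13 of \cite{xie2022role}. That result does \emph{not} deliver a bound of the form $\log(T)\sup_\pi \E_{d_h^\pi}[(\delta_{h,\on})^2]/\E_{\mu_h}[(\delta_{h,\on})^2]$; it never passes through a squared-Bellman-error ratio at all. Its proof is a stopping-time argument (splitting each $t$ according to whether $\sum_{i<t}d_h^{(i)}(s,a)$ has surpassed $c_h\cdot\mu_h^\star(s,a)$) combined with a per-state-action elliptic potential lemma, and the coverability coefficient it outputs is a \emph{density ratio}, independent of the function class. Consequently, a black-box application to the masked class would return the full coverability $\inf_\mu\sup_\pi\|d_h^\pi/\mu\|_\infty$, not the restricted indicator ratio in the statement. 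Your detour through the squared-error ratio and its attempted collapse to the indicator ratio is therefore not the mechanism, and you correctly sense it does not close.

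The paper's proof---which your final sentence essentially lands on---re-executes the stopping-time/potential argument and exploits the mask at exactly two places: (i) in the burn-in bound, the pointwise inequality $|\delta_{h,\on}^{(t)}|/H \le \mathbbm{1}_{(\cdot,h)\in\gX_{\on}}$ turns $\sum_t \E_{d_h^{(t)}}[(\delta_{h,\on}^{(t)}/H)\,\mathbbm{1}\{t<\tau\}]^2$ into $\int_{\gX_{\on,h}} \tilde d_h^{(\tau)}$, yielding $1+c_h(\gX_{\on})$; and (ii) in the post-burn-in bound, the same indicator restricts the potential sum so that only the restricted density ratio $d_h^{(t)}\mathbbm{1}_{\gX_{\on}}/(\mu_h^\star\mathbbm{1}_{\gX_{\on}})\le c_h(\gX_{\on})$ is needed to invoke the (restricted) per-state-action potential lemma. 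If you carry out your stated plan of ``following Proposition~13 carefully, tracking the mask,'' these are the two insertions you must make; the rest is unchanged.
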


\begin{proof}
    The proof is modified from the proof of Proposition 13 in \cite{xie2022role}. Our task is to ensure that the statements in the above two propositions still hold when we restrict the complexity measure to the online partition, and modify the definition of the coverability coefficient. We first prove the statement (1). Similarly to \cite{xie2022role}, we define 
    $$\mu^*_h := \argmin_{\mu_h \in \Delta(\gS \times \gA)} \sup_{\pi} \frac{\E_{(s, a) \sim d_h^{\pi}}[\mathbbm{1}_{(s, a, h) \in \gX_{\on}}]}{\E_{(s, a) \sim \mu_h^{\pi}}[\mathbbm{1}_{(s, a, h) \in \gX_{\on}}]}.$$ 

    We will denote
    $$c_h(\gX_{\on}) := \inf_{\mu_h \in \Delta(\gS \times \gA)} \sup_{\pi} \frac{\E_{(s, a) \sim d_h^{\pi}}[\mathbbm{1}_{(s, a, h) \in \gX_{\on}}]}{\E_{(s, a) \sim \mu_h^{\pi}}[\mathbbm{1}_{(s, a, h) \in \gX_{\on}}]}.$$
    
    We want to show that for any $T>0$, 
    $$c_{\on}(\gF, \gX_{\on}, T) \lesssim \log(T) \max_{h} \inf_{\mu_h \in \Delta(\gS \times \gA)} \sup_{\pi} \frac{\E_{(s, a) \sim d_h^{\pi}}[\mathbbm{1}_{(s, a, h) \in \gX_{\on}}]}{\E_{(s, a) \sim \mu_h^{\pi}}[\mathbbm{1}_{(s, a, h) \in \gX_{\on}}]}.$$

    Recall that 
    $$c_{\on}(\gF, \gX_{\on},T) 
    \coloneqq \max_{h \in [H]} \sup_{\left\{f^{(1)}, \ldots, f^{(T)}\right\} \subseteq \gF} \sup_{(\pi^{(1)}, \dots, \pi^{(T)})} \left\{\sum_{t=1}^T \frac{ \E_{d_h^{\pi^{(t)}}}[(f_h^{(t)} - \gT_h f_{h+1}^{(t)}) \mathbbm{1}_{(\cdot, h) \in \gX_{\on}}]^2}{H^2 \vee \sum_{i=1}^{t-1} \E_{d_h^{\pi^{(i)}}}[(f_h^{(t)} - \gT_h f_{h+1}^{(t)})^2 \mathbbm{1}_{(\cdot, h) \in \gX_{\on}}]}\right\}.$$ 
    
    Unlike \cite{xie2022role}, we will prove this for an arbitrary $h=1,...,H$, and take the maximum over $h$ over both sides of the inequality to obtain our desired result. We therefore fix $h \in [H]$ and consider arbitrary sequences $f^{(1)},...,f^{(T)} \in \gF$ and $\pi^{(t)},...,\pi^{(T)}.$ This therefore induces a sequence of Bellman errors $\delta_h^{(t)}$ for all $h=1,...,H, t=1,...,T$. As in \cite{xie2022role}, we define $\tilde{d}_h^{(t)} := \sum_{i=1}^{t-1} d_h^{\pi^{(t)}}$. 
    
    Consider the stopping time $$\tau(s,a) := \min\left\{t \;\;:\;\; \sum_{i=1}^{t-1} d_h^{\pi^{(t)}}(s,a) \geq \mu^*_h(s,a)\cdot c_h(\gX_{\on}), \; \forall h=1,...,H \right\},$$
    and decompose 
    $$\E_{(s,a) \sim d_h^{\pi^{(t)}}}[\delta_{h, \on}^{(t)}(s,a)] = \E_{(s,a) \sim d_h^{\pi^{(t)}}}[\delta_{h, \on}^{(t)}(s,a)\mathbbm{1}(t < \tau(s,a))] + \E_{(s,a) \sim d_h^{\pi^{(t)}}}[\delta_{h, \on}^{(t)}(s,a)\mathbbm{1}(t \geq \tau(s,a))].$$

    We perform the same Cauchy-Schwarz and change-of-measure argument as in the proof of Theorem \ref{thm:regret_bound} to obtain, writing $d_h^{(t)} = d_h^{\pi^{(t)}},$
    \begin{align*}
        &\sum_{t = 1}^T \frac{\E_{d_h^{(t)}} [\delta_{h, \on}^{(t)}(s, a)]^2}{1 \vee \sum_{i=1}^{t-1}\E_{d_h^{(i)}}[(\delta_{h, \on}^{(t)})^2]} \\
        &\lesssim \sum_{t = 1}^T \frac{\E_{d_h^{(t)}} [\delta_{h, \on}^{(t)}(s, a)\mathbbm{1}(t < \tau(s,a))]^2}{H^2 \vee \sum_{i=1}^{t-1}\E_{d_h^{(i)}}[(\delta_{h, \on}^{(t)})^2]} + \sum_{t = 1}^T \frac{\E_{d_h^{(t)}} [\delta_{h, \on}^{(t)}(s, a)\mathbbm{1}(t < \tau(s,a))]^2}{H^2 \vee \sum_{i=1}^{t-1}\E_{d_h^{(i)}}[(\delta_{h, \on}^{(t)})^2]}.
    \end{align*}

    We tackle the first term representing the burn-in period as follows:
    \begin{align*}
        \sum_{t = 1}^T \frac{\E_{d_h^{(t)}} [\delta_{h, \on}^{(t)}(s, a)\mathbbm{1}(t < \tau(s,a)]^2}{H^2 \vee \sum_{i=1}^{t-1}\E_{d_h^{(i)}}[(\delta_{h, \on}^{(t)})^2]} 
        &= \sum_{t = 1}^T \frac{\E_{d_h^{(t)}} [(\delta_{h, \on}^{(t)}(s, a)/H)\mathbbm{1}(t < \tau(s,a))]^2}{1 \vee \sum_{i=1}^{t-1}\E_{d_h^{(i)}}[(\delta_{h, \on}^{(t)}/H)^2]} \\
        &\leq \sum_{t=1}^T\E_{d_h^{(t)}}[(\delta_{h, \on}^{(t)}(s, a)/H)\mathbbm{1}(t < \tau(s,a))]^2 \\
        &\leq \sum_{t=1}^T\E_{d_h^{(t)}}[(\delta_{h, \on}^{(t)}(s, a)/H)\mathbbm{1}(t < \tau(s,a))]^2 \\
        &\leq \sum_{t=1}^T\E_{d_h^{(t)}}[\mathbbm{1}(t < \tau(s,a))\cdot\mathbbm{1}((s,a) \in \gX_{\on})]^2 \\
        &\leq \sum_{t=1}^T\E_{d_h^{(t)}}[\mathbbm{1}(t < \tau(s,a))\cdot\mathbbm{1}((s,a) \in \gX_{\on})] \\
        &\leq \sum_{t=1}^T\int_{s,a} d_h^{(t)}(s,a) \cdot \mathbbm{1}(t < \tau(s,a))\cdot\mathbbm{1}((s,a) \in \gX_{\on})\\
        &\leq \int_{s,a} \sum_{t=1}^T d_h^{(t)}(s,a) \cdot \mathbbm{1}(t < \tau(s,a))\cdot\mathbbm{1}((s,a) \in \gX_{\on}) \\
        &= \int_{s,a} \tilde{d}_h^{(\tau(s,a))} \cdot \mathbbm{1}((s,a) \in \gX_{\on}) \\
        &= \int_{s,a} (\tilde{d}_h^{(\tau(s,a)-1)} + d_h^{(\tau(s,a)-1)})\cdot \mathbbm{1}((s,a) \in \gX_{\on}) \\
        &\leq 1 + \int_{s,a} c_h(\gX_{\on})\mu_h^*(s,a)\cdot \mathbbm{1}((s,a) \in \gX_{\on}) \\
        &\leq 1 + c_h(\gX_{\on}),
    \end{align*}
    where we divide both the numerator and the denominator by $H^2$ in the first line, use that $\delta_{h, \on}^{(t)} \in [0,H]$ to go from the third to the fourth line, invoke Tonelli's theorem to swap the sum and the integral to go from the sixth to the seventh line, and invoke the definition of $\tau(s,a)$ to bound $\tilde{d}_h^{(\tau(s,a)-1)} \leq c_h(\gX_{\on})\cdot \mu_h^*(s,a)$ and finally observe that $\int_{s,a}d_h^{(\tau(s,a)-1)} = 1$ to go from the third-last to the second-last line.

    Now we tackle the second term. As in \cite{xie2022role}, we observe that
    \begin{align*}
        &\E_{d_h^{(t)}}[\delta_{h, \on}^{(t)}\mathbbm{1}(t \geq \tau(s,a))] \\
        &= \int_{s,a} \mathbbm{1}(t \geq \tau(s,a)) d_h^{(t)}(s,a)\left(\frac{\sum_{i=1}^{t-1} d_h^{(i)}(s,a)}{\sum_{i=1}^{t-1} d_h^{(i)}(s,a)}\right)^{1/2}\delta_{h,\on}^{(t)}\\
        &= \int_{s,a} \mathbbm{1}(t \geq \tau(s,a)) d_h^{(t)}(s,a)\mathbbm{1}((s,a) \in \gX_{\on})\left(\frac{\sum_{i=1}^{t-1} d_h^{(i)}(s,a)\mathbbm{1}((s,a) \in \gX_{\on})}{\sum_{i=1}^{t-1} d_h^{(i)}(s,a)\mathbbm{1}((s,a) \in \gX_{\on})}\right)^{1/2}\delta_{h,\on}^{(t)} \\
        &\leq 
        \sqrt{\int_{s,a}\frac{d_h^{(t)}(s,a)^2\mathbbm{1}(t \geq \tau(s,a))\mathbbm{1}((s,a) \in \gX_{\on}))}{\sum_{i=1}^{t-1} d_h^{(i)}(s,a)\mathbbm{1}((s,a) \in \gX_{\on})}} 
        \sqrt{\left(\sum_{i=1}^{t-1}\E_{d_h^{(i)}}[(\delta_{h, \on}^{(t)})^2]\right)},
    \end{align*}
    as by definition, $\delta_{h,\on}^{(t)}(s,a) = \delta_{h}^{(t)}(s,a)\mathbbm{1}((s,a) \in \gX_{\on})$, and rearrange the inequality in the same way as \cite{xie2022role} to find that
    \begin{align*}
        \frac{\E_{d_h^{(t)}}[\delta_{h, \on}^{(t)}(s,a)\mathbbm{1}(t \geq \tau(s,a))]^2}{H^2 \vee \sum_{i=1}^{t-1}\E_{d_h^{(i)}}[(\delta_{h, \on}^{(t)})^2]}
        &\leq \frac{\E_{d_h^{(t)}}[\delta_{h, \on}^{(t)}(s,a)\mathbbm{1}(t \geq \tau(s,a))]^2}{\sum_{i=1}^{t-1}\E_{d_h^{(i)}}[(\delta_{h, \on}^{(t)})^2]} \\
        &\leq \int_{s,a}\frac{d_h^{(t)}(s,a)^2\mathbbm{1}(t \geq \tau(s,a))\mathbbm{1}((s,a) \in \gX_{\on}))}{\sum_{i=1}^{t-1} d_h^{(i)}(s,a)\mathbbm{1}((s,a) \in \gX_{\on})}.
    \end{align*}

    It then follows that recalling the definition of the stopping time
    $$\tau(s,a) := \min\left\{t \;\;:\;\; \sum_{i=1}^{t-1} d_h^{\pi^{(t)}}(s,a) \geq \mu^*_h(s,a)\cdot c_h(\gX_{\on}), \; \forall h=1,...,H \right\},$$
    that we can bound the post-burn-in term:
    \begin{align*}
        &\sum_{t = 1}^T \frac{\E_{d_h^{(t)}} [\delta_{h, \on}^{(t)}(s, a)\mathbbm{1}(t \geq \tau(s,a))]^2}{H^2 \vee \sum_{i=1}^{t-1}\E_{d_h^{(i)}}[(\delta_{h, \on}^{(t)})^2]} \\
        &\leq \sum_{t = 1}^T \int_{s,a}\frac{d_h^{(t)}(s,a)^2\mathbbm{1}(t \geq \tau(s,a))\mathbbm{1}((s,a) \in \gX_{\on}))}{\sum_{i=1}^{t-1} d_h^{(i)}(s,a)\mathbbm{1}((s,a) \in \gX_{\on})} \\
        &\leq \sum_{t = 1}^T 2\int_{s,a}\frac{d_h^{(t)}(s,a)^2\mathbbm{1}(t \geq \tau(s,a))\mathbbm{1}((s,a) \in \gX_{\on})}{(c_h(\gX_{\on})\cdot\mu_h^*(s,a) + \sum_{i=1}^{t-1} d_h^{(i)}(s,a))\mathbbm{1}((s,a) \in \gX_{\on})} \\
        &\leq \sum_{t = 1}^T 2\int_{s,a}\frac{d_h^{(t)}(s,a)^2\mathbbm{1}((s,a) \in \gX_{\on})}{\left(c_h(\gX_{\on})\cdot\mu_h^*(s,a) + \sum_{i=1}^{t-1} d_h^{(i)}(s,a)\right)\mathbbm{1}((s,a) \in \gX_{\on})} \\
        &\leq \sum_{t = 1}^T 2\int_{s,a}\left(\max_{i\leq T}d_h^{(i)}(s,a)\right)\frac{d_h^{(t)}(s,a)\mathbbm{1}((s,a) \in \gX_{\on})}{\left(c_h(\gX_{\on})\cdot\mu_h^*(s,a) + \sum_{i=1}^{t-1} d_h^{(i)}(s,a)\right)\mathbbm{1}((s,a) \in \gX_{\on})}\\
        &\leq \sum_{t = 1}^T 2c_h(\gX_{\on})\int_{s,a}\mu_h^*(s,a)\frac{d_h^{(t)}(s,a)\mathbbm{1}((s,a) \in \gX_{\on})}{\left(c_h(\gX_{\on})\cdot\mu_h^*(s,a) + \sum_{i=1}^{t-1} d_h^{(i)}(s,a)\right)\mathbbm{1}((s,a) \in \gX_{\on})}\\
        &\leq 2c_h(\gX_{\on})\int_{s,a}\mu_h^*(s,a)\sum_{t = 1}^T\frac{d_h^{(t)}(s,a)\mathbbm{1}((s,a) \in \gX_{\on})}{\left(c_h(\gX_{\on})\cdot\mu_h^*(s,a) + \sum_{i=1}^{t-1} d_h^{(i)}(s,a)\right)\mathbbm{1}((s,a) \in \gX_{\on})} \\
        &\lesssim c_h(\gX_{\on})\int_{s,a}\mu_h^*(s,a)\log(T)\mathbbm{1}((s,a) \in \gX_{\on}) \\
        &\leq c_h(\gX_{\on})\log(T),
    \end{align*}
    where we use the definition of $\mu_h^*$ to bound $\max_{i\leq T} d_h^{(i)}$, the bounded convergence theorem to swap the sum and the integral, and a restricted version of the per-state-action elliptic potential lemma from \cite{xie2022role} in Lemma \ref{lem:restricted-elliptic} to bound $\sum_{t = 1}^T\frac{d_h^{(t)}(s,a)\mathbbm{1}((s,a) \in \gX_{\on})}{\left(c_h(\gX_{\on})\cdot\mu_h^*(s,a) + \sum_{i=1}^{t-1} d_h^{(i)}(s,a)\right)\mathbbm{1}((s,a) \in \gX_{\on})} \lesssim \log(T)$.

    Therefore,
    $$\sum_{t = 1}^T \frac{\E_{d_h^{(t)}} [\delta_{h, \on}^{(t)}(s, a)]^2}{1 \vee \sum_{i=1}^{t-1}\E_{d_h^{(i)}}[(\delta_{h, \on}^{(t)})^2]} \lesssim 1+c_h(\gX_{\on})+c_h(\gX_{\on})\log(T),$$
    so taking the max over all $h=1,...,H$ yields
    $$c_{\on}(\gF, \gX_{\on}, T) \lesssim \log(T) \max_{h} \inf_{\mu_h \in \Delta(\gS \times \gA)} \sup_{\pi} \frac{\E_{(s, a) \sim d_h^{\pi}}[\mathbbm{1}_{(s, a, h) \in \gX_{\on}}]}{\E_{(s, a) \sim \mu_h^{\pi}}[\mathbbm{1}_{(s, a, h) \in \gX_{\on}}]}.$$
\end{proof}

\begin{proof}
    Now, we prove (2). This proof is virtually the same as that of Proposition 14 in \cite{xie2022role}, but we provide it here for completeness. We wish to show that
    $$c_{\on}(\gF, \gX_{\on}, T) \lesssim \inf_{\epsilon > 0}\{\epsilon^2T + \max_h \dim_{\operatorname{DE}}((\gF_h - \gT_h \gF_{h+1})\mathbbm{1}_{(\cdot, h) \in \gX_{\on}}, \sD_h, \epsilon)\}\log(T).$$

    We use the same definition as in \cite{xie2022role}, but specialize it to our context:
    
    \textbf{Generalized $\varepsilon$-(in)dependent sequence.} A distribution $d_h^{(t)}$ is (generalized) $\varepsilon$-dependent on a sequence $\left\{d_h^{(1)}, \ldots, d_h^{(t-1)}\right\}$ if for all $\varepsilon^{\prime} \geq \varepsilon$, if $\left|\mathbb{E}_{d_h}[\delta_{h,\on}^{(t)}]\right|>\varepsilon^{\prime}$ for some $\delta_{h,\on}^{(t)}$, we also have $\sum_{i=1}^{t-1}\left(\mathbb{E}_{d_h^{(i)}}[\delta_{h,\on}^{(t)}]\right)^2>\varepsilon^{\prime 2}$. We say that $d_h^{(t)}$ is (generalized) $\varepsilon$-independent if this does not hold, i.e., for some $\varepsilon^{\prime} \geq \varepsilon$, it has $\left|\mathbb{E}_{d_h^{(t)}}[\delta_{h,\on}^{(t)}]\right|>\varepsilon^{\prime}$ but $\sum_{i=1}^{t-1}\left(\mathbb{E}_{d_h^{(i)}}[\delta_{h,\on}^{(t)}]\right)^2 \leq \varepsilon^{\prime 2}$.

    Note that if $\varepsilon^{\prime} \geq \varepsilon$, then $\varepsilon$-dependent sequence $\Rightarrow \varepsilon^{\prime}$-dependent sequence, and $\varepsilon^{\prime}$-independent sequence $\Rightarrow \varepsilon$-independent sequence.

    The distributional Eluder dimension is the largest $t$ such that $d_h^{(t)}$ is generalized $\epsilon'$-independent of $\left\{d_h^{(1)}, \ldots, d_h^{(t-1)}\right\}$ for some $\epsilon'\geq\epsilon$. We will refer to this $t$ as $t=\dimde(\epsilon)$. This, as in \cite{xie2022role}, upper bounds the lengths of sequences $\left\{d_h^{(1)}, \ldots, d_h^{(\dimde(\epsilon))}\right\}$ such that for all $t=1,...,\dimde(\epsilon)$, $$\left|\mathbb{E}_{d_h^{(t)}}[\delta_{h,\on}^{(t)}]\right|>\varepsilon^{(t)} \text{ and } \sqrt{\sum_{i=1}^{t-1}\left(\mathbb{E}_{d_h^{(i)}}[(\delta_{h,\on}^{(t)})^2]\right)} \leq \varepsilon^{(t)}$$.

    Similarly to \cite{xie2022role}, we define $\beta_h^{(t)} := \sum_{i=1}^{t-1}\mathbb{E}_{d_h^{(i)}}[(\delta_{h,\on}^{(t)})^2]$, and examine
    $$\left\{\frac{\mathbb{E}_{d_h^{(1)}}[\delta_{h,\on}^{(1)}]^2}{1 \vee \beta_h^{(1)}}, \frac{\mathbb{E}_{d_h^{(2)}}[\delta_{h,\on}^{(2)}]^2}{1 \vee \beta_h^{(2)}}, \ldots, \frac{\mathbb{E}_{d_h^{(T)}}[\delta_{h,\on}^{(T)}]^2}{1 \vee \beta_h^{(T)}}\right\},$$
    fixing $\alpha>0$ that we choose later, and writing $L_h^{(t)}$ for the number of disjoint $\alpha\sqrt{1\vee\beta_h^{(t)}}$-dependent subsequences of $d_h^{(t)}$ in $\left\{d_h^{(1)}, \ldots, d_h^{(t-1)}\right\}$. 

    We follow the proof of \cite{xie2022role}. Suppose $\frac{\mathbb{E}_{d_h^{(t)}}[\delta_{h,\on}^{(1)}]^2}{1 \vee \beta_h^{(t)}} > \alpha^2 \implies |\mathbb{E}_{d_h^{(t)}}[\delta_{h,\on}^{(1)}]| > \alpha\sqrt{1\vee\beta_h^{(t)}}$. By definition, there exist at least $L_h^{(t)}$ disjoint subsequences of $\left\{d_h^{(1)}, \ldots, d_h^{(t-1)}\right\}$, which we call $\left(\mathfrak{S}_h^{(1)}, \ldots, \mathfrak{S}_h^{\left(L_h^{(t)}\right)}\right)$, where we have that
    $$\sum_{i=1}^{L_h^{(t)}} \sum_{\nu_h \in \mathfrak{S}_h^{(i)}} \E[\delta_{h,\on}^{(t)}]^2 \geq (1\vee\beta_h^{(t)})\alpha^2,$$
    and by the definition of $\beta_h^{(t)}$, that
    $$\sum_{i=1}^{L_h^{(t)}} \sum_{\nu_h \in \mathfrak{S}_h^{(i)}} \E[\delta_{h,\on}^{(t)}]^2 \leq \sum_{i=1}^{t-1} \E_{d_h^{(i)}}[(\delta_{h,\on}^{(t)})^2]\leq\beta_h^{(t)},$$
    which imply that if $|\mathbb{E}_{d_h^{(t)}}[\delta_{h,\on}^{(1)}]| > \alpha\sqrt{1\vee\beta_h^{(t)}}$ for some $t$, then
    $$\beta_h^{(t)} \geq L_h^{(t)}\left(1 \vee \beta_h^{(t)}\right) \alpha^2 \Longrightarrow L_h^{(t)} \leq \frac{1}{\alpha^2}.$$

    Let $i_1,...,i_k$ be the longest subsequence such that 
    $$\frac{\E_{d_h^{(i_j)}}[\delta_{h,\on}^{(i_j)}]^2}{1\vee\beta_h^{(i_j)}} > \alpha^2.$$

    By the same construction as in \cite{xie2022role}, there exists $j^*$ such that there must exist at least
    $$L^* \geq \frac{k}{\dimde(\alpha)+1}-1$$
    $\alpha$-dependent disjoint subsequences in $\{d_h^{(i_1)},..,d_h^{(i_k)}\}$.

    As for all $\alpha'\geq\alpha$, $\alpha$-dependence implies $\alpha'$-dependence, we have that $L^*\leq \max_t L_h^{(t)}$ after observing also that $\{d_h^{(i_1)},...,d_h^{(i_k)}\} \subset \{d_h^{(1)},...,d_h^{(k)}\}$. Now, observe that
    $$L_h^{(t)} \leq \frac{1}{\alpha^2} \text{ and } L^* \geq \frac{k}{\dimde(\alpha)+1}-1 \implies \frac{1}{\alpha^2}\geq \max_tL_h^{(t)}\geq L^*\geq  \frac{k}{\dimde(\alpha)+1}-1.$$ 

    Now if $\alpha\leq 1$,
    $$
    k \leq\left(1+\frac{1}{\alpha^2}\right)\left(\dimde(\alpha)+1\right) \leq \frac{3 \dimde(\alpha)}{\alpha^2}+1.
    $$
    
    So for any $\epsilon \in(0,1]$, by setting $\alpha=\sqrt{\epsilon}$,
    $$
    \sum_{t=1}^T \mathbbm{1}\left(\frac{\E_{d_h^{(t)}}\left[\delta_{h, \on}^{(t)}\right]^2}{1 \vee \beta_h^{(t)}}>\epsilon\right) \leq \frac{3 \dimde(\sqrt{\epsilon})}{\epsilon}+1 .
    $$
    
    Finally, let 
    $e_h^{(1)},...,e_h^{(T)}$
    be the original sequence of the $$\left\{\frac{\mathbb{E}_{d_h^{(1)}}[\delta_{h,\on}^{(1)}]^2}{1 \vee \beta_h^{(1)}}, \frac{\mathbb{E}_{d_h^{(2)}}[\delta_{h,\on}^{(2)}]^2}{1 \vee \beta_h^{(2)}}, \ldots, \frac{\mathbb{E}_{d_h^{(T)}}[\delta_{h,\on}^{(T)}]^2}{1 \vee \beta_h^{(T)}}\right\}$$ ordered in a decreasing manner. By the same argument as \cite{xie2022role}, for any $c\in (0,1]$, 
    $$\sum_{t=1}^T \frac{\mathbb{E}_{d_h^{(t)}}[\delta_{h,\on}^{(t)}]^2}{1 \vee \beta_h^{(t)}} = \sum_{t=1}^T e_h^{(t)} \leq cT + \sum_{t=1}^T e_h^{(t)}\mathbbm{1}(e_h^{(t)} > c),$$
    and for any $t$ such that $e_h^{(t)} > c$, if we also have that $\eta$ is such that $2\eta\geq e_h^{(t)} > \eta \geq c$, it follows that
    $$t \leq \sum_{i=1}^T \mathbbm{1}(e_h^{(i)} > \eta) \leq \frac{3}{\eta}\dimde(\sqrt{\eta})+1 \leq \frac{3}{\eta}\dimde(\sqrt{c})+1.$$

    We therefore have that $\eta \leq \frac{3\dimde(\sqrt{c})}{t-1}$, and that $e_h^{(t)} \leq \min\left\{\frac{6\dimde(\sqrt{c})}{1}\right\}$. We then have
    $$ \sum_{t=1}^T e_h^{(t)} \mathbb{1}\left(e_h^{(t)}>c\right)  \leq \dimde(\sqrt{c})+\sum_{t=\dimde(\sqrt{c})+1}^T \frac{6 \dimde(\sqrt{c})}{t-1}  \leq \dimde(\sqrt{c})+6 \dimde(\sqrt{c}) \log(T),$$
    which implies that
    $$\sum_{t=1}^T e^{(t)} \leq T c+\dimde(\sqrt{c})+6 \dimde(\sqrt{c}) \log (T).$$

    Finally, choose $c=\epsilon^2$, to find that
    $$c_{\on}(\gF, \gX_{\on}, T) \lesssim \inf_{\epsilon > 0}\{\epsilon^2T + \max_h \dim_{\operatorname{DE}}((\gF_h - \gT_h \gF_{h+1})\mathbbm{1}_{(\cdot, h) \in \gX_{\on}}, \sD_h, \epsilon)\}\log(T).$$
    
\end{proof}

\newpage
\section{Proofs of Case Studies}
\label{app:case_studies}

\subsection{Tabular Case}

\label{app:tabular_case}

{\bf Proposition \ref{prop:case_tabular}.}{\it \quad 
    We can bound $c_{\off}(\gF, \gX_{\off}) \leq \sup_{\pi }\sup_{(s, a, h) \in \gX_{\off}} \frac{d_h^{\pi}(s, a)}{\mu_h^{\pi}(s, a)}=\sup_{\pi }\left\lVert\frac{d_h^{\pi}\mathbbm{1}_{\gX_{\off}}}{\mu_h^\pi}\right\rVert$ and $c_{\on}(\gF, \gX_{\on}) \lesssim \max_{h \in [H]}|\gX_{\on, h}| \log(N_{\on})$. As such, with probability at least $1-\delta$, 
    $$\Reg(N_{\on})
    \lesssim \inf_{\gX_{\on}, \gX_{\off}} \left(\sqrt{H^5SA\log\left(\frac{N}{\delta}\right)\frac{N_{\on}^2}{N_{\off}}\sup_{\pi }\left\lVert\frac{d_h^{\pi}\mathbbm{1}_{\gX_{\off}}}{\mu_h^{\pi}}\right\rVert_{\infty}}  + \sqrt{H^5SA\max_{h \in [H]}|\gX_{\on}|N_{\on}\log\left(\frac{N}{\delta}\right)\log(N)}\right).$$
}

\begin{proof}
    By definition,
    \begin{align*}
        c_{\off}(\gF, \gX_{\off}) 
        =& \sup_{f \in \gF} \sup_{\pi} \frac{\|(f_h - \gT_h f_{h+1})^2 \mathbbm{1}_{(\cdot, h) \in \gX_{\off}}\|^2_{d_h^{\pi}}}{\|(f_h - \gT_h f_{h+1})^2 \mathbbm{1}_{(\cdot, h) \in \gX_{\off}}\|^2_{\mu_h^{\pi}}}\\
        % =& \sup_{f \in \gF} \sup_{\pi} \frac{\|(f_h - \gT_h f_{h+1})^2 \mathbbm{1}_{(\cdot, h) \in \gX_{\off}}\|^2_{d_h^{\pi}}}{\|(f_h - \gT_h f_{h+1})^2 \mathbbm{1}_{(\cdot, h) \in \gX_{\off}}\|^2_{\mu_h^{\pi}}} \\
        \leq& \sup_{s, a, h \in \gX_{\off}} \frac{d_h^{\pi}(s, a)}{\mu_h^{\pi}(s, a)}.
    \end{align*}
The online complexity measure bound is a direct application of Lemma \ref{lem:general_SEC}:
\begin{align*}
    c_{\on}(\gF, \gX_{\on}, T) 
    & \lesssim 
    \log(T) \max_{h} \inf_{\mu_h \in \Delta(\gS \times \gA)} \sup_{\pi} \frac{\E_{(s, a) \sim d_h^{\pi}}[\mathbbm{1}_{(s, a, h) \in \gX_{\on}}]}{\E_{(s, a) \sim \mu_h^{\pi}}[\mathbbm{1}_{(s, a, h) \in \gX_{\on}}]}\\
    & \leq \log(T) \max_{h} |\gX_{\on, h}|
\end{align*}

Finally, choose $\rho$ to be $1/N$ so $\log(\mathcal{N}_{\gF}(\rho))$ scales with $\log((1/\rho+1)^{HSA})$, which is $HSA\log(N+1)$.

\end{proof}

% \kevin{By Prop 13 of Xie et al, this is bounded by Ccov log(T), and Ccov is |gXon|}\ziping{Yes, we need to show that for the general }

% Now we upper bound the SEC of the online partition.
%     \begin{align*}
%         &\sum_{t=1}^T \frac{ \|g_h\|^2_{2, d^{\pi^{(t)}}_h}}{1 \vee \sum_{i=1}^{t-1} \|g_h\|^2_{2, d^{\pi^{(i)}}_h}}\\
%         =&\sum_t \frac{\vd_t^{\top} \mG_t \vd_t^{\top}}{ 1 \vee (\sum_{i = 1}^{t-1} \vd_i)^{\top} \mG_t (\sum_{i = 1}^{t-1} \vd_i)},
%     \end{align*}
%     where we adopt the vector form and let $\vd_t \in \sR^{|\gX_{\on, h}|}$ be the vector form of the occupancy measure $d_h^{\pi^{(t)}}$. Note that we drop the state and actions that are not in $\gX_{\on, h}$. $\mG_t = \vg_t \vg_t^{\top}$ with $\vg_t \in \sR^{|\gX_{\on, h}|}$ being the vector form of function $g^{(t)}$. We further let $\Lambda_t = \sum_{i = 1}^{t-1} \vd_i$.
%     To proceed, we apply Lemma \ref{lem:ellipsoid} and use the fact that $g^2(s, a) \leq H^2$ for all $s, a$ and $g \in \gG_{\on}$, which gives 
%     \begin{align*}
%         &\sum_t \frac{\vd_t^{\top} \mG_t \vd_t}{ 1 \vee \Lambda_t^{\top} \mG_t \Lambda_t}\\
%         \leq& \sum_t 2H^2 \vd_t^\top \left(I + \Lambda_t \Lambda_t^{\top}\right)^{-1} \vd_t\\
%         \leq & 2H^2 \log\left(\det(\Lambda_T) \right) \\
%         \leq& 2H^2|\gX_{\on, h}| \log(T).
%     \end{align*}

\newpage
\subsection{Linear Case}
\label{app:linear_case}

{\bf Proposition \ref{prop:case_linear}.}{\it \quad 
    We have $c_{\off}(\gF, \gX_{\off}) \leq \max_h 1/\lambda_{d_{\off}}(\E_{\mu_h}[ \gP_{\off}\phi(s, a) (\gP_{\off}\phi(s, a))^\top]) = \max_h 1/\lambda_{d_{\off}}(\E_{\mu_h}[ \phi_{\off}\phi_{\off}^\top])$ and $c_{\on}(\gG_{\on}) = \gO(d_{\on} \log(H N_{\on}) \log(N_{\on}))$, where $\lambda_n$ is the $n$-th largest eigenvalue of a matrix. Then, with probability at least $1-\delta$, the regret $\Reg(N_{\on})$ is bounded by
    $$\tilde{O}\left(\inf_{\gX_{\on}, \gX_{\off}} \left(\sqrt{dH^5\log(6dNH/\delta)\frac{N_{\on}^2}{N_{\off}}\max_h \frac{1}{\lambda_{d_{\off}}(\E_{\mu_h}[ \phi_{\off}\phi_{\off}^\top])}}  + \sqrt{d_{\on}dH^5N_{\on}\log^3(6dNH/\delta)}\right)\right).$$
}

\begin{proof}
    We first bound the all-policy concentrability for the offline partition. By Bellman completeness, for any $f_{h+1}$, $\gT_h f_{h+1} \in \gF_h$. Since $f_{h+1}$ is parametrized by $w_{h+1}$, we denote by $\gT_h w_{h+1}$ the parameter for $\gT_h f_{h+1}$.
    
    Note that $\|\gT_h w_{h+1}\|_2 \leq 2H\sqrt{d}$ by Assumption \ref{aspt:linear_MDP}.
    \begin{align*}
        c_{\off}(\gF, \gX_{\off}) 
        &= \max_h \sup_{w_h \in \gF_h, w_{h+1} \in \gF_{h+1}} \sup_{\pi} \frac{\E_{d_h^{\pi}}\left[\langle w_h - \gT_h w_{h+1}, \phi(s, a)\rangle \mathbbm{1}_{s, a, h \in \gX_{\off}}\right]^2}{\E_{\mu_h}\left[\langle w_h - \gT_h w_{h+1}, \phi(s, a)\rangle \mathbbm{1}_{s, a, h \in \gX_{\off}}\right]^2} \\
        &\quad \text{(due to Bellman completeness)}\\
        &= \max_h \sup_{\tilde w_h: \|\tilde w_h\|_2 \leq 2H\sqrt{d}} \sup_{\pi} \frac{\E_{d_h^{\pi}}\left[\langle \tilde w_{h}, \phi(s, a)\rangle \mathbbm{1}_{s, a, h \in \gX_{\off}}\right]^2}{\E_{\mu_h}\left[\langle \tilde w_{h}, \phi(s, a)\rangle \mathbbm{1}_{s, a, h \in \gX_{\off}}\right]^2} \\
        &= \max_h \sup_{\tilde w_h: \|\tilde w_h\|_2 \leq 2H\sqrt{d}} \sup_{\pi} \frac{\tilde w_{h}^\top \E_{d_h^{\pi}}\left[ \phi(s, a) \phi^\top(s, a) \mathbbm{1}_{s, a, h \in \gX_{\off}}\right] \tilde{w}_h}{\tilde w_{h}^\top \E_{\mu_h}\left[ \phi(s, a) \phi^\top(s, a) \mathbbm{1}_{s, a, h \in \gX_{\off}}\right] \tilde{w}_h}\\
        & \text{(due to the fact that $\gP_{\off}\phi(s, a) = \phi(s, a)$ for all $s, a, h \in \gX_{\off}$)}\\
        &\leq \max_h \sup_{\tilde w_h: \|\tilde w_h\|_2 \leq 2H\sqrt{d}} \sup_{\pi} \frac{\tilde w_{h}^\top \E_{d_h^{\pi}}\left[ \gP_{\off} \phi(s, a) (\gP_{\off}\phi(s, a))^\top \right] \tilde w_{h}}{\tilde w_{h}^\top \E_{\mu_h}\left[ \gP_{\off} \phi(s, a) (\gP_{\off}\phi(s, a))^\top \right] \tilde w_{h}} \\ 
        &\leq \max_h \frac{1}{\lambda_{d_{\off}}(\E_{\mu_h}\left[ \gP_{\off} \phi(s, a) (\gP_{\off}\phi(s, a))^\top \right])}.
    \end{align*}

We then bound the SEC through the distributional Bellman-Eluder dimension through Lemma \ref{lem:general_SEC}. It then suffices to bound the distributional Bellman-Eluder dimension as follows:
$$
    \max_h \dim_{\DE}((\gF_h - \gT_h \gF_{h+1})\mathbbm{1}_{(\cdot, h) \in \gX_{\on}}, \sD_h, \epsilon) \lesssim d_{\on}\log(H / \epsilon).
$$

The following lemma states, informally, that low Bellman rank families are MDPs such that the Bellman error can be written as the inner product of feature maps of the Bellman error and feature maps of the distributions. That is, the expected Bellman error can be written as such.
\begin{lem}
\label{lem:low_bellman_rank}
    There exist mappings $\psi: \gF_h - \gT_h \gF_{h+1} \mapsto \sR^{d_{\on}}$ and $\varphi: \sD_h \mapsto \sR^{d_{\on}}$ such that 
    $$
        \E_{(s, a) \sim d}[g(s, a) \mathbbm{1}_{(s, a, h) \in \gX_{\on}}] = \langle \psi(g), \varphi(d) \rangle \text{ for all } g\in\gF_h - \gT_h \gF_{h+1}, d \in \sD_h.
    $$
    Moreover, $\{\|\psi(f)\|_2,  \|\varphi(d)\|_2\} \leq \sqrt{2H\sqrt{d}}$.
\end{lem}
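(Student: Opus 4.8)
The plan is to exploit the one structural fact that makes linear MDPs special here: Bellman completeness forces the Bellman backup $\gT_h f_{h+1}$ to be a \emph{linear} function of the features $\phi$, so every element of $\gF_h - \gT_h\gF_{h+1}$ is again linear in $\phi$. Concretely, as already used in the proof of Proposition~\ref{prop:case_linear}, for each $f_{h+1}\in\gF_{h+1}$ there is a parameter $\gT_h w_{h+1}$ with $\|\gT_h w_{h+1}\|_2 \le 2H\sqrt{d}$ and $\gT_h f_{h+1} = \langle\phi(\cdot),\gT_h w_{h+1}\rangle$; hence any $g = f_h - \gT_h f_{h+1}$ can be written as $g(s,a) = \langle\phi(s,a),\tilde w_h\rangle$ with $\tilde w_h := w_h - \gT_h w_{h+1}$ and $\|\tilde w_h\|_2 \le 4H\sqrt{d}$. (If $\phi$ does not span $\sR^d$ we just fix one such representative per $g$; only its projection onto $\Phi_{\on}$ will enter the final expressions, so the choice is immaterial.)

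The second step is to localize this linear representation to $\gX_{\on}$ using the defining property of $\Phi_{\on}$: since $\Phi_{\on}$ is the span of $\{\phi(s,a):(s,a,h)\in\gX_{\on}\}$, we have $\gP_{\on}\phi(s,a)=\phi(s,a)$ whenever $(s,a,h)\in\gX_{\on}$. As $\gP_{\on}$ is a self-adjoint idempotent, this gives the pointwise identity $g(s,a)\,\mathbbm{1}_{(s,a,h)\in\gX_{\on}} = \langle \gP_{\on}\phi(s,a),\gP_{\on}\tilde w_h\rangle\,\mathbbm{1}_{(s,a,h)\in\gX_{\on}}$, and taking expectations together with linearity of the finite-dimensional inner product yields
$$
\E_{(s,a)\sim d}[g(s,a)\mathbbm{1}_{(s,a,h)\in\gX_{\on}}] = \big\langle \gP_{\on}\tilde w_h,\ \E_{(s,a)\sim d}[\gP_{\on}\phi(s,a)\mathbbm{1}_{(s,a,h)\in\gX_{\on}}]\big\rangle .
$$
Both vectors on the right lie in the $d_{\on}$-dimensional subspace $\Phi_{\on}$, so fixing an orthonormal basis $U\in\sR^{d\times d_{\on}}$ of $\Phi_{\on}$ (so $U^\top U = I_{d_{\on}}$, $UU^\top = \gP_{\on}$) and using $\langle v,w\rangle = \langle U^\top v, U^\top w\rangle$ for $v,w\in\Phi_{\on}$, I would \emph{define} $\psi(g) := U^\top\tilde w_h \in \sR^{d_{\on}}$ and $\varphi(d) := U^\top\,\E_{(s,a)\sim d}[\phi(s,a)\mathbbm{1}_{(s,a,h)\in\gX_{\on}}]\in\sR^{d_{\on}}$, which gives exactly $\E_{(s,a)\sim d}[g(s,a)\mathbbm{1}_{(s,a,h)\in\gX_{\on}}] = \langle\psi(g),\varphi(d)\rangle$, the desired rank-$d_{\on}$ bilinear representation (this is precisely the ``low Bellman rank'' structure fed into Lemma~\ref{lem:general_SEC}(2) to get $\dim_{\DE}\lesssim d_{\on}\log(H/\epsilon)$).

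For the norm bounds: $\|\varphi(d)\|_2 \le \|\E_d[\phi\mathbbm{1}]\|_2 \le \E_d[\|\phi(s,a)\|_2]\le 1$ since $\|\phi\|_2\le 1$, and $\|\psi(g)\|_2 \le \|\tilde w_h\|_2 \le 4H\sqrt{d}$; to make the two bounds symmetric as in the statement one rescales, replacing $\psi$ by $(4H\sqrt d)^{-1/2}\psi$ and $\varphi$ by $(4H\sqrt d)^{1/2}\varphi$, which preserves all inner products and makes both norms at most $(4H\sqrt d)^{1/2} = \gO(\sqrt{H\sqrt d})$ (the stated constant $\sqrt{2H\sqrt d}$ follows up to absolute slack in the crude $\|\tilde w_h\|$ bound, which can be sharpened). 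I expect the proof to be essentially bookkeeping; the only point that needs a sentence of care is that $\psi$ is well-defined as a map on the function class rather than on parameters, which holds because any two parameters representing the same $g$ differ by a vector annihilated by every $\phi(s,a)$ with $(s,a,h)\in\gX_{\on}$, hence orthogonal to $\Phi_{\on}$, so $U^\top\tilde w_h$ (and therefore each $\langle\psi(g),\varphi(d)\rangle$) does not depend on the representative chosen.
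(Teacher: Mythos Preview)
Your proposal is correct and follows essentially the same approach as the paper: write $g=\langle \phi,\tilde w\rangle$ by Bellman completeness, use that $\phi(s,a)\in\Phi_{\on}$ on $\gX_{\on}$ to project onto a $d_{\on}$-dimensional orthonormal basis, and read off $\psi$ and $\varphi$ as coordinates of $\tilde w$ and $\E_d[\phi\mathbbm{1}]$ respectively. If anything, you are more careful than the paper---the paper's proof does not explicitly verify the norm bounds or address well-definedness of $\psi$, whereas your rescaling argument and the remark that two representatives of $g$ differ by a vector orthogonal to $\Phi_{\on}$ fill in those details cleanly.
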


\begin{proof}
    For any $g \in \gF_{h}, \gT_h \gF_{h+1}$, we can write $g(s, a) = \langle w_{g}, \phi(s, a) \rangle$. Since $\phi(s, a) = V \phi_{\on}(s, a)$ for all $(s, a)$ such that there exists $(s, a, h) \in \gX_{\on}$, where $\phi_{\on} \in \sR^{d_{\on}}$ and $V \in \sR^{d \times d_{\on}}$ is a set of orthogonal basis of the subspace spanned by $\phi(\gX_{\on})$. Thus, we can write 
    $$
        g(s, a) \mathbbm{1}_{s, a, h \in \gX_{\on}} = \langle w_g', \phi_{\on}(s, a) \rangle \mathbbm{1}_{s, a, h \in \gX_{\on}}, \text{ where } w'_g \in \sR^{d_{\on}}.
    $$
    Therefore, we have 
    $$
        \E_{(s, a) \sim d}[g(s, a) \mathbbm{1}_{s, a, h \in \gX_{\on}}] = \langle w'_g, \E_{s, a \sim d} [ \mathbbm{1}_{s, a, h \in \gX_{\on}}  \phi_{\on}(s, a)]\rangle.
    $$
\end{proof}

We then use this lemma to bound the distributional Bellman error as follows.

The following proof is a minor modification from the proof of Proposition 11 of \cite{jin2020provably}.
Assume that $\dim_{\DE}((\gF_h - \gT_h \gF_{h+1})\mathbbm{1}_{(\cdot, h) \in \gX_{\on}}, \sD_h, \epsilon) \geq m$. Then let $d_1, \dots, d_m \in \sD_h$ be an $\epsilon$-independent sequence w.r.t. $(\gF_{h} - \gT_h \gF_{h+1})\mathbbm{1}_{(\cdot, h) \in \gX_{\on}}$. By Definition \ref{defn:DE_dim}, there exists $g_1, \dots g_m \in \gF_h - \gT_h \gF_{h+1}$ such that for all $i \in [m]$, $\sqrt{ \sum_{t = 1}^{i-1}(\E_{d_t}[g_i \mathbbm{1}_{(\cdot, h) \in \gX_{\on}}])^2} \leq \epsilon$ and $|\E_{d_i}[g_i \mathbbm{1}_{(\cdot, h) \in \gX_{\on}}]| \geq \epsilon$. By Lemma \ref{lem:low_bellman_rank}, this is equivalent to: for all $i \in [m]$,
$$
\sqrt{ \sum_{t = 1}^{i-1}(\langle \psi(g_i), \varphi(d_t) \rangle)^2} \leq \epsilon \quad \text{ and } \quad |\langle \psi(g_i), \varphi(d_i) \rangle| \geq \epsilon.
$$

For notational simplicity, define $\vx_i = \psi(g_i)$ and $\vz_i = \varphi(d_i)$ and $\mV_i = \sum_{t = 1}^{i-1} \vz_t \vz_t^{\top} + \frac{\epsilon^2}{{2H\sqrt{d}}} \mI$. The previous argument implies that for all $i \in [m]$,
$$
    \|\vx_i\|_{\mV_i} \leq \sqrt{2}\epsilon \text{ and } \|\vx_i\|_{\mV_i} \cdot \|\vz_i\|_{\mV_i^{-1}} \geq \epsilon.
$$
Therefore, we have $\|\vz_i\|_{\mV_i^{-1}} \geq 1/2$. By the matrix determinant lemma, 
\begin{align*}
    \det\left[\mV_m\right]
    &=\det\left[\mV_{m-1}\right]\left(1+\left\|\vz_m\right\|_{\mV_m^{-1}}^2\right) \\
    &\geq \frac{3}{2} \det\left[\mV_{m-1}\right] \\
    &\geq \ldots \\
    &\geq \det\left[\frac{\epsilon^2}{2H\sqrt{d}} \cdot \mI\right]\left(\frac{3}{2}\right)^{m-1}\\
    &=\left(\frac{\epsilon^2}{2H\sqrt{d}}\right)^{d_{\on}}\left(\frac{3}{2}\right)^{m-1}.
\end{align*}
    
On the other hand, 
$$
    \det[\mV_{m}] \leq \left(\operatorname{trace}(\mV_m)/d_{\on} \right)^{d_{\on}} \leq \left( \frac{2H\sqrt{d}(m-1)}{d_{\on}} + \frac{\epsilon^2}{2H\sqrt{d}} \right)^{d_{\on}}.
$$
Therefore, we obtain
$$
    \left(\frac{3}{2}\right)^{m-1} \leq\left(\frac{4H^2d(m-1)}{d_{\on} \epsilon^2}+1\right)^{d_{\on}}.
$$
Taking logarithm on both sides, we have
$$
    m \leq 4 \left[ 1+ d_{\on} \log\left(\frac{2H^2d(m-1)}{d_{\on}\epsilon^2}\right) + 1 \right],
$$
which implies that 
$$
    m \lesssim 1 + d_{\on}\log(H^2 / \epsilon^2 + 1).
$$

Combined with Lemma \ref{lem:general_SEC} and choosing $\epsilon = 1/\sqrt{T}$, we have
$$
    c_{\on}(\gF, \gX_{\on}, T) \lesssim d_{\on} \log(H T) \log(T).
$$

Finally, note that each $w_h \in \R^d$ is bounded in norm by $||w_h||_2\leq 2H\sqrt{d}$ by Lemma B.1 of \cite{jin2020provably}. We then find that $\gN_{\gF}(\rho) \leq \left(\frac{6H}{\rho}\right)^{dH}$, so $\log \gN_{\gF}(1/N) \leq dH\log(6NH\sqrt{d}).$

    % We further bound the SEC measure of the online partition.
    % \begin{align*}
    %         & c_{\on}(\gF, \gX_{\on}, T) \\
    %     \leq& \max_{h} \sup_{w_{(1)}, \dots, w_{(T)}: \|w_{(t)}\| \leq 2H} \sup_{\pi^{(1)}, \dots, \pi^{(T)}} \sum_{t = 1}^{T}\frac{ w_{(t)}^\top \E_{d_h^{\pi^{(t)}}}\left[ \gP_{\on} \phi(s, a) (\gP_{\on}\phi(s, a))^\top \right] w_{(t)}}{ 1 \vee \sum_{i = 1}^{t-1}w_{(t)}^\top \E_{d_{h}^{\pi^{(i)}}}\left[ \gP_{\on} \phi(s, a) (\gP_{\on}\phi(s, a))^\top \right] w_{(t)}}\\
    %     \leq& 4H^2\max_{h} \sup_{\pi^{(1)}, \dots, \pi^{(T)}} \sum_{t = 1}^{T}\E_{d_h^{\pi^{(t)}}}\left[\gP_{\on} \phi(s, a) \frac{1}{ 4H^2 \vee \sum_{i = 1}^{t-1} \E_{d_{h}^{\pi^{(i)}}}\left[ \gP_{\on} \phi(s, a) (\gP_{\on}\phi(s, a))^\top \right]} (\gP_{\on}\phi(s, a))^\top \right]\\
    %     \leq& 4H^2 \log\left(\det\left(\sum_{t = 1}^T\E_{d_{h}^{\pi^{(t)}}}\left[ \gP_{\on} \phi(s, a) (\gP_{\on}\phi(s, a))^\top \right]\right)\right)\\
    %     \leq& 4H^2 d_{\on} \log(T).
    % \end{align*}
\end{proof}

\newpage
\subsection{Block MDP Case}
\label{app:block_case}

{\bf Proposition \ref{prop:BMDP_bound}.}{\it\quad 
    $c_{\off}(\gF, \gX_{\off}) \leq \sup_{\pi }\sup_{(u, a, h) \in \bar{\gX}_{\off}} {\bar{d}_h^{\pi}(u, a)}/{\bar{\mu}_h^{\pi}(u, a)}$ and $c_{\on}(\gF, \gX_{\on}, T) = \gO(\max_{h}|\bar{\gX}_{\on, h}| \log(N_{\on}))$ in a BMDP with Bellman-complete $\gF$. With probability $1-\delta$,
    $$
    \Reg(N_{\on})
    = \tilde{O}\left(\inf_{\gX_{\on}, \gX_{\off}} \left(\sqrt{\beta H^4\frac{N_{\on}^2}{N_{\off}} \cdot \sup_{\pi }\sup_{(u, a, h) \in \bar{\gX}_{\off}} \frac{\bar{d}_h^{\pi}(u, a)}{\bar{\mu}_h^{\pi}(u, a)}}  + \sqrt{\max_{h}|\bar{\gX}_{\on, h}|\beta H^4 N_{\on} \log(N_{\on})}\right)\right),
    $$
    where 
    $\beta = c_1\left(\log \left[N H \mathcal{N}_{\mathcal{F}}(1/N) / \delta\right]\right)$ for some constant $c_1$ with $N = N_{\on} + N_{\off}$.
}
\begin{proof}
    The offline partition can be upper bounded by
    \begin{align*}
        c_{\off}(\gF, \gX_{\off}) 
        & = \max_h \sup_{\pi}  \sup_{f\in\gF_{\off}} \frac{\E_{d_h^{\pi}}[(g)^2(s, a)]}{\E_{\mu_h}[(g)^2(s, a)]} \\
        & = \max_h \sup_{\pi} \sup_{f\in\gF_{\off}} \frac{\E_{(u, a) \sim \bar{d}_h^{\pi}}[(g)^2(f^*(u), a)]}{\E_{(u, a) \sim \bar{\mu}_h}[(g)^2(f^*(u), a)]}\\
        & = \max_h \sup_{\pi} \sup_{f\in\gF} \frac{\E_{(u, a) \sim \bar{d}_h^{\pi}}[(g)^2(f^*(u), a) \mathbbm{1}_{u, a, h \in \bar{\gX}_{\off}}]}{\E_{(u, a) \sim \bar{\mu}_h}[(g)^2(f^*(u), a)\mathbbm{1}_{u, a, h \in \bar{\gX}_{\off}}]} \\
        & \leq \sup_{\pi} \sup_{u, a, h \in \bar{\gX}_{\off}}\frac{\bar{d}_h^{\pi}(u, a)}{\bar{\mu}_h(u, a)}.
    \end{align*}
    For the online partition, we have 
    \begin{align*}
        c_{\on}(\gF, \gX_{\on}, T) &\lesssim \log(T) \max_{h} \inf_{\mu_h \in \Delta(\gS \times \gA)} \sup_{\pi} \frac{\E_{(s, a) \sim d_h^{\pi}}[\mathbbm{1}_{(s, a, h) \in \gX_{\on}}]}{\E_{(s, a) \sim \mu_h^{\pi}}[\mathbbm{1}_{(s, a, h) \in \gX_{\on}}]}\\
        &\leq \log(T) \max_{h} \inf_{\bar{\mu}_h \in \Delta(\gU \times \gA)} \sup_{\pi} \frac{\E_{(u, a) \sim \bar{d}_h^{\pi}}[\mathbbm{1}_{(u, a, h) \in \gX_{\on}}]}{\E_{(u, a) \sim \bar{\mu}_h^{\pi}}[\mathbbm{1}_{(u, a, h) \in \gX_{\on}}]}\\
        &\leq \log(T) \max_h |\bar{\gX}_{\on, h}|
    \end{align*}
\end{proof}

\newpage
\section{General Recipe}
\label{app:general_case}

{\bf Proposition \ref{prop:general_recipe}.}{\quad \it
    % \label{prop:general_recipe}
    Let $\gL$ be a general online learning algorithm that satisfies the following conditions:
    \begin{enumerate}
        \item $\gL$ admits the regret decomposition $\Reg_{\gL}(T) \leq \sum_{t = 1}^T \sum_{h = 1}^H \E_{(s, a) \sim d^{(t)}_h}[\delta_h^{t}(s, a)]$ for some collection of random functions\footnote{This is often the Bellman error in the case of MDPs.} $(\delta_h^{t})_{h = 1}^H$ with each $\delta_h^t$ a mapping from $\gX \mapsto \sR$; 
        \item it holds with probability at least $1-\delta$ that
        $$
            \sum_{t = 1}^T \sum_{h = 1}^H \left(N_{\off} \E_{(s, a) \sim \mu_h}[\delta_h^t(s, a)^2] + \sum_{i = 1}^{t-1} \E_{(s, a) \sim d^{i}_h}[ \delta_h^{t}(s, a)^2]\right) \leq \beta(\delta, H);
        $$
        \item there exists a function $c_{\on}: \gP(\gX) \times \sN$ such that for any $\gX' \subset \gX$, it holds with a probability at least $1-\delta$
        $$
            \sum_{t = 1}^T \sum_{h = 1}^H \E_{(s, a) \sim d^{(t)}_h}[\delta_h^t(s, a) \mathbbm{1}{(x, a, h) \in \gX'}] = \gO(c_{\on}(\gX', T) \beta(\delta, H) H^\gamma T)^{\xi},
        $$ for some $\xi \in (0, 1)$, $\gamma \in \mathbb{Z}_{\geq 0}$, and where $\beta:(0, 1) \mapsto \sR$ is some measure of complexity of the algorithm and its dependence on the probability of failure $\delta$;
        % \ziping{Could we simply specify $c_{\on}(\gX')?$}
        
        \item for any $\gX' \subset \gX$, there exists a measure of coverage $$
            c_{\off}(\gX') \coloneqq \sup_{h \in [H]} \sup_{\pi} \frac{\E_{d_h^{\pi}}[\delta_h^t(s, a) \mathbbm{1}(s,a,h \in \gX')]}{\E_{\mu_h}[\delta_h^t(s, a)\mathbbm{1}(s,a,h \in \gX')]} \footnote{We set $0/0$ as 0.}.
        $$
    \end{enumerate}
    Then, the algorithm $\gL$ satisfies the following regret bound:
    $$
        \Reg_{\gL}(T) = \gO\left(\inf_{\gX_{\on}, \gX_{\off}}(c_{\on}(\gX_{\on}, T) \beta(\delta, H) H^\gamma T)^{\xi}  + H \sqrt{\beta(\delta, H) \cdot c_{\off}(\gX_{\off})\cdot \frac{N_{\on}^2}{N_{\off}}}\right).
    $$
}

\begin{proof}
    We first use the regret decomposition in Condition 1 to obtain
    \begin{align*}
        \Reg_{\gL} 
        &\leq \sum_{t = 1}^T \sum_{h = 1}^H \E_{(s, a) \sim d^{(t)}_h}[\delta_h^{t}(s, a)]\\
        &= \sum_{t = 1}^T \sum_{h = 1}^H \E_{(s, a) \sim d^{(t)}_h}[\delta_h^{t}(s, a) \mathbbm{1}(s,a,h \in \gX_{\on})] + \sum_{t = 1}^T \sum_{h = 1}^H \E_{(s, a) \sim d^{(t)}_h}[\delta_h^{t}(s, a) \mathbbm{1}(s,a,h \in \gX_{\off})].
    \end{align*}
    The regret bound on the online partition follows from condition 2, as we have $\sum_{t = 1}^T \sum_{h = 1}^H \E_{(s, a) \sim d^{(t)}_h}[\delta_h^t(s, a) \mathbbm{1}{(x, a, h) \in \gX_{\on}}] = \gO(c_{\on}(\gX_{\on}, T) \beta(\delta, H) H^\gamma T)^{\xi}$.
    
    We then upper bound the regret of the offline term. We denote $\delta_h^{t}(s, a) \mathbbm{1}(s,a,h \in \gX_{\off})$ by $\tilde{\delta}_h^{t}(s, a)$. To proceed, we have
    \begin{align*}
        &   \sum_{t = 1}^T \sum_{h = 1}^H \E_{(s, a) \sim d^{(t)}_h}[\tilde{\delta}_h^{t}(s, a)]\\
        =&  \sum_{t = 1}^T \sum_{h = 1}^H \E_{(s, a) \sim d^{(t)}_h}\left[\tilde{\delta}_h^{t}(s, a) \left(\frac{N_{\off}\E_{\mu_{h}}[\tilde{\delta}_h^{t}(s, a)^2] +  \sum_{i=1}^{t-1}\E_{d_h^{(i)}}[\tilde{\delta}_h^{t}(s, a)^2]}{N_{\off}\E_{\mu_{h}}[\tilde{\delta}_h^{t}(s, a)^2] + \sum_{i=1}^{t-1}\E_{d_h^{(i)}}[\tilde{\delta}_h^{t}(s, a)^2]}\right)^{1 / 2}\right] \\
        \leq& \sqrt{\sum_{t = 1}^T \sum_{h = 1}^H \E_{(s, a) \sim d^{(t)}_h} \frac{\tilde{\delta}_h^{t}(s, a)^2}{N_{\off}\E_{\mu_{h}}[\tilde{\delta}_h^{t}(s, a)^2] +  \sum_{i=1}^{t-1}\E_{d_h^{(i)}}[\tilde{\delta}_h^{t}(s, a)^2]}} \\
        &\hspace{5mm}\sqrt{\sum_{t = 1}^T \sum_{h = 1}^H \left(N_{\off}\E_{\mu_{h}}[\tilde{\delta}_h^{t}(s, a)^2] +  \sum_{i=1}^{t-1}\E_{d_h^{(i)}}[\tilde{\delta}_h^{t}(s, a)^2]\right)} \\
        \leq& \sqrt{ \frac{\sum_{t = 1}^T \sum_{h = 1}^H \E_{(s, a) \sim d^{(t)}_h}\tilde{\delta}_h^{t}(s, a)^2}{N_{\off}\E_{\mu_{h}}[\tilde{\delta}_h^{t}(s, a)^2]}}  \sqrt{\sum_{t = 1}^T \sum_{h = 1}^H \left(N_{\off}\E_{\mu_{h}}[\tilde{\delta}_h^{t}(s, a)^2] +  \sum_{i=1}^{t-1}\E_{d_h^{(i)}}[\tilde{\delta}_h^{t}(s, a)^2]\right)} \\
        \leq& \sqrt{TH c_{\off}(\gT\gF, \bar d)/N_{\off}} \sqrt{\sum_{t = 1}^T \sum_{h = 1}^H \left(N_{\off}\E_{\mu_{h}}[\tilde{\delta}_h^{t}(s, a)^2] +  \sum_{i=1}^{t-1}\E_{d_h^{(i)}}[\tilde{\delta}_h^{t}(s, a)^2]\right)} \\
        \leq& H\sqrt{N_{\on} c_{\off}(\gX_{\off})N_{\on}/N_{\off}\beta(\gamma, \gX_{\off}) }
    \end{align*}
\end{proof}

\newpage
\section{Technical and Miscellaneous Lemmas}

\subsection{Lemma \ref{lem:ellipsoid}}
\begin{lem}[Lemma D.2 in \cite{jin2020provably}]
\label{lem:ellipsoid}
    Let $\left\{\phi_t\right\}_{t \geq 0}$ be a bounded sequence in $\mathbb{R}^d$ satisfying $\sup _{t>0}\left\|\phi_t\right\| \leq 1$. Let $\Lambda_0 \in \mathbb{R}^{d \times d}$ be a positive definite matrix. For any $t \geq 0$, we define $\Lambda_t=\Lambda_0+\sum_{j=1}^t {\phi}_j \phi_j^{\top}$. Then, if the smallest eigenvalue of $\Lambda_0$ satisfies $\lambda_{\text{min}}(\Lambda_0) \geq 1$, we have
    $$
        \log \left[\frac{\det\left(\Lambda_t\right)}{\det\left(\Lambda_0\right)}\right] \leq \sum_{j=1}^t \phi_j^{\top} \Lambda_{j-1}^{-1} \boldsymbol{\phi}_j \leq 2 \log \left[\frac{\det\left(\Lambda_t\right)}{\det\left(\Lambda_0\right)}\right].
    $$
\end{lem}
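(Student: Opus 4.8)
The statement is verbatim Lemma~D.2 of \citet{jin2020provably} (the standard ``elliptical potential'' estimate), so the plan is to reproduce that short argument. The key idea is to convert the determinant ratio into a telescoping product via the matrix determinant lemma, and then sandwich the resulting sum of logarithms using two elementary scalar inequalities.

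First I would observe that each $\Lambda_{j-1}$ is invertible, since $\Lambda_{j-1} \succeq \Lambda_0 \succ 0$, and apply the matrix determinant lemma to the rank-one update $\Lambda_j = \Lambda_{j-1} + \phi_j \phi_j^\top$, which gives $\det(\Lambda_j) = \det(\Lambda_{j-1})\bigl(1 + \phi_j^\top \Lambda_{j-1}^{-1}\phi_j\bigr)$. Telescoping over $j = 1,\dots,t$ and taking logarithms yields
$$\log\frac{\det(\Lambda_t)}{\det(\Lambda_0)} = \sum_{j=1}^t \log\bigl(1 + x_j\bigr), \qquad x_j := \phi_j^\top \Lambda_{j-1}^{-1}\phi_j \ge 0.$$
The left inequality in the lemma then follows immediately from the termwise bound $\log(1+x) \le x$, valid for all $x > -1$.

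For the right inequality I would first invoke the hypothesis $\lambda_{\min}(\Lambda_0) \ge 1$: because $\Lambda_{j-1} \succeq \Lambda_0 \succeq I$, we have $\Lambda_{j-1}^{-1} \preceq I$, hence $x_j \le \|\phi_j\|^2 \le 1$ for every $j$. On the interval $[0,1]$ the elementary inequality $x \le 2\log(1+x)$ holds: by concavity the graph of $\log(1+\cdot)$ lies above the chord joining $(0,0)$ and $(1,\log 2)$, and $\log 2 \ge \tfrac12$. Applying this termwise and summing gives $\sum_{j=1}^t x_j \le 2\sum_{j=1}^t \log(1+x_j) = 2\log\frac{\det(\Lambda_t)}{\det(\Lambda_0)}$, which is exactly the claimed upper bound.

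There is no real obstacle here; the proof is routine. The only step that deserves attention --- and the sole place the normalization $\lambda_{\min}(\Lambda_0)\ge 1$ is actually used --- is the verification that $x_j \le 1$, which is precisely what upgrades the crude one-sided bound $\log(1+x)\le x$ to the two-sided estimate in the statement; without that normalization the factor-of-two inequality could fail.
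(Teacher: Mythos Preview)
Your proof is correct and is exactly the standard argument for this elliptical potential lemma. The paper does not give its own proof of this statement --- it is simply quoted as Lemma~D.2 of \citet{jin2020provably} --- so there is nothing to compare against; your reconstruction matches the classical derivation.
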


\subsection{Proof of Lemma \ref{lem:good_event}}

\label{lem:good_event_proof}
{\bf Lemma \ref{lem:good_event}. }{
\it
With a probability at least $1-\delta$, for all $t \in [N_{\on}]$, we have that for all $h=1,...,H$
\begin{equation*}
    \text{(i) } Q^*_h \in \mathcal{F}^{(t)}_h, \text{ (ii) } \E_{\tilde{d}_h^{(t)}}[(\delta_h^{(t)})^2] \leq O(H^2\beta),
\end{equation*}
by choosing $\beta = c_1\left(\log \left[N H \mathcal{N}_{\mathcal{F}}(\rho) / \delta\right]+ N \rho\right)$ for some constant $c_1$. 
}
\begin{proof}
    Lemma 44 in \cite{jin2021bellmaneluder} showed that with high probability: (i) any function $f^{(\tau)}$ in the confidence set has low Bellman-error over the collected Datasets $\gD_1^{(\tau)}, \dots, \gD_{H}^{(\tau)}$ as well as the distributions from which $\gD_{1}^{(\tau)}, \dots, \gD_{H}^{(\tau)}$ are sampled; (ii) the optimal value function is inside the confidence set. We use this to our setting as follows, with the intuition being that we pre-append a sequence of functions generated from the offline dataset from samples $1,...,N_{\off}$ to the $N_{\on}$ sequence. 
    
    That is, consider $N = N_{\off} + N_{\on}$, and consider a set of functions $\bar f^{\tau}, \tau = 1, \dots, N$, which we define as follows. For each $\tau \in [N_{\off}]$, define $\bar f^{\tau}$ to be any arbitrary function in the confidence set of functions constructed by the first $\tau$ episodes of the offline dataset (we can set an arbitrary order for the episodes in the offline dataset). For each $\tau = N_{\off}+1, \dots, N$, define $\bar f^{\tau} := f^{(\tau - N_{\off})} \in \gF^{(t)}$. As Lemma 44 in \cite{jin2021bellmaneluder} shows that (i) and (ii) hold for all $\tau \in [N]$, they must also hold for all $\tau = N_{\off}+1, \dots, N$.
\end{proof}

\subsection{Lemmas on Coverage}

\begin{lem}[Restricted Per-State-Action Elliptic Potential Lemma (modified from \cite{xie2022role})]
    \label{lem:restricted-elliptic}
    Consider an arbitrary sequence of densities $d_h^{(1)},...,d_h^{(T)}$, and a partition $\gX_{\off}, \gX_{\on}$ of $\gX$. Define 
    $$\mu^*_h := \argmin_{\mu_h \in \Delta(\gS \times \gA)} \sup_{\pi} \frac{\E_{(s, a) \sim d_h^{\pi}}[\mathbbm{1}_{(s, a, h) \in \gX_{\on}}]}{\E_{(s, a) \sim \mu_h^{\pi}}[\mathbbm{1}_{(s, a, h) \in \gX_{\on}}]}, \;\; c_h(\gX_{\on}) := \inf_{\mu_h \in \Delta(\gS \times \gA)} \sup_{\pi} \frac{\E_{(s, a) \sim d_h^{\pi}}[\mathbbm{1}_{(s, a, h) \in \gX_{\on}}]}{\E_{(s, a) \sim \mu_h^{\pi}}[\mathbbm{1}_{(s, a, h) \in \gX_{\on}}]}.$$

    Observe that $d_h^{(t)}(s,a) / \mu^*_h(s,a) \leq c_h(\gX_{\on})$ for all $(s,a,h) \in \gX_{\on}$. For all $(s,a,h)\in \gX_{\on}$, we have that 
    $$\sum_{t = 1}^T\frac{d_h^{(t)}(s,a)\mathbbm{1}((s,a) \in \gX_{\on})}{\left(c_h(\gX_{\on})\cdot\mu_h^*(s,a) + \sum_{i=1}^{t-1} d_h^{(i)}(s,a)\right)\mathbbm{1}((s,a) \in \gX_{\on})} \leq \gO(\log(T))$$
\end{lem}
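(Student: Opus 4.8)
The plan is to reduce the claim to the scalar (one-dimensional) elliptic potential inequality, applied separately at each $(s,a)$. Fix $(s,a,h)$. When $(s,a,h)\notin\gX_{\on}$ every summand is of the form $0/0$, which we read as $0$, so assume $(s,a,h)\in\gX_{\on}$, in which case all the indicators equal $1$. Abbreviate $x_t:=d_h^{(t)}(s,a)\ge 0$ and $\lambda:=c_h(\gX_{\on})\,\mu_h^*(s,a)\ge 0$. The only input needed is the pointwise domination recorded in the statement, $x_t\le\lambda$ for every $t$; this holds because each $d_h^{(t)}$ is the occupancy measure $d_h^{\pi^{(t)}}$ of some policy, and the supremum over $\pi$ defining $c_h(\gX_{\on})$ ranges over that policy as well. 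This bound plays exactly the role that $\sup_t\|\phi_t\|\le 1$ plays in the matrix elliptic potential lemma, Lemma \ref{lem:ellipsoid}.

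First I would dispatch the degenerate case: if $\lambda=0$, then $0\le x_t\le\lambda$ forces $x_t=0$ for all $t$, so the whole sum is $0$ and there is nothing to prove; hence assume $\lambda>0$. Set $Z_t:=\lambda+\sum_{i=1}^{t-1}x_i$, so that $Z_1=\lambda$, $Z_{t+1}=Z_t+x_t$, and $Z_t\ge\lambda>0$. Because $x_t\le\lambda\le Z_t$, each ratio $x_t/Z_t$ lies in $[0,1]$, and so I can invoke the elementary inequality $y\le 2\ln(1+y)$ valid for $y\in[0,1]$ (the function $g(y)=2\ln(1+y)-y$ has $g(0)=0$ and $g'(y)=(1-y)/(1+y)\ge 0$ on $[0,1]$). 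This gives
$$\frac{x_t}{Z_t}\le 2\ln\!\Big(1+\frac{x_t}{Z_t}\Big)=2\big(\ln Z_{t+1}-\ln Z_t\big).$$

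Summing over $t=1,\dots,T$ telescopes the right-hand side, and one more application of $x_i\le\lambda$ controls the resulting logarithm:
$$\sum_{t=1}^T\frac{x_t}{Z_t}\le 2\ln\frac{Z_{T+1}}{\lambda}=2\ln\!\Big(1+\frac{1}{\lambda}\sum_{i=1}^T x_i\Big)\le 2\ln(1+T)=\gO(\log T),$$
which is precisely the asserted bound, and since $(s,a,h)\in\gX_{\on}$ was arbitrary we are done. The argument is entirely routine; the only points that deserve a word of care are the degenerate case $\mu_h^*(s,a)=0$ (absorbed by the $0/0=0$ convention used throughout) and the justification of the pointwise inequality $d_h^{(t)}(s,a)\le c_h(\gX_{\on})\,\mu_h^*(s,a)$, which is immediate once one observes that $d_h^{(t)}$ is realized by a policy in the class over which $c_h(\gX_{\on})$ is optimized. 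I do not expect any genuine obstacle.
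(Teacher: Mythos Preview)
Your proposal is correct and follows essentially the same argument as the paper: both fix $(s,a,h)\in\gX_{\on}$, use the pointwise bound $d_h^{(t)}(s,a)\le c_h(\gX_{\on})\mu_h^*(s,a)$ to ensure each ratio lies in $[0,1]$, apply $u\le 2\log(1+u)$ on $[0,1]$, telescope the resulting logarithms, and bound the final term by $2\log(1+T)$. Your treatment is in fact slightly more careful in isolating the degenerate case $\lambda=0$ and in justifying the elementary inequality, but there is no substantive difference in method.
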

\begin{proof}
    The lemma, and the proof, is slightly modified from Lemma 4 of \cite{xie2022role} to account for our restriction to the online partition, as well as the fact that the restricted distributions are no longer distributions. Observe that $d_h^{(t)} \leq c_h(\gX_{\on})\cdot\mu_h^*$ by definition, so the quantity inside the sum is within $[0,1]$. Using the fact for any $u \in[0,1], u \leq 2 \log (1+u)$, we have
    \begin{align*}
        &\sum_{t = 1}^T\frac{d_h^{(t)}(s,a)\mathbbm{1}((s,a) \in \gX_{\on})}{\left(c_h(\gX_{\on})\cdot\mu_h^*(s,a) + \sum_{i=1}^{t-1} d_h^{(i)}(s,a)\right)\mathbbm{1}((s,a) \in \gX_{\on})} \\
        &\leq 2\sum_{t = 1}^T\log\left(1+\frac{d_h^{(t)}(s,a)\mathbbm{1}((s,a) \in \gX_{\on})}{\left(c_h(\gX_{\on})\cdot\mu_h^*(s,a) + \sum_{i=1}^{t-1} d_h^{(i)}(s,a)\right)\mathbbm{1}((s,a) \in \gX_{\on})}\right) \\
        &\leq 2\sum_{t = 1}^T\log\left(\frac{\left(c_h(\gX_{\on})\cdot\mu_h^*(s,a) + \sum_{i=1}^{t} d_h^{(i)}(s,a)\right)\mathbbm{1}((s,a) \in \gX_{\on})}{\left(c_h(\gX_{\on})\cdot\mu_h^*(s,a) + \sum_{i=1}^{t-1} d_h^{(i)}(s,a)\right)\mathbbm{1}((s,a) \in \gX_{\on})}\right) \\
        &= 2\log\left(\prod_{t=1}^T\frac{\left(c_h(\gX_{\on})\cdot\mu_h^*(s,a) + \sum_{i=1}^{t} d_h^{(i)}(s,a)\right)\mathbbm{1}((s,a) \in \gX_{\on})}{\left(c_h(\gX_{\on})\cdot\mu_h^*(s,a) + \sum_{i=1}^{t-1} d_h^{(i)}(s,a)\right)\mathbbm{1}((s,a) \in \gX_{\on})}\right) \\
        &= 2\log\left(\frac{\left(c_h(\gX_{\on})\cdot\mu_h^*(s,a) + \sum_{t=1}^T d_h^{(i)}(s,a)\right)\mathbbm{1}((s,a) \in \gX_{\on})}{\left(c_h(\gX_{\on})\cdot\mu_h^*(s,a)\right)\mathbbm{1}((s,a) \in \gX_{\on})}\right) \\
        &\leq 2\log\left(\frac{(1+T)\left(c_h(\gX_{\on})\cdot\mu_h^*(s,a)\right)\mathbbm{1}((s,a) \in \gX_{\on})}{\left(c_h(\gX_{\on})\cdot\mu_h^*(s,a)\right)\mathbbm{1}((s,a) \in \gX_{\on})}\right) \\
        &\leq 2\log(1+T),
    \end{align*}
    where the last line follows from the observation that $d_h^{(t)}(s,a) / \mu^*_h(s,a) \leq c_h(\gX_{\on})$ for all $(s,a,h) \in \gX_{\on}$.
\end{proof}

\newpage
\section{Additional Figures}
\label{app:experiments}

\begin{figure}[H]
    \centering
    \includegraphics[scale=0.5]{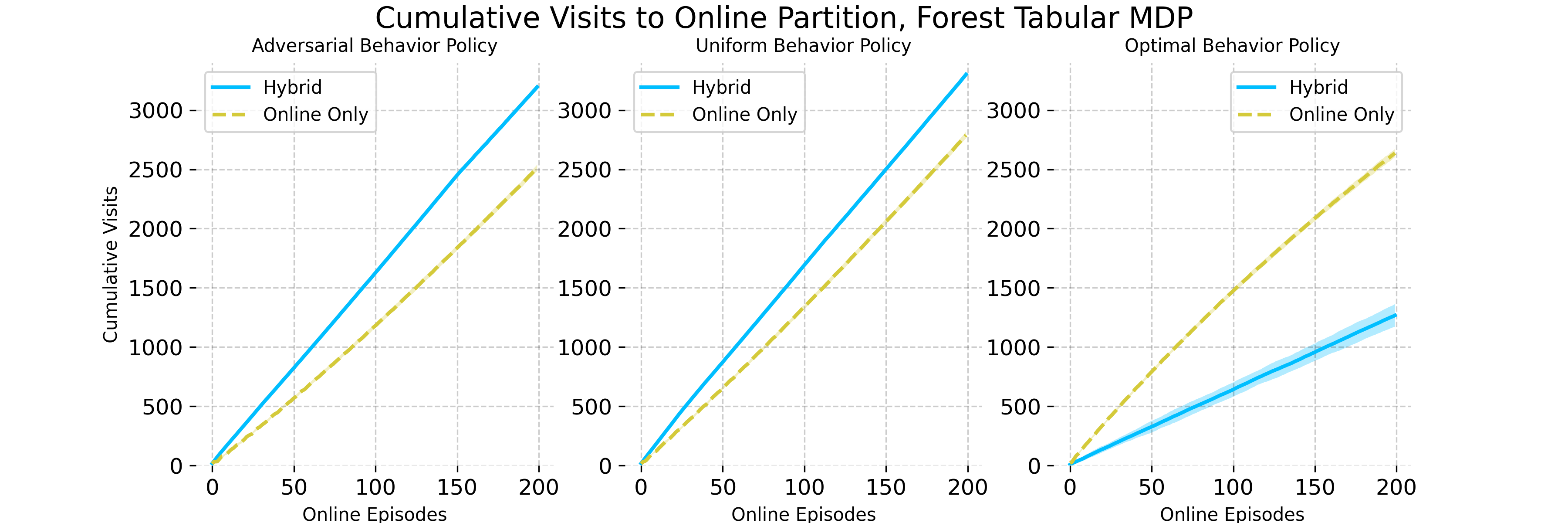}
    \vspace{0.1in}
    \includegraphics[scale=0.5]{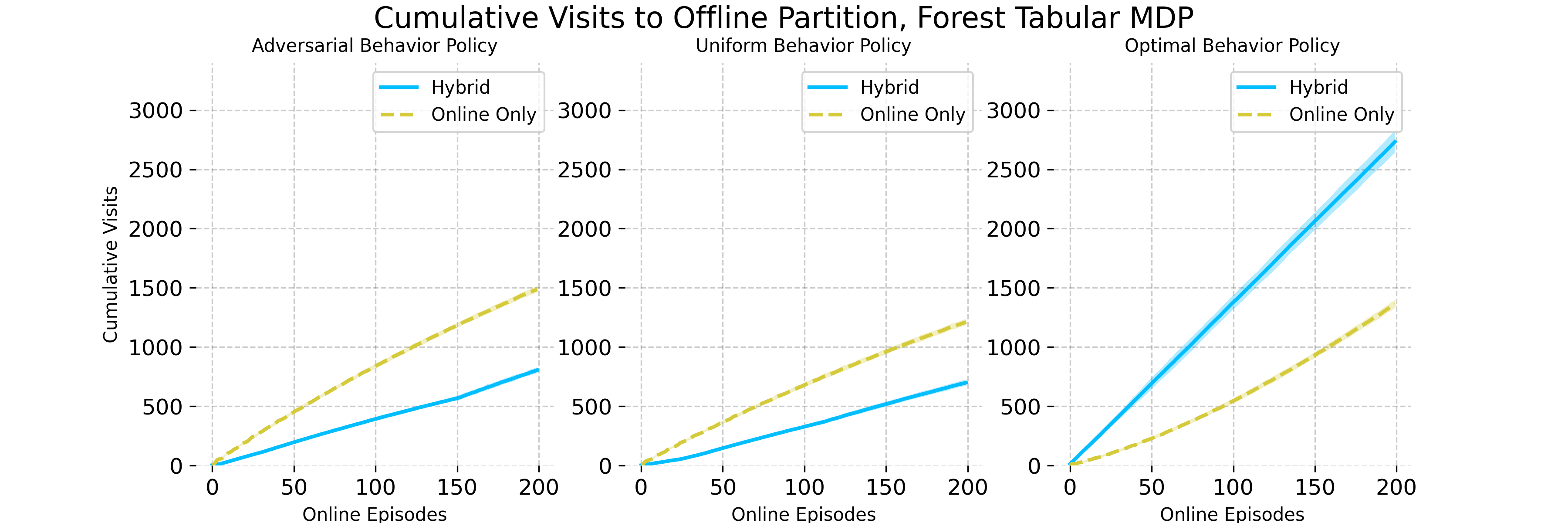}
    \caption{Cumulative visits to the offline and online partitions over the $200$ online episodes of horizon $20$. When the behavior policy is poor or middling, the hybrid algorithm visits the online partition more and the offline partition less than the online-only algorithm does. When the behavior policy is optimal, the converse occurs, as the model parameters in UCBVI \citep{azar2017minimax} are warm-started by estimating them from the offline dataset, enabling the hybrid algorithm to learn that the offline partition contains the good state-action pairs. Solid lines indicate the mean over $30$ trials, and the shaded area denotes a confidence interval of $1.96$ sample standard deviations.}
    \label{fig:tabular-visits}
\end{figure}

\newpage 

\begin{figure}[H]
    \centering
    \includegraphics[scale=0.5]{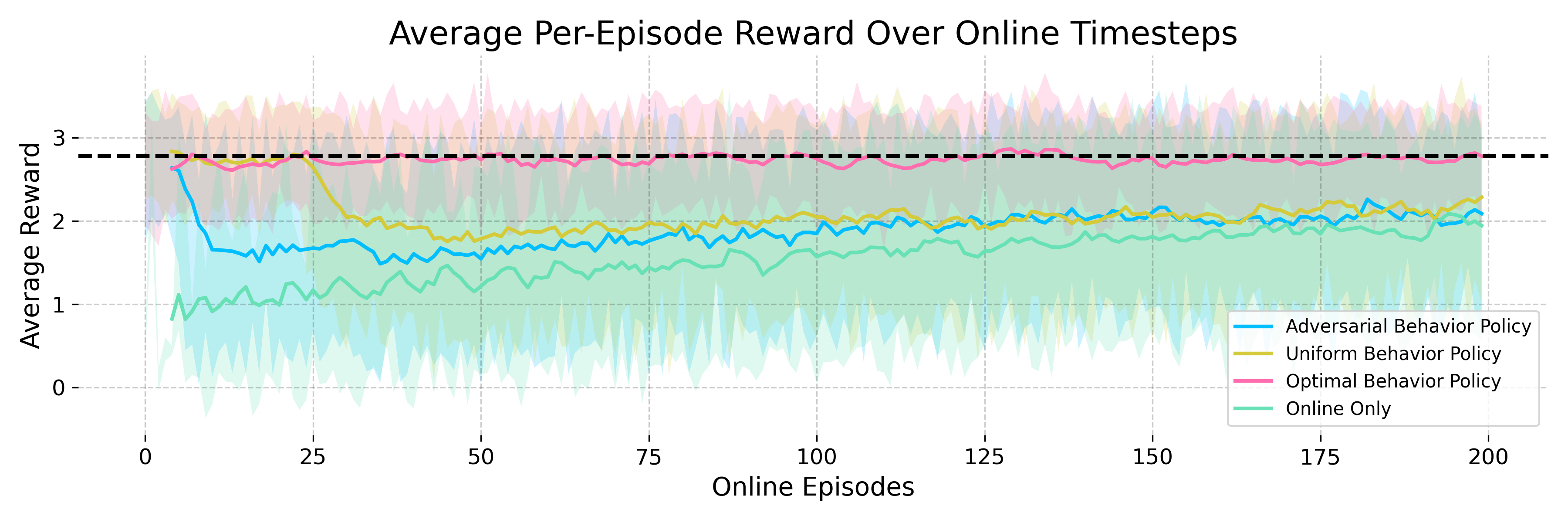}
    \caption{Average reward over $200$ episodes from running UCBVI \citep{azar2017minimax} both in its original form and initialized with an offline dataset. When the behavior policy is optimal, the hybrid algorithm learns the optimal policy quickly. When it is not, we still gain an advantage over online-only learning, even when the behavior policy is adversarial, even though in these cases $200$ episodes are not sufficient to learn the optimal policy. Incidentally, the hybrid algorithm with poor behavior policies has a high reward at the start, but faces a drop in performance as it explores other states and actions due to the very large exploration bonus we chose to encourage exploration. Results averaged over $30$ trials, with $1$ standard deviation-wide shaded areas. }
    \label{fig:reward-tabular}
\end{figure}

\begin{figure}[H]
    \centering
    \includegraphics[scale=0.5]{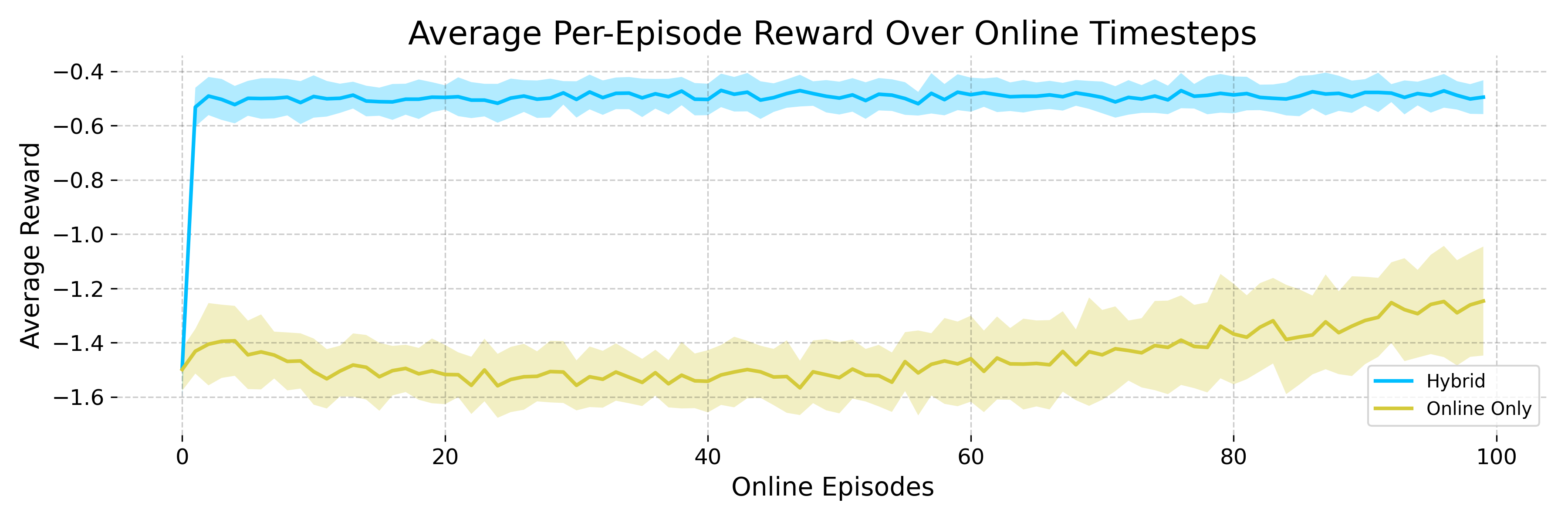}
    \caption{Average reward of each episode when running LSVI-UCB \citep{jin2020provably} in its original form and initialized with an offline dataset. Results averaged over $30$ trials, with $1$ standard deviation-wide shaded areas. The hybrid version approaches the optimal weights almost instantaneously, while the online-only version takes many more episodes to do the same.}
    \label{fig:reward-linear}
\end{figure}

\end{document}